\pgfplotsset{compat=1.18}
\newcolumntype{a}{>{\columncolor{Gray}}c}
\newcommand{\ie}{\emph{i.e.}}
\newcommand{\eg}{\emph{e.g.}}
\renewcommand{\captionlabelfont}{\scriptsize}
\definecolor{deepred}{HTML}{940000}
\renewcommand \thepart{}
\renewcommand \partname{}
\definecolor{Gray}{gray}{0.94}
\definecolor{darkspringgreen}{rgb}{0.09, 0.45, 0.27}
\definecolor{americanrose}{rgb}{1.0, 0.01, 0.24}
\definecolor{bestcolor}{HTML}{fdbe86}
\definecolor{secondbestcolor}{HTML}{feeddc}
\newtheorem{theorem}{Theorem}
\newtheorem{lemma}{Lemma}
\newcommand\blfootnote[1]{%
  \begingroup
  \renewcommand\thefootnote{}\footnote{#1}%
  \addtocounter{footnote}{-1}%
  \endgroup
}
\newlength\savewidth\newcommand\shline{\noalign{\global\savewidth\arrayrulewidth
  \global\arrayrulewidth 1pt}\hline\noalign{\global\arrayrulewidth\savewidth}}
\title{\fontsize{16.5pt}{\baselineskip}\selectfont 
Reparameterized LLM Training via Orthogonal Equivalence Transformation}
\author{\fontsize{8.75pt}{\baselineskip}\selectfont
  Zeju Qiu\textsuperscript{1}~~~Simon Buchholz\textsuperscript{1}~~~Tim Z. Xiao\textsuperscript{1}~~~Maximilian Dax\textsuperscript{1}~~~Bernhard Schölkopf\textsuperscript{1}~~~Weiyang Liu\textsuperscript{1,2,*}\\[0.6mm]\footnotesize
  \textsuperscript{1}Max Planck Institute for Intelligent Systems, T\"ubingen~~~~\textsuperscript{2}The Chinese University of Hong Kong
}
\begin{document}

\maketitle

\blfootnote{\textsuperscript{*}Project lead \& Corresponding author~~~~~~~~~~~~~~Project page:~\href{https://spherelab.ai/poet}{\tt spherelab.ai/poet}}

\doparttoc 
\faketableofcontents

\vspace{-8.5mm}
\begin{abstract}
\vspace{-.8mm}
While Large language models (LLMs) are driving the rapid advancement of artificial intelligence, effectively and reliably training these large models remains one of the field's most significant challenges. To address this challenge, we propose POET, a novel re\underline{P}arameterized training algorithm that uses \underline{O}rthogonal \underline{E}quivalence \underline{T}ransformation to optimize neurons. Specifically, POET reparameterizes each neuron with two learnable orthogonal matrices and a fixed random weight matrix. Because of its provable preservation of spectral properties of weight matrices, POET can stably optimize the objective function with improved generalization. We further develop efficient approximations that make POET flexible and scalable for training large-scale neural networks. Extensive experiments validate the effectiveness and scalability of POET in training LLMs.
\end{abstract}

\vspace{-2mm}
\section{Introduction}
\vspace{-1.5mm}

Recent years have witnessed the increasing popularity of large language models (LLMs) in various applications, such as mathematical reasoning~\cite{cobbe2021training} and program synthesis~\cite{austin2021program} and decision-making~\cite{yang2023foundation}. Current LLMs are typically pre-trained using enormous computational resources on massive datasets containing trillions of tokens, with each training run that can take months to complete. Given such a huge training cost, how to effectively and reliably train them poses significant challenges.

The \emph{de facto} way for training LLMs is to directly optimize weight matrices with the Adam optimizer~\cite{kingma2014adam,loshchilov2017decoupled}. While conceptually simple, this direct optimization can be computationally intensive (due to the poor scaling with model size) and requires careful hyperparameter tuning to ensure stable convergence. More importantly, its generalization can remain suboptimal even if the training loss is perfectly minimized~\cite{keskar2017improving}. To stabilize training and enhance generalization, various weight regularization methods~\cite{yoshida2017spectral,cisse2017parseval,liu2018learning,bansal2018can,liu2021learning,chen2022principle} and weight normalization techniques~\cite{liu2017deep,klambauer2017self,huang2018orthogonal,liu2018decoupled,loshchilov2024ngpt,lee2025hyperspherical} have been proposed. Most of these methods boil down to improving spectral properties of weight matrices (\ie, singular values) either explicitly or implicitly. Intuitively, the spectral norm of a weight matrix (\ie, the largest singular value) provides an upper bound on how much a matrix can amplify the input vectors, which connects to the generalization properties. In general, smaller spectral norms (\ie, better smoothness) are considered to be associated with stronger generalization, which inspires explicit spectrum control~\cite{yoshida2017spectral,miyato2018spectral,jiang2019computation,rosca2020case}. Theoretical results~\cite{bartlett2017spectrally} also suggest that weight matrices with bounded spectrum can provably guarantee generalization. Given the importance of the spectral properties of weight matrices, \emph{what prevents us from controlling them during LLM training?}

\vspace{0.1mm}
\begin{itemize}[leftmargin=*,nosep]
\setlength\itemsep{0.4em}
    \item \textbf{Inefficacy of spectrum control}: Existing spectrum control methods  constrain only the largest singular value, failing to effectively regularizing the full singular value spectrum. Moreover, there is also no guarantee for spectral norm regularization to effectively control the largest singular value.
    \item \textbf{Computational overhead}: Both spectral norm regularization~\cite{yoshida2017spectral} and spectral normalization~\cite{miyato2018spectral} require computing the largest singular value of weight matrices. Even with power iteration, this still adds a significant overhead to the training process, especially when training large neural networks. Additionally, spectral regularization does not scale efficiently with increasing model size.
\end{itemize}

\begin{wrapfigure}{r}{0.59\linewidth}
\vspace{-0.11em}
\centering
\includegraphics[width=1\linewidth]{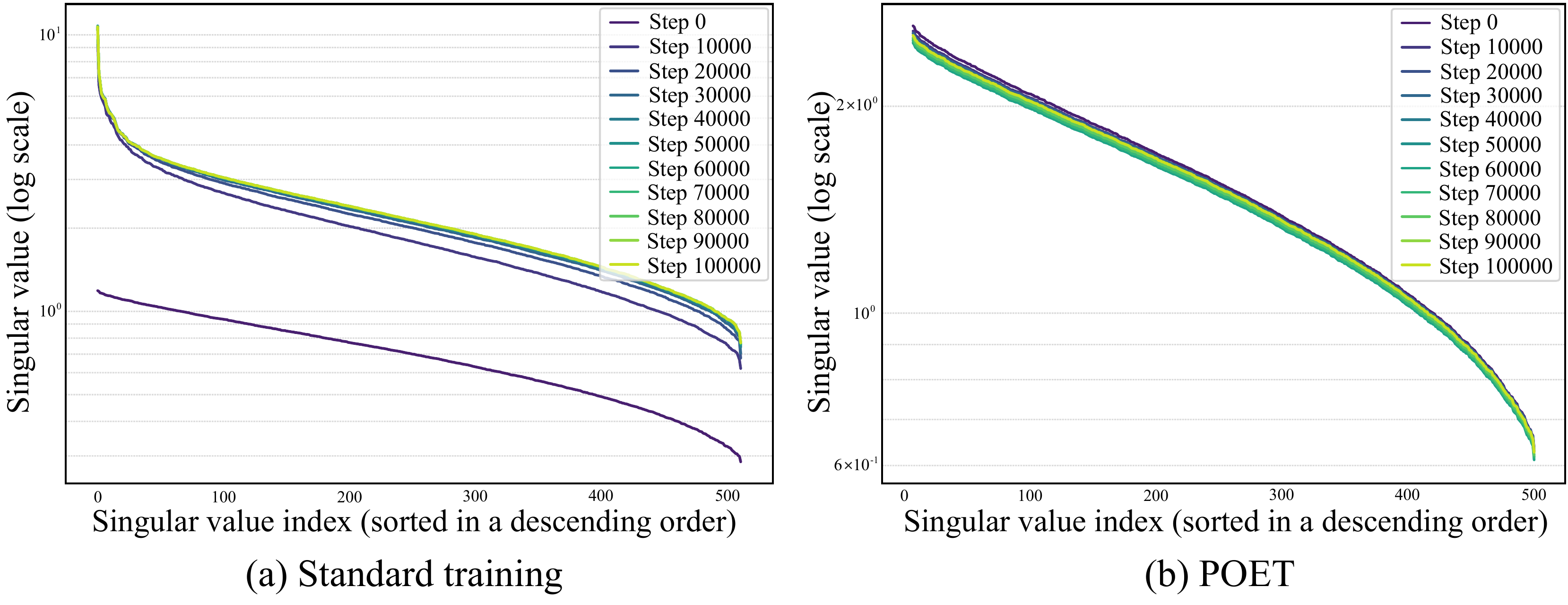}
  \vspace{-1.85em}
    \caption{\scriptsize%
    Training dynamics of singular values of the same weight matrix in a LLaMA model. Standard training on the left strictly follows the common practice for training LLMs (direct optimization with AdamW). POET on the right uses the proposed approximation for large-scale LLM training. The slight (almost negligible) singular value changes in POET are due to numerical and approximation error.
    }
    \label{fig:svcomp}
\vspace{-0.4em}
\end{wrapfigure}

To achieve effective weight spectrum control without the limitations above, we propose POET, a re\underline{P}arameterized training algorithm that uses \underline{O}rthogonal \underline{E}quivalence \underline{T}ransformation to indirectly learn weight matrices. Specifically, POET reparameterizes a weight matrix $\bm{W}\in\mathbb{R}^{m\times n}$ with $\bm{R}\bm{W}_0\bm{P}$ where $\bm{W}_0\in\mathbb{R}^{m\times n}$ is a randomly initialized weight matrix, $\bm{R}\in\mathbb{R}^{m\times m}$ and $\bm{P}\in\mathbb{R}^{n\times n}$ are two orthogonal matrices. Instead of optimizing weight matrices directly, POET keeps the randomly initialized weight matrix $\bm{W}_0$ unchanged during training and learns two orthogonal matrices $\bm{R},\bm{P}$ to transform $\bm{W}_0$. This reparameterization preserves the singular values of weights while allowing flexible optimization of the singular vectors. POET effectively addresses the above limitations:

\begin{itemize}[leftmargin=*,nosep]
\setlength\itemsep{0.4em}
    \item \textbf{Strong spectrum control}: %
    Because orthogonal transformations do not change the singular values of weight matrices, POET keeps the weight spectrum the same as the randomly initialized weight matrices (empirically validated by Figure~\ref{fig:svcomp} even with approximations). Through the initialization scheme, POET thus directly controls the singular value distribution of its weight matrices. As a result, and in contrast to standard LLM training, POET matrices avoid undesirable large singular values after training (Figure~\ref{fig:svcomp} and Appendix~\ref{app:svtd}). To further facilitate the POET algorithm, we introduce two new initialization schemes: normalized Gaussian initialization and uniform spectrum initialization, which can ensure the resulting weight matrices have bounded singular values. 
    \item \textbf{Efficient approximation}: While a naive implementation of POET can be computationally expensive, its inherent flexibility opens up opportunities for efficient and scalable training. To address the key challenge of optimizing large orthogonal matrices, we introduce two levels of approximations:
    \vspace{1.9mm}
    \begin{itemize}[leftmargin=*,nosep]
    \setlength\itemsep{0.4em}
    \item \emph{Stochastic primitive optimization}: The first-level approximation aims to reduce the number of learnable parameters when optimizing a large orthogonal matrix. To this end, we propose the stochastic primitive optimization (SPO) algorithm. Given a large orthogonal matrix $\bm{R}\in\mathbb{R}^{m\times m}$, SPO factorizes it into a product of primitive orthogonal matrices, each involving significantly fewer trainable parameters.  These primitives are constructed by parameterizing randomly sampled submatrices of the full matrix. This factorization is implemented as a memory-efficient iterative algorithm that sequentially updates one primitive orthogonal matrix at a time.
    To improve the expressiveness of the sequential factorization, we adopt a merge-then-reinitialize trick, where we merge each learned primitive orthogonal matrix into the weight matrix, and then reinitialize the primitive orthogonal matrix to be identity after every fixed number of iterations.
    \item \emph{Approximate orthogonality via Cayley-Neumann parameterization}: The second-level approximation addresses how to maintain orthogonality without introducing significant computational overhead. To achieve this, we develop the Cayley-Neumann parameterization (CNP) which approximates the Cayley orthogonal parameterization~\cite{liu2021orthogonal,qiu2023controlling} with Neumann series. Our merge-then-reinitialize trick can effectively prevent the accumulation of approximation errors.
    \end{itemize}
\end{itemize}
\vspace{0.25mm}

POET can be viewed as a natural generalization of orthogonal training~\cite{liu2021orthogonal,qiu2023controlling,liu2024boft}, wherein the model training is done by learning a layer-shared orthogonal transformation for neurons. Orthogonal training preserves the hyperspherical energy~\cite{liu2018learning,liu2021learning} within each layer--a quantity that characterizes pairwise neuron relationships on the unit hypersphere. While preserving hyperspherical energy proves effective for many finetuning tasks~\cite{liu2024boft}, it limits the flexibility of pretraining. Motivated by this, POET generalizes energy preservation to spectrum preservation and subsumes orthogonal training as its special case. The better flexibility of POET comes from its inductive structures for preserving weight spectrum, rather than more learnable parameters. We empirically validate that POET achieves better pretraining performance than orthogonal training given the same budget of parameters.

To better understand how POET functions, we employ \emph{vector probing} to analyze the learning dynamics of the orthogonal matrices. Vector probing evaluates an orthogonal matrix $\bm{R}$ using a fixed, randomly generated unit vector $\bm{v}$ by computing $\bm{v}^\top \bm{R} \bm{v}$ which corresponds to the cosine similarity between $\bm{R}\bm{v}$ and $\bm{v}$. By inspecting the cosine similarities of seven orthogonal matrices throughout training, we

\begin{wrapfigure}{r}{0.69\linewidth}
\vspace{-0.1em}
\centering
\includegraphics[width=1\linewidth]{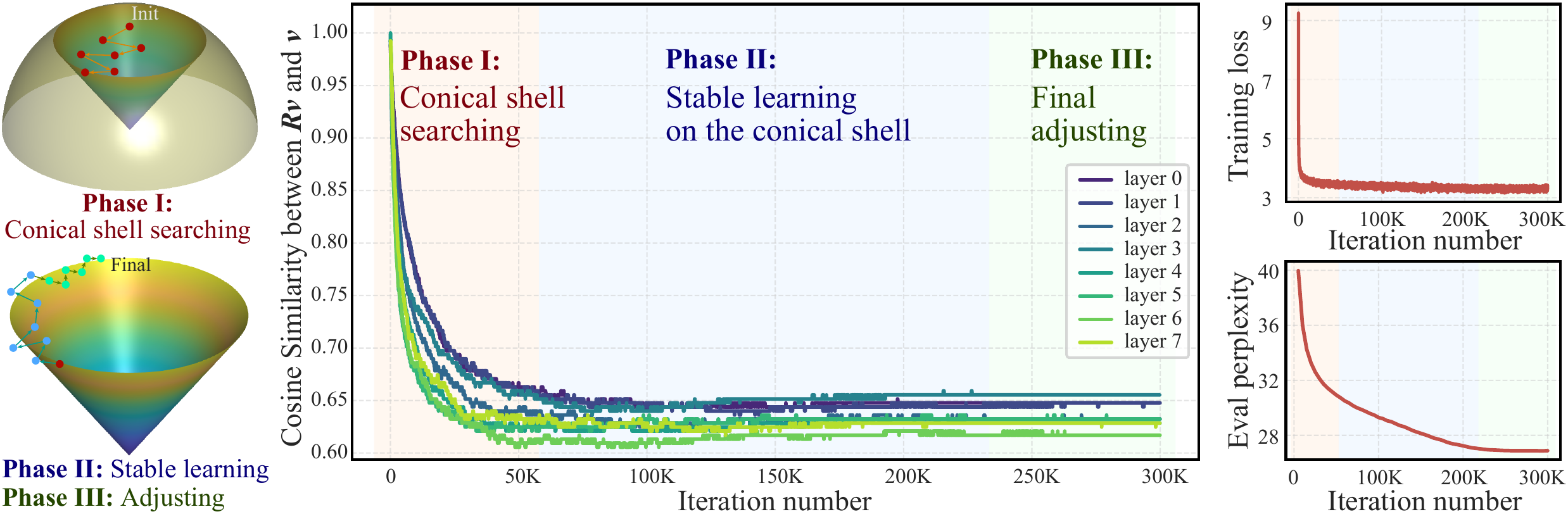}
  \vspace{-1.7em}
    \caption{\scriptsize POET's three learning phases. Left: illustration; Middle: angle; Right: loss and validation.}
    \label{fig:phase}
\vspace{-0.6em}
\end{wrapfigure}

observe that the learning process can be divided into three distinct phases (Figure~\ref{fig:phase}): (1) \emph{conical shell searching}: The cosine starts at $1$ (\ie, $\bm{R}$ is the identity) and gradually converges to a stable range of $[0.6,0.65]$, which we observe consistently across all learnable orthogonal matrices. This suggests that $\bm{R}$ transforms $\bm{v}$ into a thin conical shell around its original direction. (2) \emph{stable learning on the conical shell}: The cosine remains within this range while the model begins to learn stably. Despite the cosine plateauing, validation perplexity continues to improve almost linearly. (3) \emph{final adjusting}: Learning slows and eventually halts as the learning rate approaches zero. We provide an in-depth discussion and empirical results in Appendix~\ref{app:phase_change},\ref{app:angle}. Our major contributions are:

\vspace{-0.25mm}

\begin{itemize}[leftmargin=*,nosep]
\setlength\itemsep{0.32em}
    \item We introduce POET, a novel training framework that provably preserves spectral properties of weight matrices through orthogonal equivalence transformation.
    \item To enhance POET's scalability, we develop two simple yet effective approximations: stochastic principal submatrix optimization for large orthogonal matrices and the Cayley-Neumann parameterization for efficient representation of orthogonal matrices.
    \item We empirically validate POET's training stability and generalization across multiple model scales.
\end{itemize}

\vspace{-2.25mm}
\section{From Energy-preserving Training to Spectrum-preserving Training}
\vspace{-1.75mm}

Orthogonal training~\cite{liu2021orthogonal,qiu2023controlling,liu2024boft} is a framework to train neural networks by learning a layer-shared orthogonal transformation for neurons in each layer. For a weight matrix $\bm{W}=\{\bm{w}_1,\cdots,\bm{w}_n\}\in\mathbb{R}^{m\times n}$ where $\bm{w}_i\in\mathbb{R}^m$ is the $i$-th neuron, the layer's forward pass is $\bm{y}=\bm{W}^\top \bm{x}$ with input $\bm{x} \in \mathbb{R}^m$ and output $\bm{y} \in \mathbb{R}^n$. 
Unlike standard training, which directly optimizes $\bm{W}$, orthogonal training keeps $\bm{W}$ fixed at its random initialization $\bm{W}_0 = {\bm{w}^0_1, \ldots, \bm{w}^0_n}$ and instead learns an orthogonal matrix $\bm{R} \in \mathbb{R}^{m \times m}$ to jointly transform all neurons in the layer.
The forward pass becomes $\bm{y}=(\bm{R}\bm{W}_0)^\top\bm{x}$. The effective weight matrix is $\bm{W}_R=\{\bm{w}^R_1,\cdots,\bm{w}^R_n\}$ where $\bm{w}^R_i=\bm{R}\bm{w}_i$. A key property is its \emph{preservation of hyperspherical energy}. With $\hat{\bm{w}}_i=\bm{w}_i/\|\bm{w}_i\|$, orthogonal training ensures

\vspace{-4mm}
\begin{equation}
\small
    \text{HE}(\bm{W}_0):=\sum_{i\neq j}\left\| \hat{\bm{w}}^0_i-\hat{\bm{w}}^0_j \right\|^{-1}=\sum_{i\neq j}\left\| \bm{R}\hat{\bm{w}}_i-\bm{R}\hat{\bm{w}}_j \right\|^{-1}=:\text{HE}(\bm{W}^R),
\end{equation}
\vspace{-4.4mm}

where hyperspherical energy $\text{HE}(\cdot)$ characterizes the uniformity of neurons on a unit hypersphere. Prior work~\cite{liu2018learning,xie2017diverse,liu2021orthogonal,liu2021learning} has shown that energy-preserving training can effectively improve generalization. Orthogonal finetuning (OFT)~\cite{qiu2023controlling,liu2024boft} also demonstrates that finetuning foundation models while preserving hyperspherical energy achieves a favorable trade-off between efficient adaptation to downstream tasks and retention of pretraining knowledge. While the hyperspherical energy preservation is effective for finetuning, it can be too restrictive for pretraining. To allow greater flexibility in the pretraining phase, we relax the constraint from preserving hyperspherical energy to preserving the singular-value spectrum instead. By inherently maintaining the spectrum, energy-preserving training is a special case of spectrum-preserving training. As a generalization, spectrum-preserving training learns a transformation $\mathcal{T}:\mathbb{R}^{m\times n}\rightarrow\mathbb{R}^{m\times n}$ that preserves the spectrum:

\vspace{-5.75mm}
\begin{equation}
\small
\big{\{} \sigma_1\big{(}\mathcal{T}(\bm{W}_0)\big{)},\sigma_2\big{(}\mathcal{T}(\bm{W}_0)\big{)},\cdots,\sigma_{\min(m,n)}\big{(}\mathcal{T}(\bm{W}_0)\big{)}\big{\}}=\big{\{} \sigma_1(\bm{W}_0),\sigma_2(\bm{W}_0),\cdots,\sigma_{\min(m,n)}(\bm{W}_0) \big{\}},
\end{equation}
\vspace{-5.75mm}

where $\sigma_i(\bm{W}_0)$ denotes the $i$-th singular value of $\bm{W}_0$ (sorted by descending order with $\sigma_1$ being the largest singular value). How we instantiate the transformation $\mathcal{T}$ results in different algorithms. Generally, $\mathcal{T}$ is a spectrum-preserving map, and can be either linear~\cite{li1990linear} or nonlinear~\cite{baribeau2000non}. If we only consider $\mathcal{T}$ to be a linear map, then Theorem~\ref{thm:svd} can fully characterize the form of $\mathcal{T}$:

\vspace{0.25mm}

\begin{theorem}[Simplified results from \cite{li1990linear}]\label{thm:svd}
    For a linear map $\mathcal{T}:\mathbb{R}^{m\times n}\rightarrow\mathbb{R}^{m\times n}$ ($m\neq n$), if $\sigma_1(\mathcal{T}(\bm{W}))=\sigma_1(\bm{W})$ always holds for all $\bm{W}\in\mathbb{R}^{m\times n}$, then the linear map $\mathcal{T}$ must be of the following form: $\mathcal{T}(\bm{W})=\bm{R}\bm{W}\bm{P}$, for all $\bm{W}\in\mathbb{R}^{m\times n}$ where $\bm{R}\in\mathbb{R}^{m\times m}$ and $\bm{P}\in\mathbb{R}^{n\times n}$ are some fixed elements in orthogonal groups $O(m)$ and $O(n)$, respectively.
\end{theorem}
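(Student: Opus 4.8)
The plan is to recognize that $\sigma_1(\cdot)$ is nothing but the spectral (operator) norm $\|\cdot\|_{\mathrm{op}}$ on $\mathbb{R}^{m\times n}$, so the hypothesis says precisely that $\mathcal{T}$ is a \emph{linear isometry} of $\big(\mathbb{R}^{m\times n},\,\|\cdot\|_{\mathrm{op}}\big)$. A norm-preserving linear map is injective, and since domain and codomain both have dimension $mn$, $\mathcal{T}$ is in fact a bijective linear isometry. I would then pass to the dual picture: equipping $\mathbb{R}^{m\times n}$ with the Frobenius inner product $\langle\bm{A},\bm{B}\rangle=\operatorname{tr}(\bm{A}^\top\bm{B})$, the norm dual to $\|\cdot\|_{\mathrm{op}}$ under this pairing is the nuclear norm $\|\bm{W}\|_*=\sum_i\sigma_i(\bm{W})$ (Schatten $\infty$--$1$ duality), and the Frobenius-adjoint $\mathcal{S}:=\mathcal{T}^{*}$ is then a bijective linear isometry of $\big(\mathbb{R}^{m\times n},\,\|\cdot\|_*\big)$.

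The next step is to identify the extreme points of the nuclear-norm unit ball $B_*=\{\bm{W}:\|\bm{W}\|_*\le 1\}$. By the SVD, $B_*$ is the convex hull of the set $S=\{\bm{u}\bm{v}^\top:\bm{u}\in\mathbb{R}^m,\ \bm{v}\in\mathbb{R}^n,\ \|\bm{u}\|=\|\bm{v}\|=1\}$ of unit-operator-norm rank-one matrices, so $\operatorname{ext}(B_*)\subseteq S$; conversely each $\bm{u}\bm{v}^\top\in S$ is extreme, as one sees by testing a putative decomposition $\bm{u}\bm{v}^\top=\tfrac12(\bm{A}+\bm{B})$ against the norming functional $\bm{u}\bm{v}^\top$ itself and analyzing the resulting equality case in the SVDs of $\bm{A}$ and $\bm{B}$. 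Hence $\operatorname{ext}(B_*)=S$. A bijective linear isometry carries $B_*$ onto $B_*$ and therefore $\operatorname{ext}(B_*)$ onto $\operatorname{ext}(B_*)$, so $\mathcal{S}(S)=S$; by homogeneity, $\mathcal{S}$ maps every rank-one matrix to a rank-one matrix, and the same holds for $\mathcal{S}^{-1}$. Thus $\mathcal{S}$ is a \emph{bijective rank-one preserver}.

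I would then invoke the classical rank-one preserver theorem (Marcus--Moyls / Westwick): a bijective linear map on $\mathbb{R}^{m\times n}$ that sends every rank-one matrix to a rank-one matrix has the form $\bm{X}\mapsto\bm{A}\bm{X}\bm{B}$ or---only when $m=n$---the form $\bm{X}\mapsto\bm{A}\bm{X}^\top\bm{B}$, with $\bm{A}\in GL_m(\mathbb{R})$, $\bm{B}\in GL_n(\mathbb{R})$. Since $m\neq n$, the transpose form is impossible (it would change the matrix shape), so $\mathcal{S}(\bm{X})=\bm{A}\bm{X}\bm{B}$. Feeding rank-one inputs back into the isometry identity and using $\|\bm{p}\bm{q}^\top\|_*=\|\bm{p}\|\,\|\bm{q}\|$ gives $\|\bm{A}\bm{u}\|\,\|\bm{B}^\top\bm{v}\|=\|\bm{u}\|\,\|\bm{v}\|$ for all $\bm{u},\bm{v}$; fixing one argument and letting the other vary forces $\bm{A}=c\,\bm{R}$ and $\bm{B}=c^{-1}\bm{Q}$ with $\bm{R}\in O(m)$, $\bm{Q}\in O(n)$ and some $c>0$, hence $\mathcal{S}(\bm{X})=\bm{R}\bm{X}\bm{Q}$. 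Finally, dualizing back via $\langle\bm{R}\bm{X}\bm{Q},\bm{Y}\rangle=\langle\bm{X},\bm{R}^\top\bm{Y}\bm{Q}^\top\rangle$ yields $\mathcal{T}=\mathcal{S}^{*}$ with $\mathcal{S}^{*}(\bm{W})=\bm{R}^\top\bm{W}\bm{Q}^\top$; relabelling the (orthogonal) factors $\bm{R}^\top$ and $\bm{Q}^\top$ gives $\mathcal{T}(\bm{W})=\bm{R}\bm{W}\bm{P}$ with $\bm{R}\in O(m)$, $\bm{P}\in O(n)$, as claimed.

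The main obstacle is the rank-one preserver theorem itself: establishing from scratch that a bijective linear rank-one preserver on rectangular matrices is $\bm{A}\bm{X}\bm{B}$ (or $\bm{A}\bm{X}^\top\bm{B}$ in the square case) is nontrivial---the standard route studies the images of the "rank-one pencils" $\{\bm{u}\bm{v}^\top:\bm{v}\in\mathbb{R}^n\}$ and $\{\bm{u}\bm{v}^\top:\bm{u}\in\mathbb{R}^m\}$, shows each must be mapped onto a pencil of the same kind, and reconstructs $\bm{A}$ and $\bm{B}$ (and the dichotomy) from how these two families of pencils intersect; this is precisely the point at which $m\neq n$ enters. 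By comparison, the extreme-point description of $B_*$ and the final step of pinning the scalars down to orthogonal matrices are routine.
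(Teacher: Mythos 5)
The paper does not prove this theorem itself; it cites it as a simplified consequence of Li (1990), so there is no in-paper proof to compare against. Judged on its own merits, your proposal is correct and takes what is essentially the standard route in the linear-preserver literature. You identify $\sigma_1$ with the operator norm so the hypothesis makes $\mathcal{T}$ a bijective operator-norm isometry; pass by Schatten duality (under the Frobenius pairing) to a nuclear-norm isometry $\mathcal{S}=\mathcal{T}^{*}$; observe that the extreme points of the nuclear-norm unit ball are exactly the unit rank-one matrices $\bm{u}\bm{v}^\top$, so $\mathcal{S}$ and $\mathcal{S}^{-1}$ preserve rank one; invoke the Marcus--Moyls/Westwick bijective rank-one preserver theorem to obtain $\mathcal{S}(\bm{X})=\bm{A}\bm{X}\bm{B}$, with the transpose branch excluded by $m\neq n$; normalize $\bm{A},\bm{B}$ to orthogonal matrices from the rank-one isometry identity $\|\bm{A}\bm{u}\|\,\|\bm{B}^\top\bm{v}\|=\|\bm{u}\|\,\|\bm{v}\|$ (via $\bm{A}^\top\bm{A}=c^2\bm{I}$); and dualize back. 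All of these steps are sound. Two points are worth making explicit in a self-contained write-up: (i) the equality case showing each unit $\bm{u}\bm{v}^\top$ is genuinely extreme in the nuclear ball, which you sketch correctly but tersely (it reduces to the equality case of the von Neumann trace inequality applied to the saturated pairing $\langle \bm{u}\bm{v}^\top,\bm{A}\rangle=\|\bm{A}\|_*$); and (ii) the rank-one preserver theorem itself, which you rightly flag as the main nontrivial import and which requires the bijectivity you established. Neither is a gap, but both need a citation or a proof to make the argument complete.
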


\vspace{-1.5mm}

All parameterizations for the linear map $\mathcal{T}$ can be expressed as  $\thickmuskip=2mu \medmuskip=2mu \mathcal{T}(\bm{W})=\bm{R}\bm{W}\bm{P}$, where $\bm{R}$ and $\bm{P}$ are orthogonal matrices. %
For instance, OFT is an energy-preserving method (a special case of spectrum-preserving training), where the map simplifies to $\thickmuskip=2mu \medmuskip=2mu \mathcal{T}(\bm{W}) = \bm{R} \bm{W} \bm{I}$, with $\bm{I}$ as the identity.

\textbf{POET preserves hyperspherical energy under isotropic Gaussian initialization}. \cite{liu2021orthogonal} shows that weight matrices that are initialized by zero-mean isotropic Gaussian distribution (\eg,~\cite{glorot2010understanding}) are guaranteed to have small hyperspherical energy. Because zero-mean isotropic Gaussian is invariant under orthogonal transformation, $\bm{R}\bm{W}\bm{P}$ also has small energy (see Section~\ref{sect:insight} and Appendix~\ref{app:poet_energy}).

\vspace{-1.75mm}
\section{Reparameterized Training via Orthogonal Equivalence Transformation}
\vspace{-1.75mm}

This section introduces the POET framework, which reparameterizes each neuron as the product of a fixed random weight matrix and two learnable orthogonal matrices applied on both sides. POET serves as a specific implementation of spectrum-preserving training. Inspired by Theorem~\ref{thm:svd}, it parameterizes the spectrum-preserving transformation $\mathcal{T}$ using a left orthogonal matrix that transforms the column space of the weight matrix and a right orthogonal matrix that transforms its row space.

\vspace{-1.5mm}
\subsection{General Framework}\label{subsec:general-framework}
\vspace{-1.5mm}

Following the general form of spectrum-preserving linear maps discussed in the last section, POET reparameterizes the neuron as $\bm{R}\bm{W}_0\bm{P}$, where $\bm{W}_0\in\mathbb{R}^{m\times n}$ is a randomly initialized weight matrix that remains fixed during training, and $\bm{R}\in\mathbb{R}^{m\times m},\bm{P}\in\mathbb{R}^{n\times n}$ are trainable orthogonal matrices. This reparameterization effectively applies an orthogonal equivalence transformation (OET) to random weight matrices. Specifically, OET is a double-sided transformation, defined as $\text{OET}(\bm{W};\bm{R},\bm{P})=\bm{R}\bm{W}\bm{P}$, where the input matrix $\bm{W}$ is multiplied on the left and on the right by orthogonal matrices $\bm{R}$ and $\bm{P}$, respectively. The forward pass of POET can be thus written as 
\begin{equation}\label{eq:poet}
\small
    \bm{y}=\bm{W}_{RP}^\top\bm{x}=(\bm{R}\bm{W}_0\bm{P})^\top\bm{x},~~~~\text{s.t.}~\big{\{}\bm{R}^\top\bm{R}=\bm{R}\bm{R}^\top=\bm{I},~~\bm{P}^\top\bm{P}=\bm{P}\bm{P}^\top=\bm{I}\big{\}},
\end{equation}
where $\bm{R}$ and $\bm{P}$ can be merged into a single weight matrix $\bm{W}_{RP}=\bm{R}\bm{W}_0\bm{P}$ after training. Therefore, the inference speed of POET-trained neural networks is the same as conventionally trained ones.

\textbf{Spectrum control}. POET can be interpreted as learning weight matrices by simultaneously transforming their left singular vectors and right singular vectors while keeping the singular values unchanged. Given the singular value decomposition (SVD) $\bm{W}_0=\bm{U}\bm{\Sigma}_0\bm{V}^\top$, the reparameterized neuron weight matrix becomes  $\bm{W}_{RP}=\bm{R}\bm{U}\bm{\Sigma}_0\bm{V}^\top\bm{P}$ where both $\bm{R}\bm{U}$ and $\bm{V}^\top\bm{P}$ are orthogonal matrices. This effectively constitutes an SVD of $\bm{W}_{RP}$. It is also straightforward to verify that the spectral properties of $\bm{W}_{RP}$ remain identical to those of the initial matrix $\bm{W}_0$.

\begin{wrapfigure}{r}{0.31\linewidth}
\vspace{-2em}
\centering
\includegraphics[width=.995\linewidth]{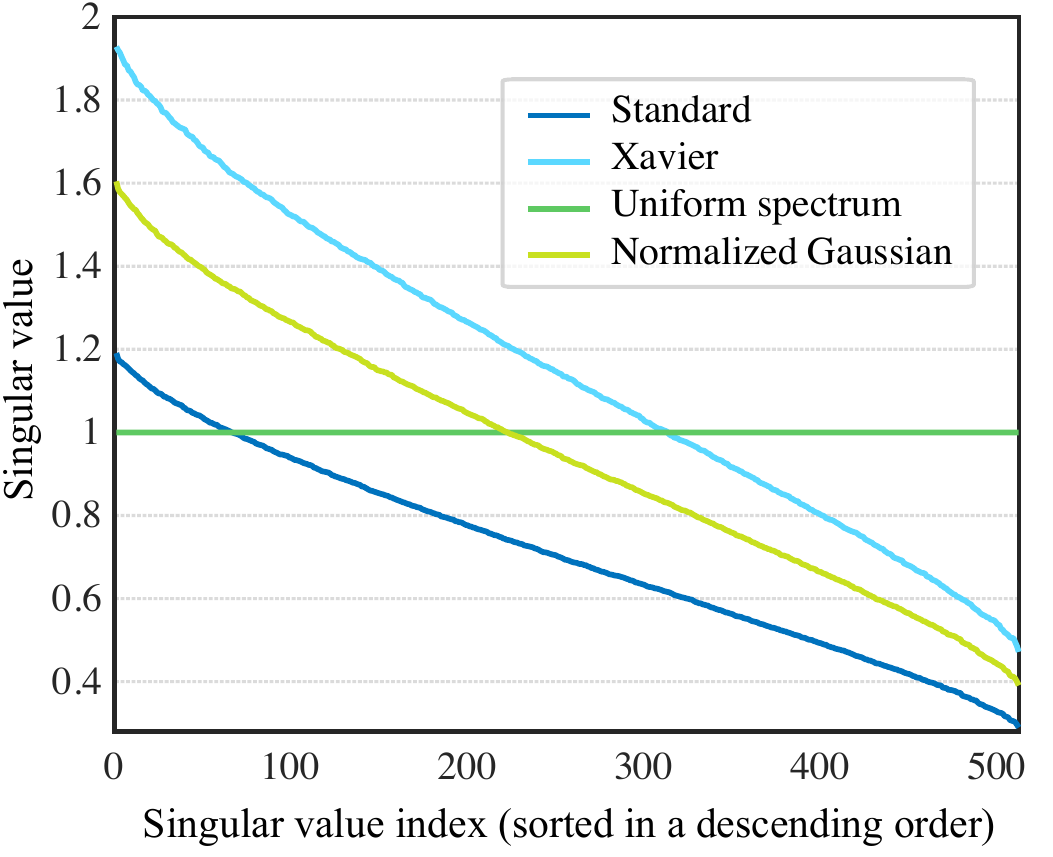}
  \vspace{-1.5em}
    \caption{\scriptsize Singular values of a weight matrix of size 512$\times$1376, randomly generated by different initialization schemes.}
    \label{fig:init}
\vspace{-0.5em}
\end{wrapfigure}

\textbf{Neuron initialization}. Since POET preserves the spectral properties of the initial weight matrix $\bm{W}_0$, the choice of initialization plays a critical role. We consider two common schemes: (1) \emph{standard initialization}, which samples from a zero-mean Gaussian with fixed variance (the default choice for LLaMA models); and (2) \emph{Xavier initialization}~\cite{glorot2010understanding}, which uses a zero-mean Gaussian with variance scaled by the layer dimensions. To facilitate POET, we propose two new initialization schemes. 
The first method, \emph{uniform-spectrum initialization}, applies SVD to a standard initialization and sets all singular values to $1$, balancing spectral properties throughout training.
The second, \emph{normalized Gaussian initialization}, normalizes neurons drawn from a zero-mean Gaussian with fixed variance. This is directly inspired by prior work showing that normalized neurons improve convergence~\cite{liu2017deep,liu2018decoupled,liu2021orthogonal}.
To ensure that the POET-reparameterized network is statistically equivalent to a standard network at initialization, we always initialize both orthogonal matrices as identity matrices.

\vspace{-1.5mm}
\subsection{Efficient Approximation to Orthogonality}
\vspace{-1.5mm}

POET is conceptually simple, requiring only the optimization of two orthogonal matrices. However, these matrices are typically large, and naively optimizing them leads to significant computational challenges. We start by introducing the following efficient approximations.

\begin{figure}[t]
    \centering
    \setlength{\abovecaptionskip}{3pt}
    \setlength{\belowcaptionskip}{-9pt}
    \renewcommand{\captionlabelfont}{\scriptsize}
    \vspace{-2mm}
    \includegraphics[width=1\textwidth]{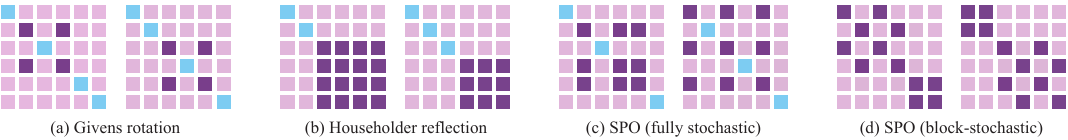}
    \caption{\scriptsize Examples of the primitive orthogonal transformation matrix $\bm{G}_i$ in different orthogonalizations (two examples for each method). Note that, blue blocks represent $1$, light purple blocks denote $0$ and deep purple blocks are the actual orthogonal parameterization to be learned.}
    \label{fig:blockpattern}
\end{figure}

\vspace{-1.25mm}
\subsubsection{Stochastic Primitive Optimization} 
\vspace{-1.25mm}

The core idea of SPO is inspired by how QR factorization is performed using Givens rotations and Householder transformations. Both methods construct a large orthogonal matrix $\bm{R}$ by sequentially applying primitive orthogonal transformations (\eg, Givens rotations or Householder reflections), \ie, $\bm{R} = \prod_{i=1}^c \bm{G}_i$, where $\bm{G}_i$ denotes the $i$-th primitive orthogonal matrix. While each $\bm{G}_i$ is of the same size as $\bm{R}$, it is parameterized by significantly fewer degrees of freedom. See Figure~\ref{fig:blockpattern} for an illustration. Both Givens rotation and Householder reflection use relatively low-capacity parameterizations--for example, each Givens rotation $\bm{G}_i$ involves only a single effective parameter--which limits their efficiency in representing the full orthogonal matrix. SPO follows a similar idea of factorizing the original orthogonal matrix into multiple primitive orthogonal matrices. However, unlike Givens and Householder methods, SPO treats the number of effective parameters in each primitive matrix as a tunable hyperparameter and adopts a stochastic sparsity pattern.

\textbf{Fully stochastic SPO}. The basic idea of fully stochastic SPO is to randomly sample a small submatrix and enforce its orthogonality, allowing it to be easily extended to a full orthogonal matrix by embedding it within an identity matrix--a process similar to Givens or Householder transformations. To represent a large orthogonal matrix $\bm{R}\in\mathbb{R}^{m\times m}$, we start by defining $c$ index sets $\bm{S}^j=\{s^j_1,\cdots,s^j_b\}\subseteq\{1,\cdots, m\}$ ($j\in[1,c]$), where each set has cardinality $|\bm{S}^j|=b$, a hyperparameter controlling the number of effective parameters of a primitive orthogonal matrix. $\bm{S}^j,\forall j$ are randomly sampled from the full indices $\{1,\cdots,m\}$. Let $\tilde{\bm{G}}_j\in\mathbb{R}^{b\times b}$ be a small orthogonal matrix, and $\bm{D}(\bm{S}^j)=\{\bm{e}({s^j_1}),\cdots,\bm{e}({s^j_b})\}\in\mathbb{R}^{m\times b}$ be a selection matrix, where $\bm{e}(k)$ is the standard basis vector with a $1$ in the $k$-th position and $0$ elsewhere. The factorization is given by

\vspace{-5mm}
\begin{equation}\label{eq:fs_spo}
\small
    \bm{R}=\prod_{i=1}^c \big(\underbrace{\bm{I}_m+\bm{D}(\bm{S}^i)\cdot(\tilde{\bm{G}}_i-\bm{I}_b)\cdot\bm{D}(\bm{S}^i)^\top}_{\bm{G}_i\text{: The $i$-th primitive orthogonal matrix}}\big),~~~~\text{s.t.}~\tilde{\bm{G}}_i^{\top}\tilde{\bm{G}}_i=\tilde{\bm{G}}_i\tilde{\bm{G}}_i^\top=\bm{I}_b,~\forall i,
\end{equation}
\vspace{-2.75mm}

where $\bm{D}(\bm{S}^i)\cdot(\bm{A})\cdot\bm{D}(\bm{S}^i)^\top$ is a projector that replaces the $b\times b$ sub-block with $\bm{A}$. $\bm{I}_m$ and $\bm{I}_b$ are identity matrices of size $m\times m$ and $b \times b$, respectively. To efficiently parameterize small orthogonal matrices $\tilde{\bm{G}}_i$, we can use the CNP introduced in the next section.

\textbf{Block-stochastic SPO}. While fully stochastic SPO is simple, it may fail to transform all neuron dimensions because the identity matrix leaves part of the space unchanged. See the blue blocks in Figure~\ref{fig:blockpattern}(c) as an example. To address this, we propose block-stochastic SPO, which first constructs a block-diagonal orthogonal matrix with small blocks for parameter efficiency, and then applies a random permutation to enhance expressiveness by randomizing the sparsity pattern. Block-stochastic SPO transforms all neuron dimensions simultaneously, as shown in Figure~\ref{fig:blockpattern}(d). Formally we have

\vspace{-5mm}
\begin{equation}\label{eq:bs_spo}
\small
    \bm{R}= \prod_{i=1}^c \big(\underbrace{\bm{\Psi}_i^\top \cdot\text{Diag}(\tilde{\bm{G}}^1_i,\tilde{\bm{G}}^2_i,\cdots,\tilde{\bm{G}}^{\lceil\frac{m}{b}\rceil}_i)\cdot\bm{\Psi}_i}_{\bm{G}_i\text{: The $i$-th primitive orthogonal matrix}}\big),~~~~\text{s.t.}~(\tilde{\bm{G}}_i^j)^\top\tilde{\bm{G}}_i^j=\tilde{\bm{G}}_i^j(\tilde{\bm{G}}_i^j)^\top=\bm{I}_b,~\forall i,j,
\end{equation}
\vspace{-2.75mm}

where $\tilde{\bm{G}}^j_i\in\mathbb{R}^{b\times b}$ is the $j$-th block of the block diagonal matrix, and $\bm{\Psi}_i,\forall i$ are all random permutation matrices. As long as each diagonal block $\tilde{\bm{G}}^j_i$ is an orthogonal matrix, both $\bm{G}_i$ and $\bm{R}$ are also orthogonal matrices. We also use CNP to efficiently parameterize each orthogonal block $\tilde{\bm{G}}^j_i$.

\textbf{The merge-then-reinitialize trick}. The factorizations in Equation~\eqref{eq:fs_spo} and \eqref{eq:bs_spo} offer a simple approach to optimizing large orthogonal matrices by sequentially updating primitive orthogonal matrices. However, storing all previous primitives incurs high GPU memory overhead. To mitigate this, we propose the merge-then-reinitialize trick, where the learned primitive orthogonal matrix can be merged into the weight matrix after every certain number of iterations, and then reinitialized to the identity matrix. After reinitialization, stochastic sampling is repeated to select a new index set (in fully stochastic SPO) or generate a new permutation (in block-stochastic SPO). This trick allows only one primitive matrix to be stored at a time, substantially reducing GPU memory usage.

\vspace{-1mm}
\subsubsection{Cayley-Neumann Parameterization}
\vspace{-1mm}

The classic Cayley parameterization generates an orthogonal matrix $\bm{R}$ in the form of $\bm{R}=(\bm{I}+\bm{Q})(\bm{I}-\bm{Q})^{-1}$ where $\bm{Q}$ is a skew-symmetric matrix satisfying $\bm{Q}=-\bm{Q}^\top$. A minor caveat of this parameterization is that it only produces orthogonal matrices with determinant $1$ (\ie, elements of the special orthogonal group), but empirical results in~\cite{liu2021orthogonal,qiu2023controlling,liu2024boft} indicate that this constraint does not hurt performance. However, the matrix inverse in the original Cayley parameterization introduces numerical instability and computational overhead, limiting its scalability to large orthogonal matrices. To address this, we approximate the matrix inverse using a truncated Neumann series:

\vspace{-5.4mm}
\begin{equation}\label{eq:cnp}
\small
    \bm{R}=(\bm{I}+\bm{Q})(\bm{I}-\bm{Q})^{-1}=(\bm{I}+\bm{Q})\cdot\big(\sum_{i=0}^\infty \bm{Q}^i \big) \approx (\bm{I}+\bm{Q})\cdot\big(\sum_{i=0}^k \bm{Q}^i \big)=(\bm{I}+\bm{Q})\cdot\big(\bm{I}+\sum_{i=1}^k \bm{Q}^i \big),
\end{equation}
\vspace{-4mm}

where a larger number of approximation terms $k$ leads to a smaller approximation error. By avoiding matrix inversion, the training stability of POET is improved; however, this comes with a price--the approximation is valid only when the Neumann series converges in the operator norm. To initialize orthogonal matrices as identity, we set $\bm{Q}$ to a zero matrix in CNP, satisfying the convergence condition initially. As the training progresses, however, updates to $\bm{Q}$ may cause its operator norm to exceed $1$, violating this condition. Fortunately, our merge-then-reinitialize trick mitigates this issue by periodically resetting $\bm{Q}$ to a zero matrix, ensuring its operator norm remains small.

\vspace{-1mm}
\subsubsection{Overall Training Algorithm}
\vspace{-1mm}

\textbf{Step 1: Initialization}. We initialize the weight matrices using normalized Gaussian: $\bm{W} \leftarrow \bm{W}_0$.

\vspace{-0.5mm}
\textbf{Step 2: Orthogonal matrix initialization}. For fully stochastic SPO, we randomly sample an index set $\bm{S}$, and parameterize $\tilde{\bm{G}}_{R}\in\mathbb{R}^{b\times b}$ and $\tilde{\bm{G}}_{P}\in\mathbb{R}^{b\times b}$ using CNP (Equation~\eqref{eq:cnp}). Both matrices are initialized as identity, so $\bm{R}$ and $\bm{P}$ also start as identity matrices. For block-stochastic SPO, we sample a random permutation matrix $\bm{\Psi}_R,\bm{\Psi}_P$, and parameterize $\{\tilde{\bm{G}}_R^{1},\cdots,\tilde{\bm{G}}_R^{\lceil\frac{m}{b}\rceil}\}$ and $\{\tilde{\bm{G}}_P^{1},\cdots,\tilde{\bm{G}}_P^{\lceil\frac{n}{b}\rceil}\}$ using CNP. Then we initialize them as the identity, so $\bm{R}$ and $\bm{P}$ again starts as identity matrices.

\vspace{-0.5mm}
\textbf{Step 3: Efficient orthogonal parameterization}. For fully stochastic SPO,  we have $\bm{R}=\bm{I}_m+\bm{D}(\bm{S})(\tilde{\bm{G}}_R-\bm{I}_b)\bm{D}(\bm{S})^\top$ and $\bm{P}=\bm{I}_n+\bm{D}(\bm{S})(\tilde{\bm{G}}_P-\bm{I}_b)\bm{D}(\bm{S})^\top$. For block-stochastic SPO, we have $\bm{R}=\bm{\Psi}_R^\top \text{Diag}(\tilde{\bm{G}}^1_R,\cdots,\tilde{\bm{G}}^{\lceil\frac{m}{b}\rceil}_R)\bm{\Psi}_R$ and $\bm{P}=\bm{\Psi}_P^\top \text{Diag}(\tilde{\bm{G}}^1_P,\cdots,\tilde{\bm{G}}^{\lceil\frac{n}{b}\rceil}_P)\bm{\Psi}_P$.

\vspace{-0.5mm}
\textbf{Step 4: Inner training loop for updating orthogonal matrices}. The equivalent weight matrix in the forward pass is $\bm{R}\bm{W}\bm{P}$. Gradients are backpropagated through $\bm{R}$ and $\bm{P}$ to update $\tilde{\bm{G}}_{R},\tilde{\bm{G}}_{P}$ (fully stochastic) or $\tilde{\bm{G}}_{R}^i,\tilde{\bm{G}}_{P}^i,\forall i$ (block-stochastic). This inner loop runs for a fixed number of iterations.

\vspace{-0.5mm}
\textbf{Step 5: Merge-then-reinitialize}. The learned orthogonal matrices $\bm{R}$ and $\bm{P}$ are merged into the weight matrix by $\bm{W} \leftarrow \bm{R}\bm{W}\bm{P}$. If not terminated, return to \textbf{Step 2} for reinitialization.

\vspace{-1.6mm}
\section{Discussions and Intriguing Insights}\label{sect:insight}
\vspace{-1.6mm}

\setlength{\columnsep}{11pt}
\begin{wraptable}{r}[0cm]{0pt}
\scriptsize
\centering
\hspace{-2.1mm}
\setlength{\tabcolsep}{3pt}
\renewcommand{\arraystretch}{1.24}
\begin{tabular}{lcc}
\specialrule{0em}{0pt}{-13.8pt}
\textbf{Method}   & \# \textbf{trainable params}      & \textbf{Memory cost} \\
\shline
AdamW               & $mn$                  & $3mn$   \\
GaLore~\cite{zhao2024galore}             & $mn$                  & $mn+mr + 2nr$ \\\rowcolor{Gray}
POET (FS)           & $b(b-1)$              & $mn+3b(b-1)$ \\\rowcolor{Gray}
POET (BS)           & $\frac{1}{2}(m+n)(b-1)$ & $mn+ \frac{3}{2}(m+n)(b-1)$ \\\specialrule{0em}{-5.25pt}{0pt}
\end{tabular}
    \caption{\scriptsize Comparison to existing methods. Assume $W \in \mathbb{R}^{m \times n}$ ($m \leq n$), GaLore with rank $r$ and POET with block size $b$. FS denotes fully stochastic SPO, and BS denotes block-stochastic SPO.}
    \label{tab:para_compare}
\vspace{-4mm}
\end{wraptable}

\textbf{Parameter and memory complexity}. By introducing a hyperparameter $b$ as the sampling budget, fully stochastic SPO decouples parameter complexity from the size of the weight matrices. With a small $b$, POET becomes highly parameter-efficient, though at the cost of slower convergence. This offers users a flexible trade-off between efficiency and speed. In contrast, block-stochastic SPO has parameter complexity dependent on the matrix size (\ie, $m + n$), making it more scalable than AdamW, which requires $mn$ trainable parameters. In terms of memory complexity, both POET variants can be much more efficient than AdamW with a suitable sampling budget $b$. A comparison of parameter and memory complexity is given in Table~\ref{tab:para_compare}.

\begin{wrapfigure}{r}{0.3\linewidth}
\vspace{-2.25em}
\centering
\includegraphics[width=1\linewidth]{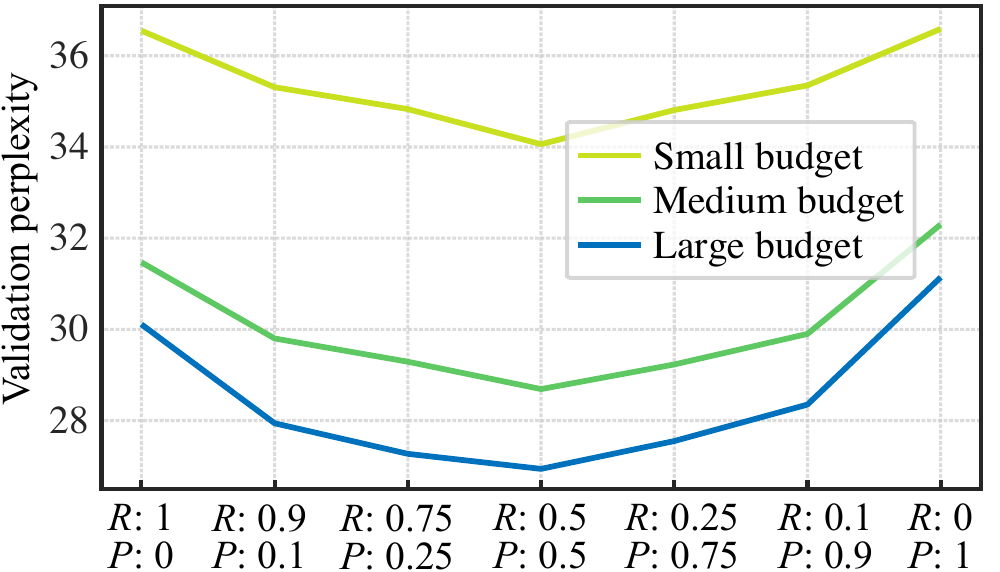}
  \vspace{-1.55em}
    \caption{\scriptsize Performance of POET under a constant total parameter budget on $\bm{R},\bm{P}$.}
    \label{fig:param_budget}
\vspace{-0.5em}
\end{wrapfigure}

\textbf{Performance under a constant parameter budget}. Since POET optimizes two orthogonal matrices $\bm{R},\bm{P}$ simultaneously, a natural question arises: \emph{which matrix should receive more parameter budget under a fixed total constraint?} To investigate this, we conduct a controlled experiment where different ratios of trainable parameters are allocated to $\bm{R}$ and $\bm{P}$ under a fixed total budget. All other settings (\eg, architecture, data) remain unchanged, with full details provided in the Appendix. We use validation perplexity as the evaluation metric. The total parameter budget matches that of fully stochastic POET with $b = \frac{1}{h}m$ for $\bm{R}$ and $b = \frac{1}{h}n$ for $\bm{P}$, where $h = 8$, $4$, and $3$ correspond to small, medium, and large budgets, respectively. We explore seven allocation settings: $\thickmuskip=2mu \medmuskip=2mu\bm{R}:\bm{P} = 1:0$ (\ie, orthogonal training~\cite{liu2021orthogonal,qiu2023controlling,liu2024boft}), $\thickmuskip=2mu \medmuskip=2mu 0.9:0.1$, $\thickmuskip=2mu \medmuskip=2mu 0.75:0.25$, $\thickmuskip=2mu \medmuskip=2mu 0.5:0.5$ (\ie, standard POET), $\thickmuskip=2mu \medmuskip=2mu 0.25:0.75$, $\thickmuskip=2mu \medmuskip=2mu 0.1:0.9$, and $\thickmuskip=2mu \medmuskip=2mu 0:1$. Results in Figure~\ref{fig:param_budget} show that POET with a balanced allocation between $\bm{R}$ and $\bm{P}$ yields the best performance.

\textbf{Guarantees of weight spectrum}. For POET with standard and normalized Gaussian initializations, we have proved in Appendix~\ref{app:weight_spectrum} that the largest and smallest singular values of weights can be bounded. For normalized Gaussian, the bound is only dependent on the row-to-column ratio of the weight matrix. For standard Gaussian, the bound has an extra dependence on the neuron dimension.

\textbf{Connection to generalization theory}. Several generalization results~\cite{bartlett2017spectrally,xie2017diverse,neyshabur2018pac} based on bounding the spectral norm of weight matrices. In particular, the spectrally-normalized margin analysis in \cite{bartlett2017spectrally} bounds the misclassification error in terms of a margin-based training loss and a complexity  term.
The complexity term is proportional to 
$Q/(\gamma n)$ where $\gamma$ and $n$ are margin and sample size and $Q$
bounds the spectral complexity. For an $L$-layer ReLU MLP
and maximal width $d$, $Q$ is bounded by 

\vspace{-4.4mm}
\begin{equation}
    \small
    Q=\bigg(\prod_{i=1}^L
   \lVert \bm{W_i}\rVert\bigg)
   \bigg(\sum_{i=1}^L \frac{(\sqrt{d}\lVert \bm{W}_i\rVert_F)^{2/3} }{\lVert \bm{W}_i\rVert^{2/3}}
   \bigg)^{3/2}
\end{equation}
\vspace{-3.4mm}

where $\lVert\cdot\rVert$
and $\lVert \cdot\rVert_F$ denote spectral and Frobenius norm respectively. Those norms remain invariant when training the network with POET and at initialization they can be bounded with high probability using standard results from random matrix theory (Appendix~\ref{app:weight_spectrum}).
The scale at initialization is typically chosen such that $\bm{W}\in \mathbb{R}^{d\times d}$ satisfies $\| \bm{W}\|=O(1)$ and
$\| \bm{W}\|=O(\sqrt{d})$
so that $Q=O_L(d)$.

\textbf{Approximation properties of SPO}. 
We have seen in Theorem~\ref{thm:svd} that the factorization $\bm{R}\bm{W}\bm{P}$
with orthogonal matrices $\bm{R}$ and $\bm{P}$
is the most general spectrum preserving transformation of $\bm{W}$. 
Here we express $\bm{R}$ and $\bm{P}$ as products of stochastic primitives, but as we state next, this does not reduce representation power when using sufficiently many primitives.

\vspace{-0.1mm}

\begin{lemma}\label{le:approximation}
    If $c\geq \alpha m\ln(m)(m/b)^2$ for some $\alpha>0$ then with probability at least $1-m^{-(\alpha-2)}$ over the randomness of the index sets $\bm{S}^i$ we can express any orthogonal matrix $\bm{R}$ as a product of $c$ primitives $\bm{G}_i$ as in Eq.~\eqref{eq:fs_spo}.
    Moreover, the orthogonal matrix $\bm{G}_i$
    depends only on %
    the sets $\bm{S}^j$ and matrices $\bm{G}^j$ selected in earlier steps.
\end{lemma}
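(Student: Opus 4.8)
The plan is to reduce the claim to a ``block QR via Givens rotations'' factorization carried out online against the random supports. Write $O(\bm{S}^i):=\{\bm{I}_m+\bm{D}(\bm{S}^i)(\tilde{\bm{G}}-\bm{I}_b)\bm{D}(\bm{S}^i)^\top:\tilde{\bm{G}}\in O(b)\}$ for the set of primitives available at step $i$; each such set is a subgroup of $O(m)$ and in particular is closed under inversion. Hence expressing $\bm{R}=\prod_{i=1}^{c}\bm{G}_i$ with $\bm{G}_i\in O(\bm{S}^i)$ is equivalent to driving $\bm{R}$ to the identity by successive left multiplications, i.e.\ to producing $\bm{G}_i\in O(\bm{S}^i)$ with $\bm{G}_c^{-1}\cdots\bm{G}_1^{-1}\bm{R}=\bm{I}_m$. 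Constructed this way, $\bm{G}_i$ is read off from the running product $\bm{G}_{i-1}^{-1}\cdots\bm{G}_1^{-1}\bm{R}$ together with $\bm{S}^i$, hence depends only on the data available in steps $1,\dots,i$; this is exactly the ``moreover'' assertion.

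Next I would partition the $c$ steps into $m-1$ consecutive phases, each of length $N=\lfloor c/m\rfloor\ge\alpha\ln(m)(m/b)^2$, and maintain the invariant that after phase $j$ the running product is $\operatorname{diag}(\bm{I}_j,\bm{M}')$ with $\bm{M}'\in O(m-j)$, i.e.\ its first $j$ columns equal $\bm{e}_1,\dots,\bm{e}_j$ (orthogonality then forces the first $j$ rows too). In phase $j$ the pivot coordinate is fixed to $j$: a phase-$j$ set $\bm{S}^i$ with $j\notin\bm{S}^i$ is skipped ($\bm{G}_i=\bm{I}_m$), and for one with $j\in\bm{S}^i$ we let $\tilde{\bm{G}}_i$ act only on the coordinates $\{j\}\cup(\bm{S}^i\cap\{j+1,\dots,m\})$, identity on the rest of $\bm{S}^i$, chosen to zero out exactly the entries of $\bm{S}^i\cap\{j+1,\dots,m\}$ in the $j$-th column of the current matrix and move their mass onto coordinate $j$ (a standard block-annihilation map). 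Since $\tilde{\bm{G}}_i$ is the identity on $\{1,\dots,j-1\}$, columns and rows $1,\dots,j-1$ are untouched. Therefore, if the phase-$j$ sets containing $j$ jointly cover $\{j+1,\dots,m\}$, processing them in order clears column $j$ for \emph{every} $\bm{R}$ and advances the invariant; after phase $m-1$ only a $2\times2$ orthogonal block on coordinates $\{m-1,m\}$ remains, which one set containing both $m-1$ and $m$ sends to $\bm{I}_2$ (the determinant $-1$ case is available because we allow all of $O(b)$), completing the reduction to $\bm{I}_m$.

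It remains to show that with the stated probability every phase is ``good''. Fix a phase $j$ and $q\in\{j+1,\dots,m\}$; a uniform $b$-subset of $\{1,\dots,m\}$ contains the pair $\{j,q\}$ with probability $p=\tfrac{b(b-1)}{m(m-1)}=\Theta((b/m)^2)$, so the probability that none of the $N$ sets of phase $j$ contains $\{j,q\}$ is at most $(1-p)^N\le e^{-pN}=m^{-\Theta(\alpha)}$, since $pN=\Theta(\alpha\ln m)$. A union bound over the $\le m$ coordinates $q$ and the $\le m$ phases gives total failure probability $\le m^2\cdot m^{-\Theta(\alpha)}\le m^{-(\alpha-2)}$ once the universal constant hidden in the hypothesis $c\ge\alpha m\ln(m)(m/b)^2$ is fixed (the final $2\times2$ phase is subsumed, needing only one pair-hitting set). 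On the complementary event the construction above writes any $\bm{R}\in O(m)$ as the desired product of $c$ primitives.

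\textbf{Where the difficulty lies.} The delicate point is the counting. A naive reduction realizing a single Givens rotation per sampled set needs $\Theta(m^2)$ rotations, hence $\Theta(m^2(m/b)^2)$ sampled sets -- far more than permitted. The improvement to $m\ln(m)(m/b)^2$ relies on letting each primitive annihilate a whole $\Theta(b)$-sized block at once (i.e.\ realize an arbitrary element of $O(b)$ on its support), which turns the per-column job into covering $\{j+1,\dots,m\}$ by the ``useful'' sets -- a coupon-collector-type event contributing the $\ln m$ factor. Making this correct for all $\bm{R}$ simultaneously, true with high probability, and organized so the phases stay independent and the union bound closes is the part requiring care; everything else (the reduction, the invariant maintenance, the reflection bookkeeping) is routine.
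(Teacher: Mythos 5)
Your proof is correct and follows essentially the same route as the paper's: a column-by-column (phase-by-phase) annihilation scheme where each sampled support $\bm{S}^i$ containing the current pivot $j$ is used to zero the pivot column's entries on $\bm{S}^i\cap\{j+1,\dots,m\}$ (the paper performs the equivalent operation on $\bm{R}^\top$ after reversing indices, observing that an orthogonal upper-triangular matrix with nonnegative diagonal must be the identity), followed by the same coupon-collector event that every pair $\{j,q\}$ is hit in the allotted window and a two-level union bound over $q$ and $j$. The only differences are cosmetic — you invoke that each $O(\bm{S}^i)$ is a subgroup to drive $\bm{R}$ to $\bm{I}$ by left-multiplication by inverses, use equal-length phases $N=\lfloor c/m\rfloor$ rather than the paper's running windows $c_k$, and you explicitly flag the $\det=-1$ fix for the terminal block — and both arguments have the same small constant slippage (the true pair-hit probability is $b(b-1)/(m(m-1))$, slightly below $b^2/m^2$), which you acknowledge and the paper silently rounds.
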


\vspace{-1.1mm}
The proof of this lemma can be found in Appendix~\ref{app:proofs}. The result extends to Block-stochastic SPO as this is strictly more expressive than fully stochastic SPO.
The key idea of the proof is similar to the factorization of orthogonal matrices into a product of Givens rotations. Indeed, by multiplying 
 $\bm{R}^\top$ with properly chosen primitive matrices $\bm{G}_i$  we can create zeros below the diagonal for one column after another. Note that each $\bm{G}_i$ has $b(b-1)/2$ parameters while $\bm{R}$ has $m(m-1)/2$ parameters, which implies that generally at least $\Omega((m/b)^2)$ primitives are necessary. In Appendix~\ref{app:proofs} we also provide a heuristic that  with high probability for $c=O(\ln(m)(m/b)^2)$  every orthogonal matrix can be written as a product of $c$ orthogonal primitives $\bm{G}_i$.

 \textbf{Inductive bias}. POET-reparameterized neurons result in neural networks that maintain identical architecture and parameter count during inference as conventionally trained networks. While standard training could technically learn equivalent parameters, they consistently fail to do so in practice. This indicates POET provides a unique inductive bias unavailable through standard training. POET also aligns with prior findings in \cite{gunasekar2017implicit,arora2019implicit} that optimizing factorized matrices yields implicit inductive bias.

\begin{figure}[t]
    \centering
    \setlength{\abovecaptionskip}{2pt}
    \setlength{\belowcaptionskip}{-8pt}
    \renewcommand{\captionlabelfont}{\scriptsize}
    \vspace{-1.75mm}
    \includegraphics[width=.99\textwidth]{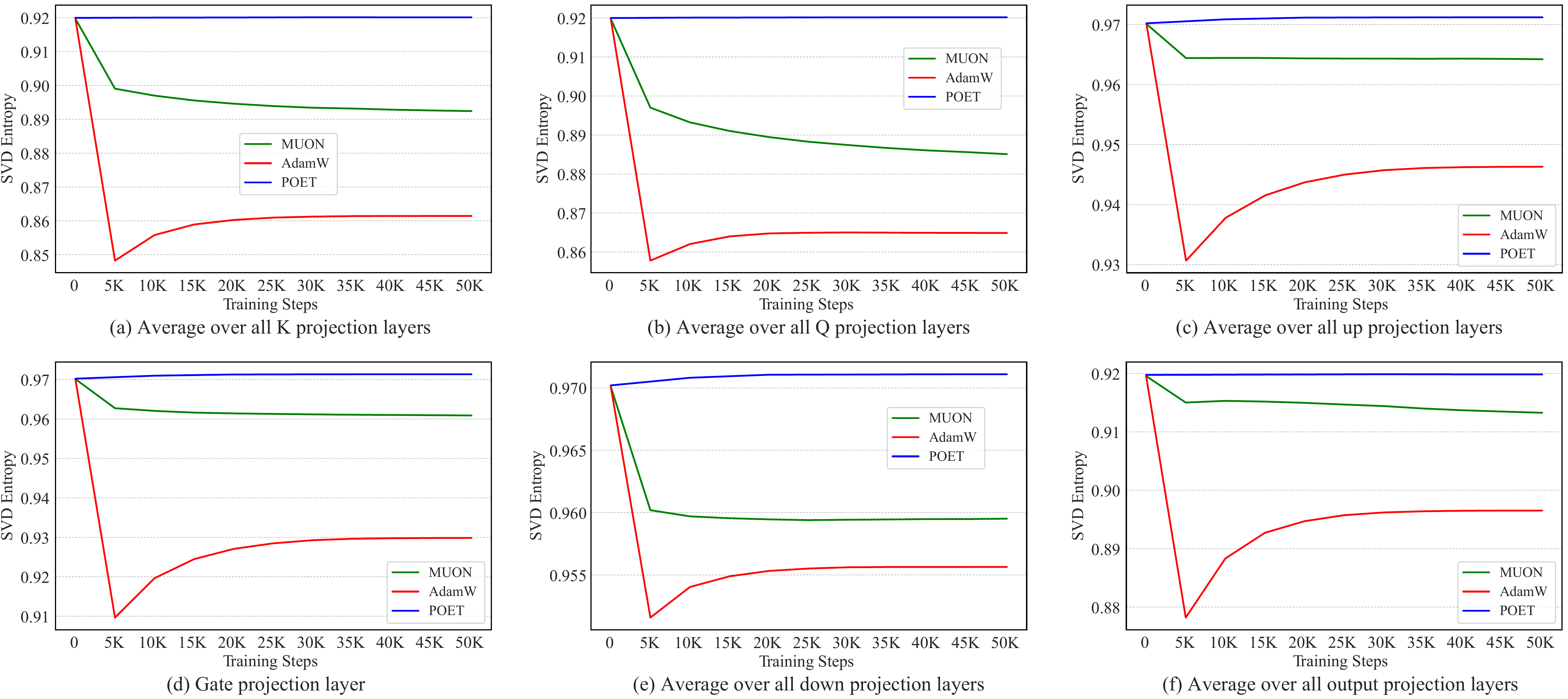}
    \caption{\scriptsize Dynamics comparison of average singular value entropy (singular value diversity) between direct training (AdamW, Muon) and POET.}
    \label{fig:svd_entropy}
\end{figure}

\textbf{Dynamics of singular spectrum}. Inspired by \cite{liu2025muon}, we conduct a spectral analysis by comparing the singular spectrum among AdamW, Muon and POET. Following \cite{alter2000singular,roy2007effective}, we compute SVD entropy of the trained Llama-60M model at different iteration. Specifically, the SVD entropy, defined as $H(\bm{\sigma})=\frac{1}{\log n}\sum_i\frac{\sigma_i^2}{\sum_j \sigma_j^2}\log \frac{\sigma_i^2}{\sum_j \sigma_j^2}$ measures the diversity of singular values; higher entropy indicates a more uniform and diverse spectrum. \cite{liu2025muon} attributes the favorable performance of Muon over AdamW to the more diverse spectrum of weight matrices updates. As shown in Figure~\ref{fig:svd_entropy}, POET consistently maintains high spectral diversity throughout training, owing to its orthogonal equivalence transformation. Therefore, POET can better explore diverse optimization directions. %

\vspace{0.25mm}

\textbf{POET minimizes hyperspherical energy}. While POET generalizes energy-preserving training to spectrum-preserving training, it still ensures low hyperspherical energy when initialized with a zero-mean isotropic Gaussian distribution. This is significant, as POET retains the generalization benefits of minimal hyperspherical energy~\cite{liu2018learning,liu2021learning,liu2021orthogonal}. This property comes from the invariance of zero-mean isotropic Gaussian to orthogonal transformation. \cite{liu2021orthogonal} shows that for a weight matrix $\bm{W}$ initialized by zero-mean isotropic Gaussian, its neurons, after being normalized, are uniformly distributed on the unit hypersphere. This property provably leads to a small hyperspherical energy. Since zero-mean isotropic Gaussian is invariant to orthogonal transformation, $\text{OET}(\bm{W};\bm{R},\bm{P})=\bm{R}\bm{W}\bm{P}$ does not change the distribution of $\bm{W}$, \ie, $\bm{R}\bm{W}\bm{P}=_{d}\bm{W}$. We give a derivation in Appendix~\ref{app:poet_energy}. Therefore, the transformed neurons of the new weight matrix $\bm{R}\bm{W}\bm{P}$ are also uniformly distributed over the unit hypersphere after being normalized. This validates that POET provably preserves a small energy.

\vspace{0.25mm}

\begin{wrapfigure}{r}{0.3\linewidth}
\vspace{-1.3em}
\centering
\includegraphics[width=.985\linewidth]{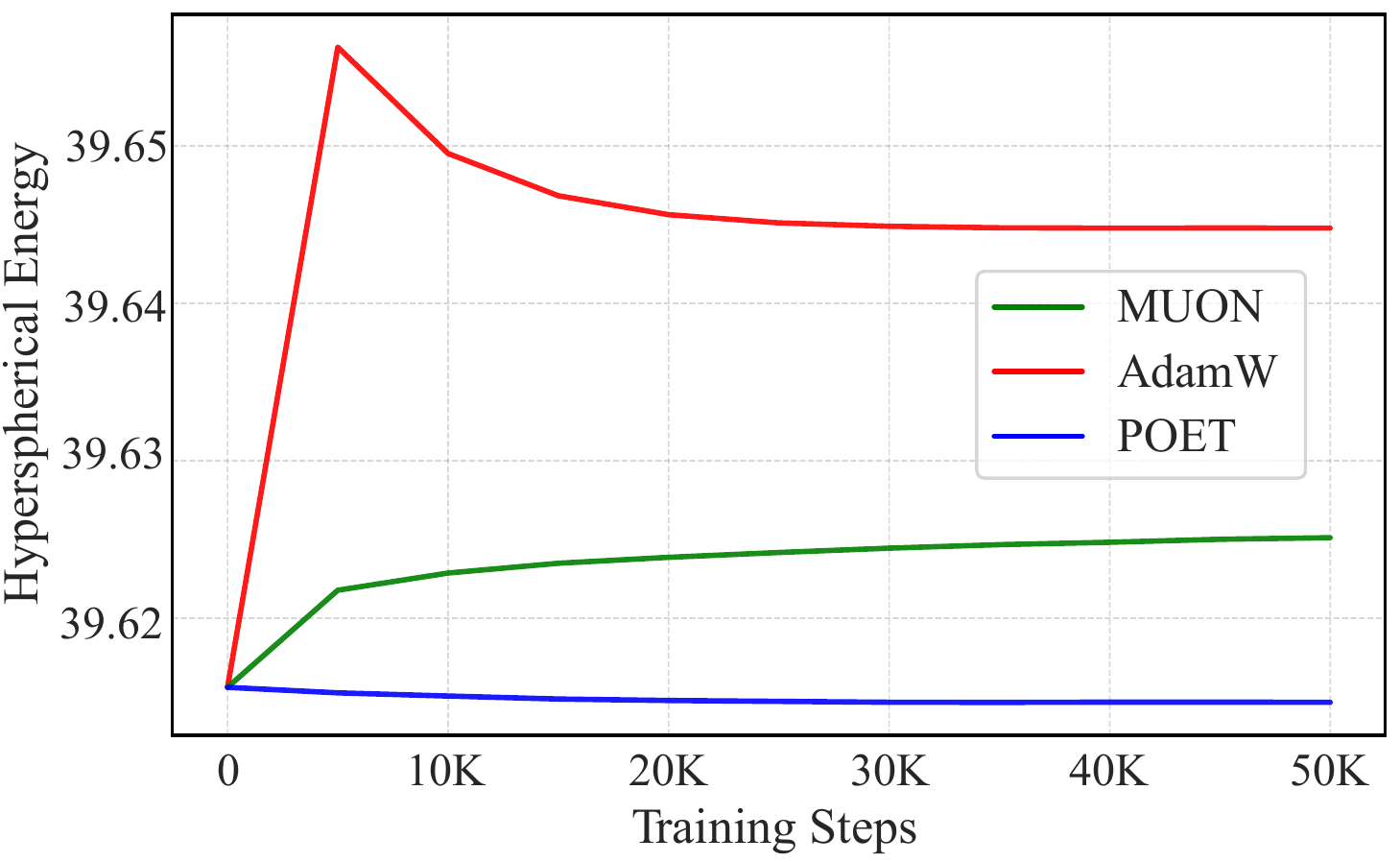}
  \vspace{-1.8em}
    \caption{\scriptsize Hyperspherical energy comparison between AdamW, Muon and POET.}
    \label{fig:empirical_energy}
\vspace{-1.3em}
\end{wrapfigure}

The property that POET preserves singular spectrum while retaining small hyperspherical energy well justifies why POET yields stable training and good generalization. Such a property only holds true when the weight initialization follows zero-mean isotropic Gaussian, which is perfectly satisfied by current weight initialization. It may also justify why zero-mean isotropic Gaussian initialization works better than the uniform spectrum initialization in our experiments. To further validate that POET maintains a small hyperspherical energy, we plot the hyperspherical energy of the Llama-60M model at different iterations in Figure~\ref{fig:empirical_energy}. We compare the sum of the hyperspherical energy of all the layers trained by AdamW, Muon or POET. The results again verify POET's energy-minimizing property. Different from previous orthogonal training~\cite{liu2021orthogonal}, POET can not preserve exactly the same hyperspherical energy during training, but it can well minimize hyperspherical energy in an expectation sense. We also find that Muon can better minimize hyperspherical energy than AdamW.

\vspace{-1.7mm}
\section{Experiments and Results}
\vspace{-1.5mm}
We start by evaluating POET on large-scale LLaMA pretraining, followed by an extensive ablation study to justify our design choices. Detailed settings and additional results are given in Appendices. 
\vspace{-1.2mm}
\subsection{LLM Pretraining using LLaMA Transformers}
\vspace{-1.15mm}

\setlength{\columnsep}{9pt}
\begin{wraptable}{r}[0cm]{0pt}
\hspace{-2.1mm}
    \centering
    \setlength{\tabcolsep}{4.5pt}
    \renewcommand{\arraystretch}{1.27}
    \scriptsize
    \begin{tabular}{lcccc}
    \specialrule{0em}{0pt}{-13.5pt}
    \textbf{Model} (\# tokens) & \textbf{60M} (30B) & \textbf{130M} (40B) & \textbf{350M} (40B)  & \textbf{1.3B} (50B)\\
    \shline
    AdamW & 26.68 {\tiny (25.30M)} & 20.82 {\tiny (84.93M)} & 16.78 {\tiny (302.38M)}& 14.73 {\tiny (1.21B)} \\
    Galore & 29.81 {\tiny (25.30M)} & 22.35 {\tiny (84.93M)} & 17.99 {\tiny (302.38M)} & 18.33 {\tiny (1.21B)} \\
    LoRA\textsubscript{r=64} & 39.70 {\tiny (4.85M)} & 32.07 {\tiny (11.21M)} & 25.19 {\tiny (30.28M)} & 20.55 {\tiny (59.38M)} \\\hline
    \rowcolor{Gray}
    POET\textsubscript{BS,$b$=64} & 29.52 {\tiny (2.39M)} & 24.52 {\tiny (5.52M)} & 20.29 {\tiny (14.90M)} & 18.28 {\tiny (29.22M)} \\\rowcolor{Gray}
    POET\textsubscript{BS,$b$=128} & 26.90 {\tiny (4.81M)} & 21.86 {\tiny (11.12M)} & 18.05 {\tiny (30.04M)} & 16.24 {\tiny (58.91M)}\\\rowcolor{Gray}
    POET\textsubscript{BS,$b$=256} & \textbf{25.29} {\tiny (9.66M)} & \textbf{19.88} {\tiny (22.33M)} & 16.27 {\tiny (60.32M)} & 14.56 {\tiny (118.26M)}\\\hline\rowcolor{Gray}
    POET\textsubscript{FS,$b$=1/8} & 34.06 {\tiny (0.53M)} & 29.67 {\tiny (1.78M)} & 24.61 {\tiny (6.34M)} & 18.46 {\tiny (25.39M)} \\\rowcolor{Gray}
     POET\textsubscript{FS,$b$=1/4} & 28.69 {\tiny (2.13M)} & 23.55 {\tiny (7.13M)} & 19.42 {\tiny (25.44M)} & 17.60 {\tiny (101.66M)} \\\rowcolor{Gray}
     POET\textsubscript{FS,$b$=1/2} & 25.37 {\tiny (8.54M)} & 19.94 {\tiny (28.56M)} & \textbf{15.95} {\tiny (101.86M)} & \textbf{13.70} {\tiny (406.88M)}\\\specialrule{0em}{-6.25pt}{0pt}
    \end{tabular}
    \caption{\scriptsize Comparison of POET with popular pretraining methods using different sizes of LLaMA models. Validation perplexity and the number of trainable parameters are reported.}
    \label{tab:main_results}
    \vspace{-2mm}
\end{wraptable}

We perform the pretraining experiments on the Llama transformers of varying sizes (60M, 130M, 350M, 1.3B) for POET. We use the C4 dataset~\cite{raffel2020exploring}, a cleaned web crawl corpus from Common Crawl, widely used for LLM pretraining~\cite{zhao2024galore,ma2025swansgdnormalizationwhitening,huang2025spamspikeawareadammomentum}. For POET-BS, $b$ is the block size of the block-diagonal orthogonal matrix. For POET-FS, $b_{\text{in}}$=$bm$ for $\bm{R}$ and $b_{\text{out}}$=$bn$ for $\bm{P}$.  We compare POET against GaLore~\cite{zhao2024galore}, a low-rank pretraining method, and AdamW, the standard pretraining optimizer. We generally follow the settings in~\cite{zhao2024galore}. To better simulate the practical pretraining setting, we significantly increase the number of training tokens for all methods.

Table~\ref{tab:main_results} shows that both POET-FS ($b$=1/2) and POET-BS ($b$=256) consistently outperform both GaLore and AdamW with significantly fewer parameters. For LLaMA-1B, POET-FS ($b$=1/2) yields the best overall performance, achieving a validation perplexity of $13.70$, much better than AdamW ($14.73$) and GaLore ($18.33$). Block-stochastic POET with $b$=256 achieves the second-best performance \looseness=-1 {\parfillskip=0pt\par}

\begin{wrapfigure}{r}{0.6\linewidth}
\vspace{-0.235em}
\centering
    \begin{subfigure}[b]{0.295\textwidth}
        \includegraphics[width=\textwidth]{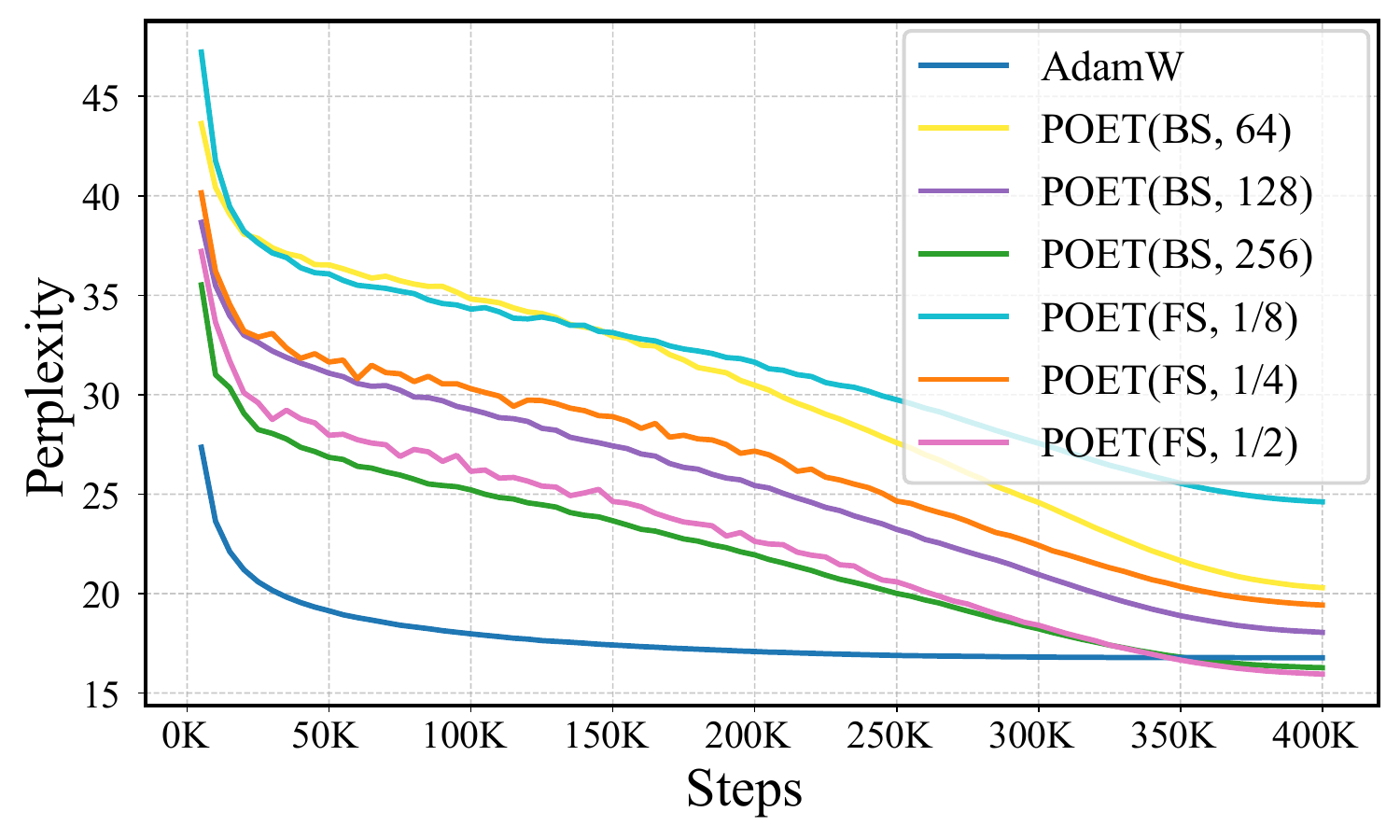}
    \end{subfigure}
    \hfill
    \begin{subfigure}[b]{0.295\textwidth}
        \includegraphics[width=\textwidth]{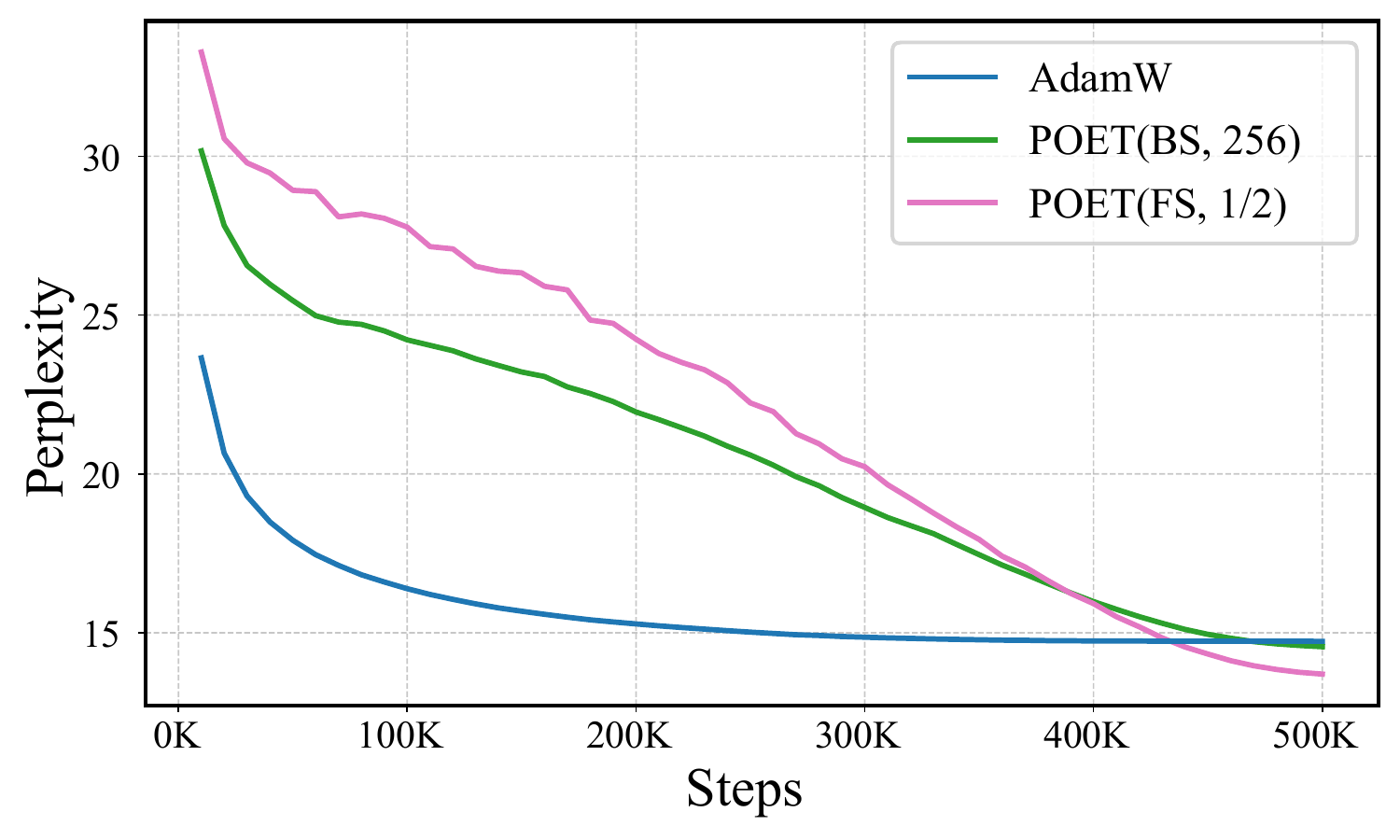}
    \end{subfigure}
    \vspace{-1.8em}
    \caption{\scriptsize Validation perplexity dynamics on LLaMA-350M and LLaMA-1.3B.}
    \label{fig:traindyanamics}
\vspace{-0.6em}
\end{wrapfigure}

($14.56$), which still surpasses AdamW  with only one-tenth of AdamW's trainable parameters. Similar patterns can be observed for models of smaller sizes. Moreover, we compare the training dynamics between AdamW and POET in Figure~\ref{fig:traindyanamics}. The training dynamics of POET is quite different from AdamW. After an initial rapid drop in perplexity, POET improves more slowly than AdamW. As seen in Phase II (Figure~\ref{fig:phase}), this slower but stable progress can lead to better performance in later stages. We attribute this intriguing phenomenon to the unique reparameterization of POET and How we efficiently approximate orthogonality. The exact mechanism behind this phenomenon remains an open question, and understanding it could offer valuable insights into large-scale model training.

\begin{wrapfigure}{r}{0.31\linewidth}
\vspace{-1.1em}
\centering
        \includegraphics[width=.31\textwidth]{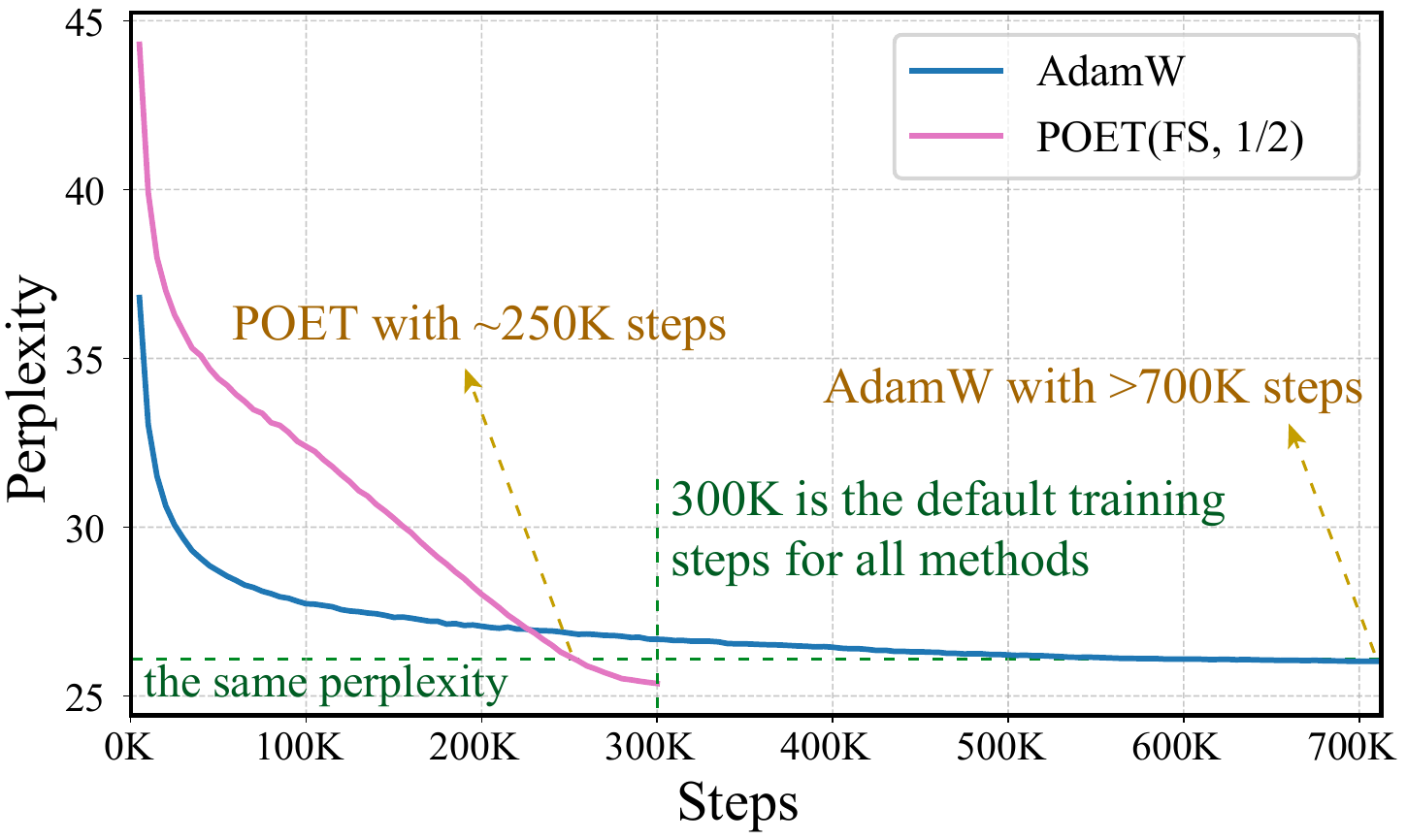}
    \vspace{-1.85em}
    \caption{\scriptsize Validation perplexity dynamics of POET (FS, $b$=1/2) and AdamW on Llama-60M. POET outperforms the AdamW trained with almost twice the number of seen tokens.}
    \label{fig:speedup}
\vspace{-0.6em}
\end{wrapfigure}

To highlight POET's non-trivial performance improvement, we increase the training steps (\ie, effectively tokens seen) for AdamW, and find that POET-FS ($b$=1/2) still outperforms AdamW even even if AdamW is trained with almost triple the number of tokens. Results are given in Figure~\ref{fig:speedup}. In this experiment, the AdamW learning rate was carefully tuned for the full training run, and no training tokens were repeated. Thus, the improvement is non-trivial and cannot be attributed to merely increasing training steps. Interestingly, we also observe from Table~\ref{tab:main_results} that POET's performance appears strongly correlated with the parameter budget and larger budgets consistently yield better results across model scales. This is particularly important for model scaling law~\cite{kaplan2020scaling}. Another notable observation is that POET significantly outperforms LoRA~\cite{hu2022lora} given a similar parameter budget. For instance, with approximately 30M trainable parameters, POET attains a validation perplexity of $18.05$, significantly better than LoRA's $25.19$. We further observe that the block-stochastic variant is more parameter-efficient than the fully stochastic one. On the 130M model, it achieves a validation perplexity of $19.88$ with nearly 6M fewer trainable parameters, compared to $19.94$ for the fully stochastic variant. We hypothesize that this is due to better coverage of weight parameters. Specifically, the block-stochastic variant ensures all corresponding weights are updated at each step, unlike the more uneven updates in the fully stochastic variant.  Experimental details and results on weight update coverage are provided in Appendix~\ref{app:coverage}.

\setlength{\columnsep}{11pt}
\begin{wraptable}{r}[0cm]{0pt}
\scriptsize
\centering
\hspace{-2.1mm}
\setlength{\tabcolsep}{3pt}
\renewcommand{\arraystretch}{1.2}
  \begin{tabular}{lc}
  \specialrule{0em}{0pt}{-14pt}
    \textbf{Method} & \textbf{Perplexity}\\
    \shline
    AdamW & 19.61 \\
    \rowcolor{Gray} POET\textsubscript{FS,$b$=1/2} & \textbf{16.90} \\
    \specialrule{0em}{-6.25pt}{0pt}
  \end{tabular}
    \caption{\scriptsize Validation perplexity of training a LLaMA 3B model on 5 billion tokens.}
  \label{tab:3b_demo}
\vspace{-3mm}
\end{wraptable}

\textbf{Pretraining Llama-3B}. To demonstrate the scalability of POET, we apply POET to pretrain Llama with 3B parameters while reusing the same hyperparameters from the 1.3B model. This model was trained with only $1/10$ of the tokens used for the 1.3B model. Compared to Table~\ref{tab:main_results}, the slightly higher final perplexity is attributed to the reduced training data (5B tokens). Table~\ref{tab:3b_demo} shows that POET maintains the performance advantage over AdamW at the 3B scale, consistent with the conclusion drawn from Table~\ref{tab:main_results} for smaller models. Our findings confirm that the benefits of POET are not limited to smaller models but can extend robustly to larger scales.

\vspace{-2.2mm}
\subsection{LLM Finetuning on Downstream Tasks}
\vspace{-1.15mm} 

\setlength{\columnsep}{9pt}
\begin{wraptable}{r}[0cm]{0pt}
\hspace{-2.1mm}
    \centering
    \setlength{\tabcolsep}{4.5pt}
    \renewcommand{\arraystretch}{1.24}
    \scriptsize
    \begin{tabular}{lccccccccc}
    \specialrule{0em}{0pt}{-14.7pt}
    \textbf{FT} & \textbf{Model} & \textbf{CoLA}  & \textbf{MNLI} & \textbf{MRPC}  & \textbf{QNLI} & \textbf{QQP}  & \textbf{RTE} & \textbf{SST-2}  & \textbf{STS-B} \\
    \shline
\multirow{2}{*}{Full FT} 
    & AdamW & 0.361 & 0.658 & 0.696 & 0.818 & 0.829 & 0.534 & 0.914 & \textbf{0.880} \\
    & \cellcolor{Gray}POET & \cellcolor{Gray}\textbf{0.523} & \cellcolor{Gray}\textbf{0.818} & \cellcolor{Gray}\textbf{0.824} & \cellcolor{Gray}\textbf{0.885} & \cellcolor{Gray}\textbf{0.902} & \cellcolor{Gray}\textbf{0.661} & \cellcolor{Gray}\textbf{0.920} & \cellcolor{Gray}0.873 \\ \hline

\multirow{2}{*}{OFT} 
    & AdamW & 0.388 & 0.774 & 0.689 & 0.842 & 0.867 & 0.531 & 0.915 & 0.762 \\
    & \cellcolor{Gray}POET & \cellcolor{Gray}\textbf{0.437} & \cellcolor{Gray}\textbf{0.812} & \cellcolor{Gray}\textbf{0.740} & \cellcolor{Gray}\textbf{0.855} & \cellcolor{Gray}\textbf{0.877} & \cellcolor{Gray}\textbf{0.538} & \cellcolor{Gray}\textbf{0.924} & \cellcolor{Gray}\textbf{0.791} \\ \hline

\multirow{2}{*}{POET} 
    & AdamW & 0.435 & 0.804 & 0.806 & 0.856 & 0.889 & 0.653 & 0.904 & 0.878 \\
    & \cellcolor{Gray}POET & \cellcolor{Gray}\textbf{0.505} & \cellcolor{Gray}\textbf{0.821} & \cellcolor{Gray}\textbf{0.826} & \cellcolor{Gray}\textbf{0.892} & \cellcolor{Gray}\textbf{0.902} & \cellcolor{Gray}\textbf{0.682} & \cellcolor{Gray}\textbf{0.931} & \cellcolor{Gray}\textbf{0.887} \\ \specialrule{0em}{-6.5pt}{0pt}
    \end{tabular}
    \caption{\scriptsize Downstream performance on GLUE between AdamW- and POET-pretrained models.}
    \label{tab:glue}
    \vspace{-3mm}
\end{wraptable}

To better evaluate models beyond the validation perplexity, we show the results of finetuning the trained model on the GLUE benchmark~\cite{wang2018glue}. This benchmark provides a comprehensive assessment of a model's language understanding capabilities through a diverse set of downstream tasks. The performance on GLUE serves as an important indicator of the transferability of the model's learned representations. Performance was measured using accuracy for MNLI, MRPC, QNLI, QQP, RTE, and SST-2; the Matthews Correlation Coefficient for CoLA; and the Pearson Correlation Coefficient for STS-B. For each task, we finetune the models for 10 epochs using a consistent learning rate of 2e-5. The evaluation covers several finetuning strategies: full finetuning (Full FT) with AdamW, orthogonal finetuning (OFT)~\cite{qiu2023controlling,liu2024boft}, and finetuning with POET. The results in Table~\ref{tab:glue} show that the POET-pretrained model consistently outperforms the AdamW-pretrained baseline across all tasks and finetuning methods. The results further validate POET's generalizability. In this setting, OFT uses significantly less parameters than both Full FT and POET, so OFT generally performs worse. However, the comparison between Full FT and POET is fair, since both methods update the entire model.

\vspace{-1mm}
\subsection{Ablation Studies and Empirical Analyses}
\vspace{-1mm}

\setlength{\columnsep}{11pt}
\begin{wraptable}{r}[0cm]{0pt}
\scriptsize
\centering
\hspace{-2.1mm}
\setlength{\tabcolsep}{3pt}
\renewcommand{\arraystretch}{1.2}
  \begin{tabular}{lc}
  \specialrule{0em}{0pt}{-14pt}
    \textbf{Scheme} & \textbf{Perplexity}\\
    \shline
    Standard & 26.22 \\
    Xavier & 25.79 \\\rowcolor{Gray}
    Uni. spectrum & 27.29 \\\rowcolor{Gray}
    Normalized & \textbf{25.37}\\
    \specialrule{0em}{-6.25pt}{0pt}
  \end{tabular}
    \caption{\scriptsize Performance of different initializations.}
  \label{tab:ab_init}
\vspace{-3mm}
\end{wraptable}

\textbf{Initialization schemes}. We empirically compare different random initialization schemes for POET, including two commonly used ones (standard Gaussian, Xavier~\cite{glorot2010understanding}) and two proposed ones (uniform spectrum, normalized Gaussian). Specifically, we use fully stochastic POET with $b$=1/2 to train Llama-60M on 30B tokens and report the validation perplexity in Table~\ref{tab:ab_init}. Results show that the normalized initialization will lead to the best final performance, and we stick to it as a default choice. Interestingly, uniform spectrum initialization performs poorly. This suggests a trade-off between preserving good weight spectral properties and achieving strong expressiveness. it may limit its expressiveness. Finding the optimal singular value structure for weights remains an important open problem.

\setlength{\columnsep}{11pt}
\begin{wraptable}{r}[0cm]{0pt}
\scriptsize
\centering
\hspace{-2.1mm}
\setlength{\tabcolsep}{4.5pt}
\renewcommand{\arraystretch}{1.2}
  \begin{tabular}{lc}
  \specialrule{0em}{0pt}{-23pt}
    $T_m$ & \textbf{Perplexity}\\
    \shline
    5 & 30.29 \\
    25 & 27.27 \\
    50 & 25.99 \\
    200 & 25.37 \\
    400 & \textbf{25.31} \\
    1600 & 25.58 \\
    \specialrule{0em}{-6.5pt}{0pt}
  \end{tabular}
    \caption{\scriptsize Val. perplexity of different $T_m$.}
  \label{tab:reset}
\vspace{-3mm}
\end{wraptable}

\textbf{Merge-then-reinitialize frequency}. The proposed merge-then-reinitialize trick allows POET to train only a small fraction of the large orthogonal matrices $\bm{R}$ and $\bm{P}$ per iteration, significantly reducing GPU memory usage. However, this trick also introduces a reinitialization frequency hyperparameter $T_m$, which determines how often the orthogonal matrix is merged and reset to the identity. The index set in POET-FS and the permutation matrix in POET-BS are also resampled at each reinitialization. 
Therefore, it is quite important to understand how this hyperparameter $T_m$ affects performance. Following the previous initialization experiment, we use POET-FS with $b$=1/2 to train Llama-60M on 30B tokens. We vary the reinitialization frequency from $5$ to $1600$ and report the validation perplexity in Table~\ref{tab:reset}. Results show that both $200$ and $400$ perform well. Therefore, we set $T_m = 400$ in all experiments by default.

\setlength{\columnsep}{11pt}
\begin{wraptable}{r}[0cm]{0pt}
\scriptsize
\centering
\hspace{-2.1mm}
\setlength{\tabcolsep}{3pt}
\renewcommand{\arraystretch}{1.21}
  \begin{tabular}{lc}
  \specialrule{0em}{0pt}{-23pt}
    \textbf{Scheme} & \textbf{Perplexity}\\
    \shline
    $k=0$ & Not converged \\
    $k=1$ & 22.56 \\
    $k=2$ & 21.54 \\
    $k=3$ & 20.22 \\
    $k=4$ & \textbf{20.19} \\
    \specialrule{0em}{-6.25pt}{0pt}
  \end{tabular}
    \caption{\scriptsize Number of terms in Neumann series.}
  \label{tab:neumann}
\vspace{-3mm}
\end{wraptable}
\vspace{-0.25mm}

\textbf{Neumann series approximation}. CNP approximates the matrix inverse using a Neumann series. As the number of Neumann terms directly influences the approximation quality, understanding its impact on model performance is essential. To this end, we evaluate how varying the number of Neumann terms affects performance, using POET-FS with $b$ = 1/2 to train LLaMA-130M. Results in Table~\ref{tab:neumann} show that increasing the number of Neumann terms generally improves validation perplexity. However, this also leads to slower training. Moreover, dropping all Neumann terms ($k=0$) leads to training divergence, highlighting the critical role of maintaining orthogonality. To balance overhead and performance, we find that $3$ Neumann terms ($k=3$) reach a good trade-off.

\begin{wrapfigure}{r}{0.6\linewidth}
\vspace{-2.45em}
\centering
        \includegraphics[width=.6\textwidth]{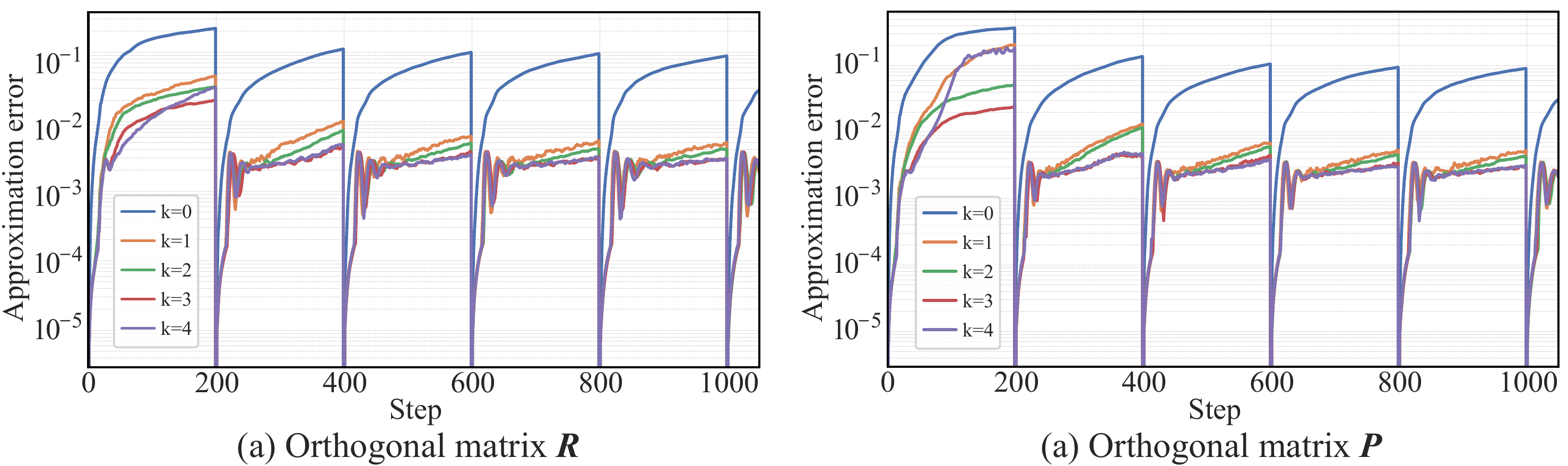}
    \vspace{-1.7em}
    \caption{\scriptsize Approximation error of orthogonal matrices $\bm{R}$ and $\bm{P}$ of a weight matrix.}
    \label{fig:orth_error_new}
\vspace{-0.6em}
\end{wrapfigure}
\vspace{-0.25mm}

Additionally, we evaluate the accuracy of the Neumann approximation to understand how the number of Neumann terms affects the orthogonality. The orthogonal approximation error is defined by $e_{\text{orth}} = {\| \bm{R} \bm{R}^T - \bm{I} \|_F} / {\| \bm{I} \|_F}$. We randomly select a weight matrix and compute the approximation error of corresponding orthogonal matrices $\bm{R}$ and $\bm{P}$. For clarity, we visualize the error in the initial 1000 training steps in Figure~\ref{fig:orth_error_new}. We observe that, with more Neumann terms, the orthogonal approximation error is indeed lower. Moreover, the merge-then-reinitialize trick can periodically reset the error. More results are given in Appendix~\ref{app:poet_finetune}

\vspace{-1.7mm}
\section{Related Work, Concluding Remarks and Acknowledgement}
\vspace{-1.7mm}

\textbf{Related work}. Inspired by low-rank adaptation methods such as LoRA~\cite{hu2022lora}, a number of recent approaches~\cite{lialin2023relora,zhao2024galore,liang2024memory,miles2024velora,huh2024training,han2024sltrain,liu2025cola,zhang2024q,jaiswal2024galore,chen2024fira,huang2025gradient,su2025galore,huang2024galore,liao2024galore} have explored low-rank structures to enable efficient pretraining of large language models (LLMs). In parallel, sparsity has also been extensively studied as a means to improve training efficiency in neural networks~\cite{dao2019learning,hoefler2021sparsity,chen2021scatterbrain,dao2022monarch,thangarasa2023spdf,yang2024s}. Compared to approaches that exploit low-rank structures, relatively few works have explored sparsity for pretraining. Our work broadly aligns with the sparse training paradigm, as POET leverages sparsely optimized orthogonal matrices to enhance training efficiency. A parallel line of research~\cite{zhang2024adam,jordan2024muon,liu2025muon,shah2025practical,mo2025parameter} focuses on developing efficient optimizers for large-scale neural networks. While our work also targets efficient training of large models, it is orthogonal to these efforts, as POET can be integrated with any optimizer. The way POET uses orthogonal matrices to transform neurons may also relate to preconditioned optimizers such as Muon~\cite{jordan2024muon}, Shampoo~\cite{gupta2018shampoo} and SOAP~\cite{vyas2024soap}, as well as to the broader field of manifold optimization (e.g.,~\cite{bonnabel2013stochastic}). POET-trained weight matrices remain statistically indistinguishable from randomly initialized ones due to the isotropy of zero-mean independent Gaussian distributions. This yields interesting connections to random neural networks~\cite{neal1996priors,wainrib2013topological,lee2017deep,hanin2023random,lee2017deep}, random geometry~\cite{hanin2019complexity}, and random matrix theory~\cite{edelman2005random}.

\textbf{Concluding remarks}. This paper introduces POET, a reparameterized training algorithm for large language models. POET models each neuron as the product of two orthogonal matrices and a fixed random weight matrix. By efficiently learning large orthogonal transformations, POET achieves superior generalization while being much more parameter-efficient than existing LLM pretraining methods. Experiments show that POET is broadly applicable to both pretraining and finetuning tasks.

\textbf{Acknowledgement}. The authors would like to sincerely thank Lixin Liu, Han Shi, Gege Gao, Zhen Liu and many colleagues at Max Planck Institute for Intelligent Systems for many helpful suggestions. Additionally, the authors also sincerely thank all the anonymous NeurIPS reviewers for their constructive suggestions that greatly improved the quality of our work.

{
\footnotesize
\bibliographystyle{plain}
\bibliography{ref}
}

\newpage

\newpage
\appendix

\addcontentsline{toc}{section}{Appendix} %
\renewcommand \thepart{} %
\renewcommand \partname{}
\part{\Large{\centerline{Appendix}}}
\parttoc

\newpage
\renewcommand{\captionlabelfont}{\small}

\section{Delving into POET's Three Training Phases}\label{app:phase_change}

\subsection{More Details on Vector Probing}

The three training phases of POET are summarized from the empirical observation of the vector probing results. The idea of vector probing is very straightforward. We generate a constant vector $\bm{v}$ that is randomly initialized. Then we let it to be transformed by the learned orthogonal matrices $\bm{R}$ and $\bm{P}$. Finally, we compute the cosine of their angle: $\bm{v}^\top\bm{R}\bm{v}$ and $\bm{v}^\top\bm{P}\bm{v}$. In this process, the probing vector $\bm{v}$ is always fixed. The full results are given in Appendix~\ref{app:angle}.

Beyond a particular constant probing vector, we also consider a set of randomly sampled probing vectors that follow our proposed normalized Gaussian initialization. Specifically, we consider the following expectation:
\begin{equation}
    \mathbb{E}_{\bm{v}\sim\mathbb{S}^{m-1}}\{\bm{v}^\top\bm{R}\bm{v}\},
\end{equation}
where $\bm{v}$ is a vector initialized by normalized Gaussian distribution (thus uniformly distributed on a unit hypersphere $\mathbb{S}^{m-1}$). Because $\mathbb{E}\{\bm{v}\bm{v}^\top\}=\frac{1}{m}$, then we have that
\begin{equation}
    \mathbb{E}_{\bm{v}\sim\mathbb{S}^{m-1}}\{\bm{v}^\top\bm{R}\bm{v}\}=\frac{1}{m}\text{Tr}(\bm{R}).
\end{equation}
where $\text{Tr}(\cdot)$ denotes the matrix trace. Its geometric interpretation is the cosine of the rotation angle between $\bm{v}$ and $\bm{R}\bm{v}$.

Next, we look into the variance of $q(x)=\bm{v}^\top\bm{R}\bm{v}$ (we simplify the expectation over the unit hypersphere to $\mathbb{E}$):
\begin{equation}
    \text{Var}(q(x))=\mathbb{E}\{(\bm{v}^\top\bm{R}\bm{v})^2\}-(\mathbb{E}\{\bm{v}^\top\bm{R}\bm{v}\})^2.
\end{equation}
First we compute $\mathbb{E}\{(\bm{v}^\top\bm{R}\bm{v})^2\}$:
\begin{equation}
\begin{aligned}
    \mathbb{E}\{(\bm{v}^\top\bm{R}\bm{v})^2\}&=\frac{\text{Tr}(\bm{R})^2+2\left\| \frac{\bm{R}^\top+\bm{R}}{2} \right\|}{m(m+2)}\\
    &=\frac{\text{Tr}(\bm{R})^2+\text{Tr}(\bm{R}^2)+m}{m(m+2)}
\end{aligned}
\end{equation}
Then we compute $(\mathbb{E}\{\bm{v}^\top\bm{R}\bm{v}\})^2$:
\begin{equation}
    (\mathbb{E}\{\bm{v}^\top\bm{R}\bm{v}\})^2=\frac{\text{Tr}(\bm{R})^2}{m^2}.
\end{equation}
Finally, we combine pieces and have the final variance:
\begin{equation}
    \text{Var}(\bm{v}^\top\bm{R}\bm{v})=\frac{m+\text{Tr}(\bm{R}^2)+\frac{2\text{Tr}(\bm{R})^2}{m}}{m(m+2)}
\end{equation}
which shrinks at the order of $O(1/m)$. Therefore, when the dimension of orthogonal matrices is large, even if we use a fixed random probing vector $\bm{v}$, this rotation angle is quite consistent.

\subsection{Geometric Interpretation of the Trace of Orthogonal Matrices}

Let's delve deeper into the trace of orthogonal matrices. It generally represents how much a transformation preserves vectors in their original directions. Specifically, the trace indicates how much ``alignment'' or similarity there is between the original vectors and their images after transformation.

The trace of an orthogonal matrix $\bm{R}\in\mathbb{R}^{m\times m}$ can be written as
\begin{equation}
\text{Tr}(\bm{R})=\sum_{i=1}^m \bm{e}_i^\top\bm{R}\bm{e}_i
\end{equation}
where $\bm{e}_i,\forall i$ are unit basis vectors. This expression reveals that the trace measures the sum of inner products between each original direction $\bm{e}_i$ and its transformed version $\bm{R}\bm{e}_i$. Since $\bm{e}_i^\top\bm{R}\bm{e}_i$ can be interpreted as the cosine of the angle between $\bm{e}_i$ and $\bm{R}\bm{e}_i$, the trace thus reflects how much the orthogonal transformation aligns with or deviates from the original coordinate directions.

We also plot the trace of both $\bm{R}$ and $\bm{P}$ during the POET training. The results are shown in Figure~\ref{fig:r_left_trace} and Figure~\ref{fig:r_right_trace}. After dividing the trace by the orthogonal matrix dimension, we obtain that the result is generally in the range of $[0.6,0.65]$ after training. This is similar to the results of vector probing. Therefore, we empirically verify the conclusion that the expectation of vector probing results is $\frac{\text{Tr}(\bm{R})}{m}$ with a small variance.

\subsection{Empirical Observations}

The training dynamics of POET presents three geometry-driven phases. We note that these phase changes are based on empirical observation, and further theoretical understanding of this process remains an open problem.

\textbf{Phase I: conical-shell searching} rotates each orthogonal matrix $R$ and $P$ smoothly away from the identity while preserving their singular values, so the cosine similarity between transformed and initial weight vectors falls from $1$ to $\approx0.6$; this provides a spectrally well-conditioned ``cone'' in which learning can proceed safely. this phase serves the role of ``spectral warm-up''. By plotting the cosine similarity of any one layer, we always see the same graceful slide towards $0.6$–$0.65$, independent of model size, layer type, or whether you train with fully-stochastic or block-stochastic SPO. This phase carves out the thin ``shell'' in which subsequent learning lives.

\textbf{Phase II: stable learning on the conical shell} occupies the bulk of training: the angles to the initial vectors stay locked in that narrow band, optimization now shears weights \emph{within} the cone, and validation perplexity drops almost linearly because spectra remain frozen and gradients act only on meaningful directions. In this phase, the trace of the orthogonal matrices stay almost as a constant.

Specifically, we hypothesize that the orthogonal transforms have reached a ``good'' cone; thereafter they mostly shear vectors inside that shell, leaving the angle to the original vector unchanged. The spectrum continues to be exactly that of the random initial matrix, so gradients can no longer distort singular values and instead devote capacity to learning meaningful directions. Because the geometry is stabilized in this phase, the learning of patterns happen in a stable subspace. This stable learning phase takes up 80\% of the training time.

\textbf{Phase III: final adjusting} coincides with learning-rate decay; the orthogonal transforms barely move, making only tiny refinements to singular vectors, so additional steps yield diminishing returns. This phase is merely the LR cooldown; weights and spectra are already near their final configuration, so progress naturally slows.

\newpage
\section{Minimum Hyperspherical Energy in POET}\label{app:poet_energy}

We start by showing that orthogonal equivalence transformation in POET can provably obtain small hyperspherical energy for its transformed weight matrices. This result holds when the weight matrices are independently initialized by zero-mean isotropic Gaussian distribution. Both Xavier~\cite{glorot2010understanding} and Kaiming~\cite{he2015delving} initializations satisfy such a weight initialization condition. Orthogonal equivalence transformation is given by $\text{OET}(\bm{W};\bm{R},\bm{P})=\bm{R}\bm{W}\bm{P}$, where the input matrix $\bm{W}$ is multiplied on the left and on the right by orthogonal matrices $\bm{R}$ and $\bm{P}$, respectively.

When $\bm{W}$ is initialized by zero-mean isotropic Gaussian distribution, \cite{liu2021orthogonal} has shown that these random neurons, if normalized, are uniformly distributed on the unit hypersphere. This leads to a provably small hyperspherical energy for the randomly initialized weight matrix. In the following, we will show that after orthogonal equivalence transformation, the weight matrix still maintains a small hyperspherical energy.

We consider a weight matrix $\bm{W}\in\mathbb{R}^{m\times n}$ where each entry is \emph{i.i.d.} sampled from a zero-mean Gaussian distribution with variance the same variance $\sigma^2$, \ie, $W_{ij}\sim \mathcal{N}(0,\sigma^2)$. After applying orthogonal equivalence transformation, we have $\bm{W}^{\text{new}}=\bm{R}\bm{W}\bm{P}$ where $\bm{R}$ and $\bm{P}$ are two orthogonal matrices. Then we compute the distribution of $\bm{W}^{\text{new}}$. Because linear maps preserve Gaussianity, each entry of $\bm{W}^{\text{new}}$ is a finite linear combination of $W_{ij}$, and hence, $\bm{W}^{\text{new}}$ follows a joint Gaussian which can be fully characterized by mean and covariance.

The mean of $\bm{W}^{\text{new}}$ is given by
\begin{equation}
    \mathbb{E}[\bm{W}^{\text{new}}]=\bm{R}\cdot\mathbb{E}[\bm{W}]\cdot\bm{P}=\bm{R}\cdot\bm{0}\cdot\bm{P}=\bm{0}=\mathbb{E}[\bm{W}].
\end{equation}

For its covariance, we consider two generic entries $W^{\text{new}}_{ij}$ and $W^{\text{new}}_{i'j'}$:
\begin{equation}
\begin{aligned}
W^{\text{new}}_{ij}&=\sum_{k,l}R_{ik}W^{\text{new}}_{kl}P_{lj},\\
W^{\text{new}}_{i'j'}&=\sum_{u,v}R_{i'u}W^{\text{new}}_{uv}P_{vj'}.
\end{aligned}
\end{equation}
Then we compute the covariance between the two entries:
\begin{equation}
\begin{aligned}
    \text{Cov}(W^{\text{new}}_{ij},W^{\text{new}}_{i'j'})&=\sum_{k,l,u,v}R_{ik}R_{i'u}P_{lj}P_{vj'}\text{Cov}(W_{kl},W_{uv})\\
    &=\sigma^2(\bm{R}\bm{R}^\top)_{ii'}(\bm{P}^\top\bm{P})_{jj'}\\
    &=\sigma^2\delta_{ii'}\delta_{jj'}
\end{aligned}
\end{equation}
which implies the following resutls:
\begin{itemize}
    \item The covariance matrix is a diagonal matrix, so different entries of $\bm{W}^{\text{new}}$ are uncorrelated.
    \item Because $\bm{W}^{\text{new}}$ is a joint Gaussian and different entries are uncorrelated, each entry of $\bm{W}^{\text{new}}$ is independent.
    \item Each entry of $\bm{W}^{\text{new}}$ has identical variance $\sigma^2$.
\end{itemize}

To sum up, each entry of $\bm{W}^{\text{new}}$ is \emph{i.i.d.} $\mathcal{N}(0,\sigma^2)$, which is identical to the distribution of each entry of $\bm{W}$. Because we have $\bm{W}^{\text{new}}=_{d} \bm{W}$, we can conclude that, similar to $\bm{W}$, $\bm{W}_{\text{new}}$ also has provably small hyperspherical energy among neurons.

Despite being extremely simple, we find that this is in fact a significant result. Under zero-mean isotropic Gaussian initialization, spectrum-preserving training and energy-preserving training can be achieved simultaneously. It also partially explains why the proposed normalized Gaussian initialization achieves the best performance.

\newpage
\section{Guarantees of Weight Spectrum under POET}\label{app:weight_spectrum}

For standard Gaussian initialization where each element of the weight matrix $\bm{W}\in d\times n$ is sampled with a normal distribution, we have the following standard results~\cite{silverstein1985smallest,chafai2009singular}: 
\begin{mdframed}
\begin{equation}\label{standard_sinval_bound}
\begin{aligned}
    \frac{1}{\sqrt{d}}\sigma_{\max}({\bm{W}})&\xrightarrow[n\rightarrow\infty]{\text{a.s.}}1+\sqrt{\lambda}\\
    \frac{1}{\sqrt{d}}\sigma_{\min}({\bm{W}})&\xrightarrow[n\rightarrow\infty]{\text{a.s.}}1-\sqrt{\lambda}
\end{aligned}
\end{equation}
\end{mdframed}
which gives spectrum guarantees for weight matrices generated by the standard Gaussian initialization.

In the following, we give the spectrum guarantees for the normalized Gaussian initialization. We start by stating the following theorem from \cite{liu2021learning}:

\begin{theorem}\label{spectral}
Let $\thickmuskip=2mu \medmuskip=2mu \tilde{\bm{v}}_1,\cdots,\tilde{\bm{v}}_n\in \mathbb{R}^d$ be i.i.d. random vectors where each element follows the Gaussian distribution with mean $0$ and variance $1$. Then $\thickmuskip=2mu \medmuskip=2mu \bm{v}_1=\frac{\tilde{\bm{v}}_1}{\|\tilde{\bm{v}}_1\|_2},\cdots,\bm{v}_n=\frac{\tilde{\bm{v}}_n}{\|\tilde{\bm{v}}_n\|_2}$ are uniformly distributed on the unit hypersphere $\mathbb{S}^{d-1}$. If the ratio $\frac{n}{d}$ converges to a constant $\thickmuskip=2mu \medmuskip=2mu \lambda\in(0,1)$, asymptotically we have for $\thickmuskip=2mu \medmuskip=2mu \bm{W}=\{\bm{v}_1,\cdots,\bm{v}_n\}\in\mathbb{R}^{d\times n}$:
\begin{equation}\label{spectral_gap}
\begin{aligned}
    \lim_{n\rightarrow\infty}\sigma_{\max}(\bm{W})&\leq(\sqrt{d}+\sqrt{\lambda d})\cdot(\max_i\frac{1}{\|\tilde{\bm{v}}_i\|_2})\\
    \lim_{n\rightarrow\infty}\sigma_{\min}(\bm{W})&\geq(\sqrt{d}-\sqrt{\lambda d})\cdot(\min_i\frac{1}{\|\tilde{\bm{v}}_i\|_2})
\end{aligned}
\end{equation}
where $\sigma_{\max}(\cdot)$ and $\sigma_{\min}(\cdot)$ denote the largest and the smallest singular value of a matrix, respectively.
\end{theorem}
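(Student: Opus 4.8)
The plan is to reduce everything to the classical Bai--Yin asymptotics for rectangular Gaussian matrices, which the excerpt already records as Eq.~\eqref{standard_sinval_bound}, by peeling off the column normalization. The claim that the $\bm v_i$ are uniform on $\mathbb S^{d-1}$ is immediate and separate from the singular-value bounds: $\tilde{\bm v}_i\sim\mathcal N(\bm 0,\bm I_d)$ is rotationally invariant, so its normalization $\bm v_i=\tilde{\bm v}_i/\|\tilde{\bm v}_i\|_2$ has a rotation-invariant law on the sphere, hence the uniform one, and the $\bm v_i$ are independent because the $\tilde{\bm v}_i$ are.

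For the spectral bounds I would write $\bm W=\tilde{\bm W}\bm N^{-1}$, where $\tilde{\bm W}=[\tilde{\bm v}_1\mid\cdots\mid\tilde{\bm v}_n]\in\mathbb R^{d\times n}$ has i.i.d.\ $\mathcal N(0,1)$ entries and $\bm N=\mathrm{diag}(\|\tilde{\bm v}_1\|_2,\dots,\|\tilde{\bm v}_n\|_2)\in\mathbb R^{n\times n}$ is invertible. For the largest singular value, submultiplicativity of the operator norm gives $\sigma_{\max}(\bm W)=\|\tilde{\bm W}\bm N^{-1}\|\le\|\tilde{\bm W}\|\,\|\bm N^{-1}\|=\sigma_{\max}(\tilde{\bm W})\cdot\max_i\|\tilde{\bm v}_i\|_2^{-1}$. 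For the smallest singular value I would use that $\|\tilde{\bm W}\bm y\|\ge\sigma_{\min}(\tilde{\bm W})\|\bm y\|$ for \emph{every} $\bm y\in\mathbb R^n$ (valid since $\tilde{\bm W}$ is tall, $n\le d$), together with the invertibility of $\bm N^{-1}$: for any unit $\bm x$, $\|\tilde{\bm W}\bm N^{-1}\bm x\|\ge\sigma_{\min}(\tilde{\bm W})\|\bm N^{-1}\bm x\|\ge\sigma_{\min}(\tilde{\bm W})\,\sigma_{\min}(\bm N^{-1})$, and since $\bm N^{-1}$ is diagonal, $\sigma_{\min}(\bm N^{-1})=\min_i\|\tilde{\bm v}_i\|_2^{-1}$, whence $\sigma_{\min}(\bm W)\ge\sigma_{\min}(\tilde{\bm W})\cdot\min_i\|\tilde{\bm v}_i\|_2^{-1}$.

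Finally I would invoke Eq.~\eqref{standard_sinval_bound} applied to the Gaussian matrix $\tilde{\bm W}$: almost surely $\sigma_{\max}(\tilde{\bm W})/(\sqrt d+\sqrt{\lambda d})\to1$ and $\sigma_{\min}(\tilde{\bm W})/(\sqrt d-\sqrt{\lambda d})\to1$ as $n/d\to\lambda\in(0,1)$. Substituting these asymptotics into the two inequalities of the previous paragraph and passing to the limit yields exactly Eq.~\eqref{spectral_gap}.

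I do not expect a serious obstacle here: the two matrix inequalities are elementary and the substantive input---the Bai--Yin limits for $\tilde{\bm W}$---is quoted rather than reproved. The only points needing a little care are (i) that ``asymptotically $\sigma_{\max}(\bm W)\le(\sqrt d+\sqrt{\lambda d})\max_i\|\tilde{\bm v}_i\|_2^{-1}$'' has to be read as an almost-sure asymptotic statement, since every quantity involved ($\sqrt d$, the singular values, and $\max_i\|\tilde{\bm v}_i\|_2^{-1}$) varies with the dimension along the sequence; and (ii) that the $\sigma_{\min}$ lower bound genuinely relies on $\|\tilde{\bm W}\bm y\|\ge\sigma_{\min}(\tilde{\bm W})\|\bm y\|$ holding for all $\bm y$, which is why factoring off a \emph{square} normalization $\bm N^{-1}$ works even though $\tilde{\bm W}$ is rectangular.
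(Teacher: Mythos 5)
Your proposal is correct and follows essentially the same route as the paper: both write $\bm{W}$ as the Gaussian matrix $\tilde{\bm W}$ times a diagonal matrix of reciprocal column norms, bound the extreme singular values of the product by those of the factors, and then invoke the Bai--Yin asymptotics for $\tilde{\bm W}$. The only difference is how the intermediate singular-value inequalities are established: the paper specializes the Marshall--Olkin eigenvalue product inequalities (its Lemma on $\prod_t \lambda_{i_t}(\bm G\bm H)$) at $k=1$, whereas you obtain $\sigma_{\max}(\tilde{\bm W}\bm N^{-1})\le\sigma_{\max}(\tilde{\bm W})\,\sigma_{\max}(\bm N^{-1})$ and $\sigma_{\min}(\tilde{\bm W}\bm N^{-1})\ge\sigma_{\min}(\tilde{\bm W})\,\sigma_{\min}(\bm N^{-1})$ directly from submultiplicativity of the operator norm and from the variational characterization $\|\tilde{\bm W}\bm y\|\ge\sigma_{\min}(\tilde{\bm W})\|\bm y\|$ (valid since $\tilde{\bm W}$ is tall). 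Your version is arguably cleaner—it avoids the detour through eigenvalue majorization and the cyclic rearrangement $\lambda_{i}(\tilde{\bm W}\bm Q\bm Q^\top\tilde{\bm W}^\top)=\lambda_{i}(\tilde{\bm W}^\top\tilde{\bm W}\bm Q\bm Q^\top)$—but the substantive content and the quoted random-matrix input are identical.
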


\begin{proof}

We first introduce the following lemma as the characterization of a unit vector that is uniformly distributed on the unit hypersphere $\mathbb{S}^{d-1}$.
\begin{lemma}[\cite{o2016eigenvectors}]
Let $\bm{v}$ be a random vector that is uniformly distributed on the unit hypersphere $\mathbb{S}^{d-1}$. Then $\bm{v}$ has the same distribution as the following:
\begin{equation}
    \bigg{\{}\frac{u_1}{\sqrt{\sum_{i=1}^du_i^2}},\frac{u_2}{\sqrt{\sum_{i=1}^du_i^2}},\cdots,\frac{u_d}{\sqrt{\sum_{i=1}^du_i^2}}\bigg{\}}
\end{equation}
where $u_1,u_2,\cdots,u_d$ are \emph{i.i.d.} standard normal random variables.
\end{lemma}
\begin{proof}
The lemma follows naturally from the fact that the Gaussian vector $\{u_i\}_{i=1}^d$ is rotationally invariant.
\end{proof}

Then we consider a random matrix $\tilde{\bm{W}}=\{\tilde{\bm{v}}_1,\cdots,\tilde{\bm{v}}_n\}$ where $\tilde{\bm{v}}_i$ follows the same distribution of $\{u_1,\cdots,u_d\}$. Therefore, it is also equivalent to a random matrix with each element distributed normally. For such a matrix $\tilde{\bm{W}}$, we have from \cite{silverstein1985smallest} that
\begin{equation}\label{sinval_bound}
\begin{aligned}
    \lim_{n\rightarrow\infty}\sigma_{\max}(\tilde{\bm{W}})&=\sqrt{d}+\sqrt{\lambda d}\\
    \lim_{n\rightarrow\infty}\sigma_{\min}(\tilde{\bm{W}})&=\sqrt{d}-\sqrt{\lambda d}
\end{aligned}
\end{equation}
where $\sigma_{\max}(\cdot)$ and $\sigma_{\min}(\cdot)$ denote the largest and the smallest singular value, respectively. 
\par
Then we write the matrix $\bm{W}$ as follows:
\begin{equation}
\begin{aligned}
    \bm{W}&=\tilde{\bm{W}}\cdot\bm{Q}\\
    &=\tilde{\bm{W}}\cdot\begin{bmatrix}
    \frac{1}{\|\tilde{\bm{v}}_1\|_2}& 0 &\cdots& 0\\
    0 & \frac{1}{\|\tilde{\bm{v}}_2\|_2} & \ddots & 0\\
    \vdots & \ddots & \ddots  & \vdots  \\
    0 & \cdots & 0 & \frac{1}{\|\tilde{\bm{v}}_n\|_2}
    \end{bmatrix}
\end{aligned}
\end{equation}
which leads to 
\begin{equation}\label{lim_ineq}
\begin{aligned}
    \lim_{n\rightarrow\infty}\sigma_{\max}(\bm{W})&=\lim_{n\rightarrow\infty}\sigma_{\max}(\tilde{\bm{W}}\cdot\bm{Q})\\
    \lim_{n\rightarrow\infty}\sigma_{\min}(\bm{W})&=\lim_{n\rightarrow\infty}\sigma_{\min}(\tilde{\bm{W}}\cdot\bm{Q})
\end{aligned}.
\end{equation}
\par
We fist assume that for a symmetric matrix $\bm{A}\in\mathbb{R}^{n\times n}$ $\lambda_1(\bm{A})\geq\cdots\geq\lambda_n(\bm{A})$. Then we introduce the following inequalities for eigenvalues:
\begin{lemma}[\cite{marshall1979inequalities}]\label{eigenvalue_ineq}
Let $\bm{G},\bm{H}\in\mathbb{R}^{n\times n}$ be positive semi-definite symmetric, and let $1\leq i_1<\cdots<i_k\leq n$. Then we have that
\begin{equation}
    \prod_{t=1}^k\lambda_{i_t}(\bm{G}\bm{H})\leq\prod_{t=1}^k\lambda_{i_t}(\bm{G})\lambda_t(\bm{H})
\end{equation}
and
\begin{equation}
    \prod_{t=1}^k \lambda_{i_t}(\bm{G}\bm{H})\geq \prod_{t=1}^k\lambda_{i_t}(\bm{G})\lambda_{n-t+1}(\bm{H})
\end{equation}
where $\lambda_i$ denotes the $i$-th largest eigenvalue.
\end{lemma}

We first let $1\leq i_1<\cdots<i_k\leq n$. Because $\tilde{\bm{W}}\in\mathbb{R}^{d\times n}$ and $\bm{Q}\in\mathbb{R}^{n\times n}$, we have the following:
\begin{equation}
\begin{aligned}
    \prod_{t=1}^k\sigma_{i_t}(\tilde{\bm{W}}\bm{Q})&=\prod_{t=1}^k\sqrt{\lambda_{i_t}(\tilde{\bm{W}}\bm{Q}\bm{Q}^\top\tilde{\bm{W}}^\top)}\\
    &=\sqrt{\prod_{t=1}^k \lambda_{i_t}(\tilde{\bm{W}}^\top\tilde{\bm{W}}\bm{Q}\bm{Q}^\top)}
\end{aligned}
\end{equation}
by applying Lemma~\ref{eigenvalue_ineq} to the above equation, we have that
\begin{equation}
\begin{aligned}
    \sqrt{\prod_{t=1}^k \lambda_{i_t}(\tilde{\bm{W}}^\top\tilde{\bm{W}}\bm{Q}\bm{Q}^\top)}&\geq \sqrt{\prod_{t=1}^k \lambda_{i_t}(\tilde{\bm{W}}^\top\tilde{\bm{W}})\lambda_{n-t+1}(\bm{Q}\bm{Q}^\top)}\\
    &= \prod_{t=1}^k\sigma_{i_t}(\tilde{\bm{W}})\sigma_{n-t+1}(\bm{Q})
\end{aligned}
\end{equation}
\begin{equation}
\begin{aligned}
    \sqrt{\prod_{t=1}^k \lambda_{i_t}(\tilde{\bm{W}}^\top\tilde{\bm{W}}\bm{Q}\bm{Q}^\top)}&\leq \sqrt{\prod_{t=1}^k \lambda_{i_t}(\tilde{\bm{W}}^\top\tilde{\bm{W}})\lambda_{t}(\bm{Q}\bm{Q}^\top)}\\
    &= \prod_{t=1}^k\sigma_{i_t}(\tilde{\bm{W}})\sigma_{t}(\bm{Q})
\end{aligned}
\end{equation}
Therefore, we have that
\begin{equation}\label{sinval_geq}
    \prod_{t=1}^k\sigma_{i_t}(\tilde{\bm{W}}\bm{Q})\geq\prod_{t=1}^k\sigma_{i_t}(\tilde{\bm{W}})\sigma_{n-t+1}(\bm{Q})
\end{equation}
\begin{equation}\label{sinval_leq}
    \prod_{t=1}^k\sigma_{i_t}(\tilde{\bm{W}}\bm{Q})\leq\prod_{t=1}^k\sigma_{i_t}(\tilde{\bm{W}})\sigma_{t}(\bm{Q})
\end{equation}
Suppose we have $k=1$ and $i_1=n$, then Eq.~\eqref{sinval_geq} gives
\begin{equation}
    \sigma_n(\tilde{\bm{W}}\bm{Q})\geq\sigma_n(\tilde{\bm{W}})\sigma_n(\bm{Q})
\end{equation}
Then suppose we have $k=1$ and $i_1=1$, then Eq.~\eqref{sinval_leq} gives
\begin{equation}
    \sigma_1(\tilde{\bm{W}}\bm{Q})\leq\sigma_1(\tilde{\bm{W}})\sigma_1(\bm{Q})
\end{equation}
Combining the above results with Eq.~\eqref{sinval_bound} and Eq.~\eqref{lim_ineq}, we have that
\begin{equation}
\begin{aligned}
    \lim_{n\rightarrow\infty}\sigma_{\max}(\bm{W})=\lim_{n\rightarrow\infty} \sigma_{\max}(\tilde{\bm{W}}\cdot\bm{Q})&\leq\lim_{n\rightarrow\infty}\big(\sigma_{\max}(\tilde{\bm{W}})\cdot\sigma_{\max}(\bm{Q})\big)\\
    &=(\sqrt{d}+\sqrt{\lambda d})\cdot\max_i\frac{1}{\|\tilde{\bm{v}}_i\|_2}\\
    \lim_{n\rightarrow\infty}\sigma_{\min}(\bm{W})=\lim_{n\rightarrow\infty}\sigma_{\min}(\tilde{\bm{W}}\cdot\bm{Q})&\geq\lim_{n\rightarrow\infty}\big(\sigma_{\min}(\tilde{\bm{W}})\cdot\sigma_{\min}(\bm{Q})\big)\\
    &=(\sqrt{d}-\sqrt{\lambda d})\cdot\min_i\frac{1}{\|\tilde{\bm{v}}_i\|_2}
\end{aligned}
\end{equation}
which concludes the proof.
\end{proof}

Combing with the fact that
\begin{equation}
    \lim_{n\rightarrow\infty}\max\frac{\|\bm{v}_i\|_2}{\sqrt{d}}=\lim_{n\rightarrow\infty}\min\frac{\|\bm{v}_i\|_2}{\sqrt{d}}=1,
\end{equation}
we essentially have that
\begin{equation}
\begin{aligned}
    \lim_{n\rightarrow\infty}\sigma_{\max}(\bm{W})\rightarrow 1+\sqrt{\lambda},\\
    \lim_{n\rightarrow\infty}\sigma_{\min}(\bm{W})\rightarrow 1-\sqrt{\lambda}.
\end{aligned}
\end{equation}
which can be written to the following results:
\begin{mdframed}
\begin{equation}\label{final_sinval_bound}
\begin{aligned}
    \sigma_{\max}({\bm{W}})&\xrightarrow[n\rightarrow\infty]{\text{a.s.}}1+\sqrt{\lambda}\\
    \sigma_{\min}({\bm{W}})&\xrightarrow[n\rightarrow\infty]{\text{a.s.}}1-\sqrt{\lambda}
\end{aligned}
\end{equation}
\end{mdframed}
which shows that under our proposed normalized Gaussian initialization, the maximal and minimal singular values are well bounded by a constant that is only dependent on the size of weight matrix. These results justify the effectiveness of our proposed normalized Gaussian initialization in POET. 

\newpage
\section{Proofs of Lemma~\ref{le:approximation}}\label{app:proofs}

\begin{proof}[Proof of Lemma~\ref{le:approximation}]
We consider an orthogonal matrix $\bm{R}$ and orthogonal primitives $\bm{G}^i$ corresponding to 
uniformly random subsets
$\bm{S}^j\subset [m]$  of size $b$ as explained in the main text (see equation~\eqref{eq:fs_spo}). The main claim we need to prove is that given any 
vector $\bm{v}\in \mathbb{R}^m$ and
a set $\bm{S}\subset [m]$ with $k\in [m]$ we can find an orthogonal primitive matrix $\bm{G}$
corresponding to the set $\bm{S}$ such that 
\begin{align}\label{eq:rel_v}
\begin{split}
  (\bm{G}\bm{v})_l&= 0
  \quad \text{for $i\in \bm{S}$ with $l>k$}\\
  (\bm{G}\bm{v})_k&\geq 0
  \\
  (\bm{G}\bm{v})_l&=\bm{v}_l
  \quad \text{for $l\notin \bm{S}$.}
  \end{split}
\end{align}
Moreover, for all $\bm{w}\in\mathbb{R}^m$
with $\bm{w}_i=0$ for $i\geq k$ the relation
\begin{align}\label{eq:w_preserved}
      \bm{G}\bm{w}= \bm{w}
\end{align}
holds.
We can assume that the matrix $\bm{D}(\bm{S})=\{\bm{e}(s_1),\ldots,\bm{e}(s_b)\}$ contains the entries $s_i$ in ascending order.
Then we write
\begin{align}\label{eq:decom_v_tilde}
    \bm{D}(\bm{S})^\top\bm{v}=
    \begin{pmatrix}
        \bm{\tilde{v}}_1
        \\
        \bm{\tilde{v}}_2
    \end{pmatrix}
\end{align}
where $\bm{\tilde{v}}_1\in \mathbb{R}^{b_1}$ corresponds to the entries $s_i$ with $s_i<k$
and $\bm{\tilde{v}}_2\in \mathbb{R}^{b_2}$ to the remaining entries, in particular $s_{b_1+1}=k$ because $k\in \bm{S}$.
It is well known that for every vector $\bm{v}$ there is a rotation $\bm{Q}$ aligning $\bm{v}$ with the first standard basis vector, i.e., such that $\bm{Q}\bm{v}=\lambda \bm{e}(1)$ for some $\lambda\geq 0$. 
Consider such a matrix $\bm{\tilde{Q}}$ for the vector $\bm{\tilde{v}}_2$
and then define the orthogonal matrix
\begin{align}
    \bm{\tilde{G}}
    =
    \begin{pmatrix}   \bm{1}_{b_1}&\bm{0}_{b_1\times b_2}
        \\
        \bm{0}_{b_2\times b_1}
        &
        \bm{\tilde{Q}}
    \end{pmatrix}.
\end{align}
Careful inspection of \eqref{eq:fs_spo} implies that the last part of \eqref{eq:rel_v} is actually true for any $\tilde{\bm{G}}$ as the second term has   rows with all entries equal to zero for all $l\notin \bm{S}$.
For the first part we find
\begin{align}
  \bm{D}(\bm{S})  \bm{\tilde{G}} \bm{D}(\bm{S})^\top\bm{v}
  =\bm{D}(\bm{S})  \bm{\tilde{G}}
   \begin{pmatrix}
        \bm{\tilde{v}}_1
        \\
        \bm{\tilde{v}}_2
    \end{pmatrix}
    =\bm{D}(\bm{S})  
    \begin{pmatrix}
        \bm{\tilde{v}}_1
        \\
        \lambda\bm{e}(1)
    \end{pmatrix}
    =
    \sum_{i\leq b_1} \bm{e}(s_i)(\bm{\tilde{v}}_1)_i
    + \lambda \bm{e}(k).
\end{align}
Here we used $s_{b_1+1}=k$ in the last step.
Since in addition
\begin{align}
   (( \bm{1}_m-\bm{D}(\bm{S})\cdot \bm{1}_b\cdot \bm{D}(\bm{S})^\top )\bm{v})_l=
   0
\end{align}
for all $l \in \bm{S}$
we conclude that indeed
$(\bm{G}\bm{v})_l=0$
for $l\in \bm{S}$ and $l>k$,
$(\bm{G}\bm{v})_k\geq 0$.
The remaining statement \eqref{eq:w_preserved} follows from the observation that 
when decomposing as in \eqref{eq:decom_v_tilde}
we find
\begin{align}
    (\bm{D}(\bm{S}))^\top\bm{w}
    =
    \begin{pmatrix}
        \bm{\tilde{w}}_1
        \\
        \bm{0}_{b_2}
    \end{pmatrix}
\end{align}
(because $\bm{w}_i=0$ for $i\geq k$)
and therefore
\begin{align}
    (\bm{\tilde{G}-\bm{1}_b}) (\bm{D}(\bm{S}))^\top\bm{w}=\bm{0}_b
\end{align}
by definition of $\tilde{\bm{G}}$
and we find $\bm{G}\bm{w}=\bm{w}$.

The rest of the proof is straightforward by induction combined with a simple coin collector problem. For the rest of the proof it is convenient to reverse the indices, i.e., to consider products $\bm{G}_c\cdot\ldots\cdot \bm{G}_1$
Assume that we have chosen 
$\bm{G}_i$ for $i\leq c_{k}$
and some $c_k\in \mathbb{N}$ such that the product 
\begin{align}\label{eq:ind_hyp}
\bm{P}^k=
\bm{G}_{c_k}\cdot\ldots\cdot\bm{G}_1\cdot\bm{R}^\top
\end{align}
satisfies $\bm{P}^k_{l',k'}=0$
for all $k'<k$ and $l'>k'$
and $\bm{P}^k_{k',k'}\geq 0$
for $k'<k$.
Let $c_{k+1}\geq c_k + \alpha(m/b)^2\ln(m)$. 
Then, we can bound for any $l> k$ the probability that there is no
$c_k<j\leq c_{k+1}$ such that
$\{k,l\}\subset \bm{S}^j$
using that $\bm{S}^j$ follows a uniform i.i.d.\ distribution by 
\begin{align}
    \mathbb{P}(\not\exists\, c_k<j\leq c_{k+1}:\, k,l\in \bm{S}^j)
    \leq \left(1-\frac{b^2}{m^2}\right)^{c_{k+1}-c_k}\leq \exp\left(-\frac{b^2}{m^2}\cdot \alpha\frac{m^2}{b^2}\ln(m)\right)
    = m^{-\alpha}.
\end{align}
The union bound implies that with probability at least
$1-m^{-\alpha+1}$ there is for all $l>k$ a $c_k<j\leq c_{k+1}$ such that $\{k,l\}\subset \bm{S}^j$.
If this holds we set $\bm{G}_j$ 
for $c_k<j\leq c_{k_1}$
as constructed above if $k\in \bm{S}^j$ and $\bm{G}_j=\bm{1}_m$ otherwise. This then ensures
that 
\begin{align}
\bm{P}^{k+1}=
\bm{G}_{c_{k+1}}\cdot\ldots\cdot\bm{G}_1\cdot \bm{R}^\top
\end{align}
satisfies $\bm{P}^{k+1}_{l',k'}=0$
for $k'\leq k$ and $l'>k'$.
For $k'<k$ this follows from \eqref{eq:w_preserved} and for $k'=k$ from \eqref{eq:rel_v}.
We conclude by the union bound that $\bm{P}^m$
is an upper triangular matrix with non-negative diagonal entries with probability at least $1-mm^{-\alpha+1}=1-m^{-(\alpha-2)}$.
But we also know that $\bm{P}^m$ is orthogonal and therefore satisfies $\bm{P}^m=\bm{1}_m$
and we thus find
\begin{align}
\bm{G}_{c_{m}}\cdot\ldots\cdot\bm{G}_1=\bm{R}.
\end{align}
\end{proof}

Next we give a heuristic that actually $O(\ln(m)m^2/b^2)$ terms are sufficient 
to express every orthogonal map as a product of stochsastic primitives.
For fixed $c$ we consider the map
\begin{align}
    \Phi:O(b)^c\to O(m)\quad \Phi(\bm{\tilde{G}}_1,\ldots,\tilde{\bm{G}_c})
    =\prod_{j=1}^c\bm{G}_j.
\end{align}
If $c\geq \alpha \ln(m)m^2/b^2$ we have that with probability at least $1-m^{-(\alpha-2)}$ 
for all $k,l\in [m]$ there is $j\leq c$ such that $k,l\in \bm{S}^j$. Assume that this is the case.
Recall that the tangent space of $O(k)$ at the identity is the space of skew-symmetric matrices.
Consider a tangent vector $(X_1, \ldots, X_c)$ with 
$X_i\in \mathrm{Skew}(k)$.
Then 
\begin{align}
D\Phi(\bm{1}_b,\ldots,\bm{1}_b)(\bm{X}_1,\ldots, \bm{X}_c)
    =\sum_{j=1}^c
    \bm{D}(\bm{S}^j)\cdot \bm{X}_j \cdot \bm{D}(\bm{S}^j)^\top.
\end{align}
This is a surjective map 
on $\mathrm{Skew}(m)$ 
under the condition that 
for all $k,l\in [m]$ there is $j\leq c$ such that $k,l\in \bm{S}^j$.
We can therefore  conclude that the image of $\Phi$ contains a neighbourhood of the identity. Moreover, since $\Phi$ is a polynomial map,
$D\Phi$ is surjective everywhere except for a variety of codimension one.
While this is not  sufficient to conclude that the image of $\Phi$ is $O(d)$ or dense in $O(d)$
it provides some indication that this is the case.

\newpage
\section{Experimental Details}\label{app:exp}

\begin{table}[h]
\small
\centering
\setlength{\tabcolsep}{6pt}
\renewcommand{\arraystretch}{1.35}
\begin{tabular}{l|cccc}
Parameter & Llama 60M & Llama 130M & Llama 350M & Llama 1.3B \\
\shline
Hidden dimension & 512 & 768 & 1024 & 2048 \\
Intermediate dimension & 1280 & 2048 & 2816 & 5376 \\
Number of attention heads & 8 & 12 & 16 & 32 \\
Number of hidden layers & 8 & 12 & 24 & 24 \\
\end{tabular}
\vspace{0.75mm}
\caption{\small Model architectures for different Llama variants.}
\label{tab:model_architectures}
\end{table}

\begin{table}[htb]
\centering
\small
\setlength{\tabcolsep}{6pt}
\renewcommand{\arraystretch}{1.35}
\begin{tabular}{l|ccccccc}
\textbf{Model} & \textbf{Spec.} & \textbf{\# GPU} & \textbf{lr (base)} & \textbf{lr (POET)} & \textbf{training steps} & \textbf{batch size} & \textbf{grad acc.} \\\shline
\multirow{3}{*}{\textbf{Llama 60M}} & $b=1/2$ & 1 & 1e-2 & 1e-3 & 300,000 & 256 & 2 \\
& $b=1/4$ & 1 & 1e-2 & 2e-3 & 300,000 & 256 & 2 \\
& $b=1/8$ & 1 & 1e-2 & 4e-3 & 300,000 & 256 & 2 \\
\hline
\multirow{3}{*}{\textbf{Llama 130M}} & $b=1/2$ & 1 & 5e-3 & 1e-3 & 400,000 & 128 & 2 \\
& $b=1/4$ & 1 & 5e-3 & 2e-3 & 400,000 & 128 & 2 \\
& $b=1/8$ & 1 & 5e-3 & 4e-3 & 400,000 & 128 & 2 \\
\hline
\multirow{3}{*}{\textbf{Llama 350M}} & $b=1/2$ & 4 & 5e-3 & 1e-3 & 400,000 & 128 & 1 \\
& $b=1/4$ & 4 & 5e-3 & 2e-3 & 400,000 & 128 & 1 \\
& $b=1/8$ & 4 & 5e-3 & 4e-3 & 400,000 & 128 & 1 \\
\hline
\multirow{3}{*}{\textbf{Llama 1.3B}} & $b=1/2$ & 8 & 1e-3 & 1e-3 & 500,000 & 64 & 1 \\
& $b=1/4$ & 8 & 1e-3 & 2e-3 & 500,000 & 64 & 1 \\
& $b=1/8$ & 8 & 1e-3 & 4e-3 & 500,000 & 64 & 1 \\
\end{tabular}
\vspace{0.75mm}
\caption{\small Hyper-parameter setup of POET-FS.}
\label{tab:hyper_fs}
\end{table}

This section outlines our experimental setup, including the codebase, datasets, and computational resources used.

\paragraph{Code framework.} Our method is implemented on top of the codebase from~\cite{zhao2024galore}\footnote{\url{https://github.com/jiaweizzhao/GaLore}} (Apache 2.0 license), which we also use to reproduce the AdamW and GaLore baselines. We will release our code for reproducing all training results prior to publication.

\paragraph{Training details.} We employed the AdamW optimizer~\cite{loshchilov2017decoupled} for all our training runs. The specific hyperparameters used for each experiment are detailed in the Table~\ref{tab:hyper_fs} and Table~\ref{tab:hyper_bs} referenced below. We use the consine learning rate scheduler with the minimum learning ratio of $0.01$. We use the number of warmup steps of $0$, weight decay of $0.01$ and gradient clipping of $0.1$. For the AdamW baseline, we report results for the optimal learning rate 
from $[1{\times}10^{-2}, 5{\times}10^{-3}, 1{\times}10^{-3}, 5{\times}10^{-4}, 1{\times}10^{-4}, 5{\times}10^{-5}, 1{\times}10^{-5}]$. After each merge-then-reinitalize step, we additionally increase the gradient clipping for 10 training steps to improve training stability.

\begin{table}[t]
\centering
\small
\setlength{\tabcolsep}{6pt}
\renewcommand{\arraystretch}{1.35}
\begin{tabular}{l|ccccccc}
\textbf{Model} & \textbf{Spec.} & \textbf{\# GPU} & \textbf{lr (base)} & \textbf{lr (POET)} & \textbf{training steps} & \textbf{batch size} & \textbf{grad acc.} \\
\shline
\multirow{3}{*}{\textbf{Llama 60M}} & $b=256$ & 1 & 1e-2 & 1e-3 & 300,000 & 256 & 2 \\
& $b=128$ & 1 & 1e-2 & 2e-3 & 300,000 & 256 & 2 \\
& $b=64$ & 1 & 1e-2 & 4e-3 & 300,000 & 256 & 2 \\
\hline
\multirow{3}{*}{\textbf{Llama 130M}} & $b=256$ & 1 & 5e-3 & 1e-3 & 400,000 & 256 & 2 \\
& $b=128$ & 1 & 5e-3 & 2e-3 & 400,000 & 256 & 2 \\
& $b=64$ & 1 & 5e-3 & 4e-3 & 400,000 & 256 & 2 \\
\hline
\multirow{3}{*}{\textbf{Llama 350M}} & $b=256$ & 4 & 5e-3 & 1e-3 & 400,000 & 128 & 1 \\
& $b=128$ & 4 & 5e-3 & 2e-3 & 400,000 & 128 & 1 \\
& $b=64$ & 4 & 5e-3 & 4e-3 & 400,000 & 128 & 1 \\
\hline
\multirow{3}{*}{\textbf{Llama 1.3B}} & $b=256$ & 8 & 1e-3 & 1e-3 & 500,000 & 64 & 1 \\
& $b=128$ & 8 & 1e-3 & 2e-3 & 500,000 & 64 & 1 \\
& $b=64$ & 8 & 1e-3 & 4e-3 & 500,000 & 64 & 1 \\
\end{tabular}
\vspace{0.75mm}
\caption{\small Hyper-parameter setup of POET-BS.}
\label{tab:hyper_bs}
\end{table}

\paragraph{Model architecture.} Our work utilized the \textbf{Hugging Face Transformers}\footnote{\url{https://github.com/huggingface/transformers}} code base to construct the Llama model for pretraining, which is under the \textbf{Apache 2.0} license. The specific layer setups for the different scaled Llama models are summarized in Table~\ref{tab:model_architectures}. Note, the intermediate dimension of the Feed-Forward Network (FFN) has been slightly modified for the POET-BS, compared to the configs in~\cite{zhao2024galore}, because the linear layer dimensions have to be divisible by the POET-BS block size $b$.

\paragraph{Dataset.} We use the \textit{Colossal Clean Crawled Corpus} (C4) dataset~\cite{raffel2020exploring} for pretraining. The C4 data is a large-scale, meticulously cleaned version of Common Crawl's web crawl corpus. It was originally introduced for training the Text-to-Text Transfer Transformer (T5) model and has since become a standard pre-training dataset for testing training algorithms for pre-training large language models. The dataset is released under the \textbf{ODC-BY} license.

\paragraph{Compute Resources.} All the training tasks are performed on a \textbf{NVIDIA HGX H100 8-GPU System} node with 80GB memory each. Depending on the model scale, we train on 1, 4 or 8 GPUs.

\newpage
\section{Implementation and CUDA Acceleration}

To enable efficient POET training, we implement the Cayley–Neumann parameterization. To reduce memory usage, we leverage the structure of the skew-symmetric matrix $\bm{Q} \in \mathbb{R}^{n \times n}$, where the diagonal entries are zero ($Q_{ii} = 0$) and off-diagonal elements satisfy $Q_{ij} = -Q_{ji}$. This structure allows us to store only the upper triangular part of $\bm{Q}$ as a vector, reducing the number of trainable parameters from $n^2$ to $n(n-1)/2$. During the forward pass, $\bm{Q}$ is reconstructed on-the-fly using a specialized CUDA kernel, significantly accelerating this process. In addition, the Neumann approximation removes the need for costly and numerically unstable matrix inversion, offering further computational gains. Overall, training a 1.3B LLaMA model on a single H100 8-GPU node yields a 3.8× speedup over the baseline (\ie, native implementation). Table~\ref{tab:speedup} summarizes the contribution of each component to the overall training time.

\begin{table}[h]
\centering
\small
\setlength{\tabcolsep}{6pt}
\renewcommand{\arraystretch}{1.35}
\begin{tabular}{l|c}
\textbf{Design} & \textbf{Speed-Up} \\
\shline
Neumann approximation & 1.5$\times$ \\
Skew-symmetric CUDA kernel & 1.3$\times$ \\
Total & 3.8$\times$ \\
\end{tabular}
\vspace{0.75mm}
\caption{\small Method design and clock time speed-up.}
\label{tab:speedup}
\end{table}

\newpage
\section{Results of Vector Probing for $\bm{R}$ and $\bm{P}$}\label{app:angle}
In this ablation study, we perform vector probing on the orthogonal matrices $\bm{R}\in\mathbb{R}^{m\times m},\bm{P}\in\mathbb{R}^{n\times n}$ for all linear layers for all blocks of a 60M Llama model trained with POET-FS. The cosine similarity results are reported in Figure~\ref{fig:r_left_cos} and Figure~\ref{fig:r_right_cos}, and the trace results are reported in Figure~\ref{fig:r_left_trace} and Figure~\ref{fig:r_right_trace}. Since we want to understand the learning dynamics of the orthogonal matrices, we employ $b=1$ with POET learning rate of $5{\times}10^{-4}$ to eliminate the need for resampling and reinitialization of the orthogonal matrices. Interestingly, we observe this three-phased learning dynamics across different types of linear layers and different-depth transformer blocks.

\vspace{-5px}

\begin{figure}[htbp]
    \centering
    \begin{subfigure}[b]{0.32\textwidth}
        \includegraphics[width=\textwidth]{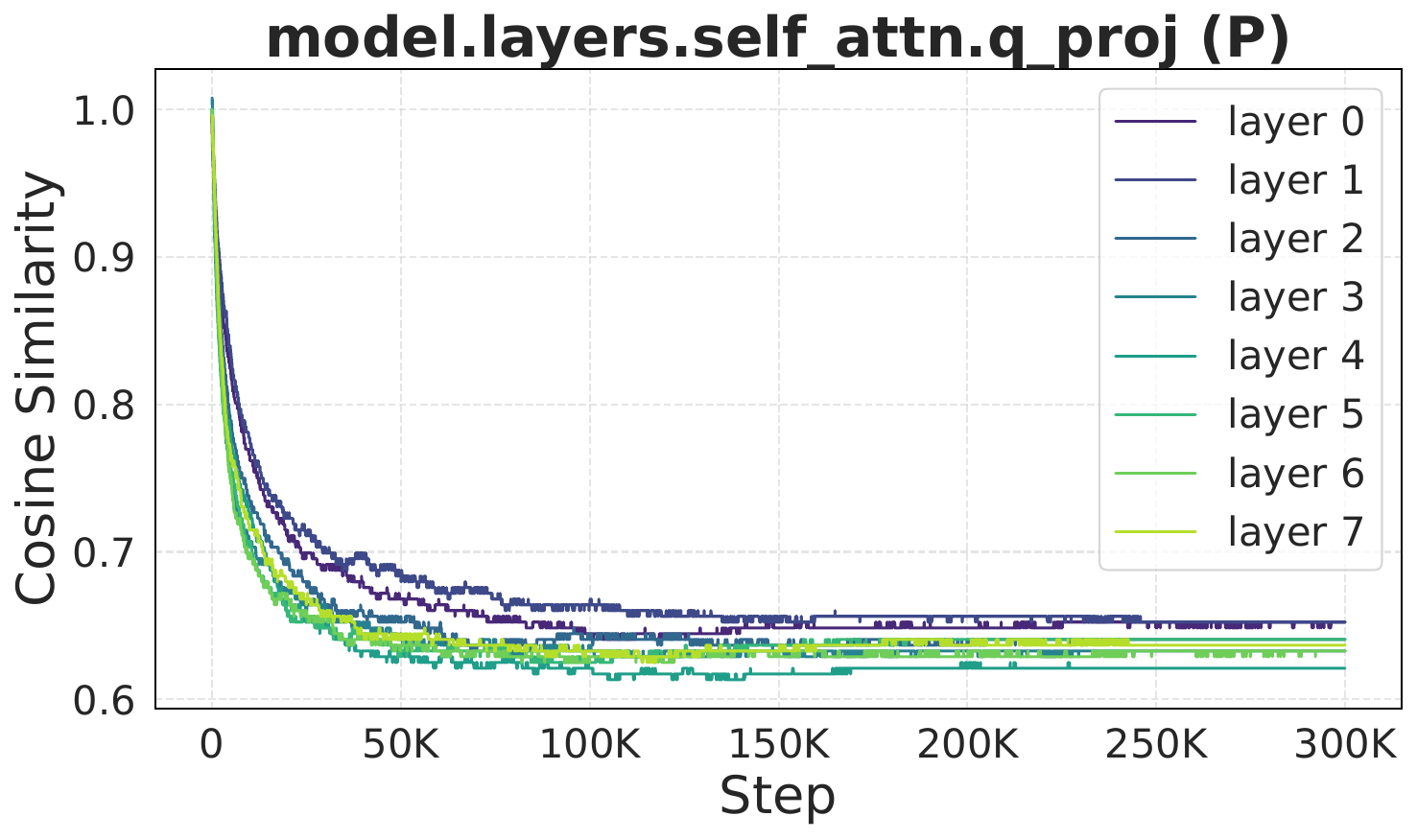}
    \end{subfigure}
    \hfill
    \begin{subfigure}[b]{0.32\textwidth}
        \includegraphics[width=\textwidth]{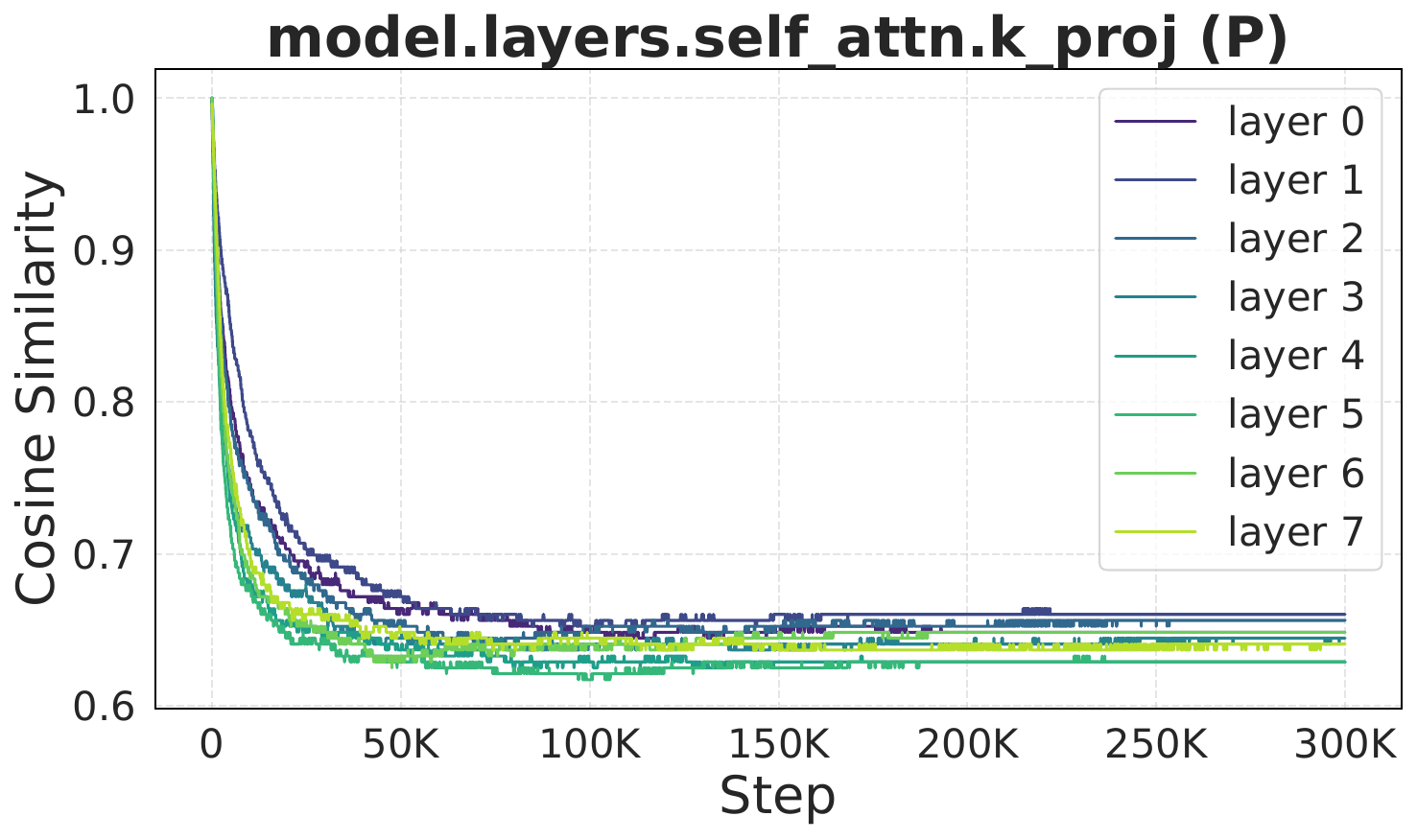}
    \end{subfigure}
    \hfill
    \begin{subfigure}[b]{0.32\textwidth}
        \includegraphics[width=\textwidth]{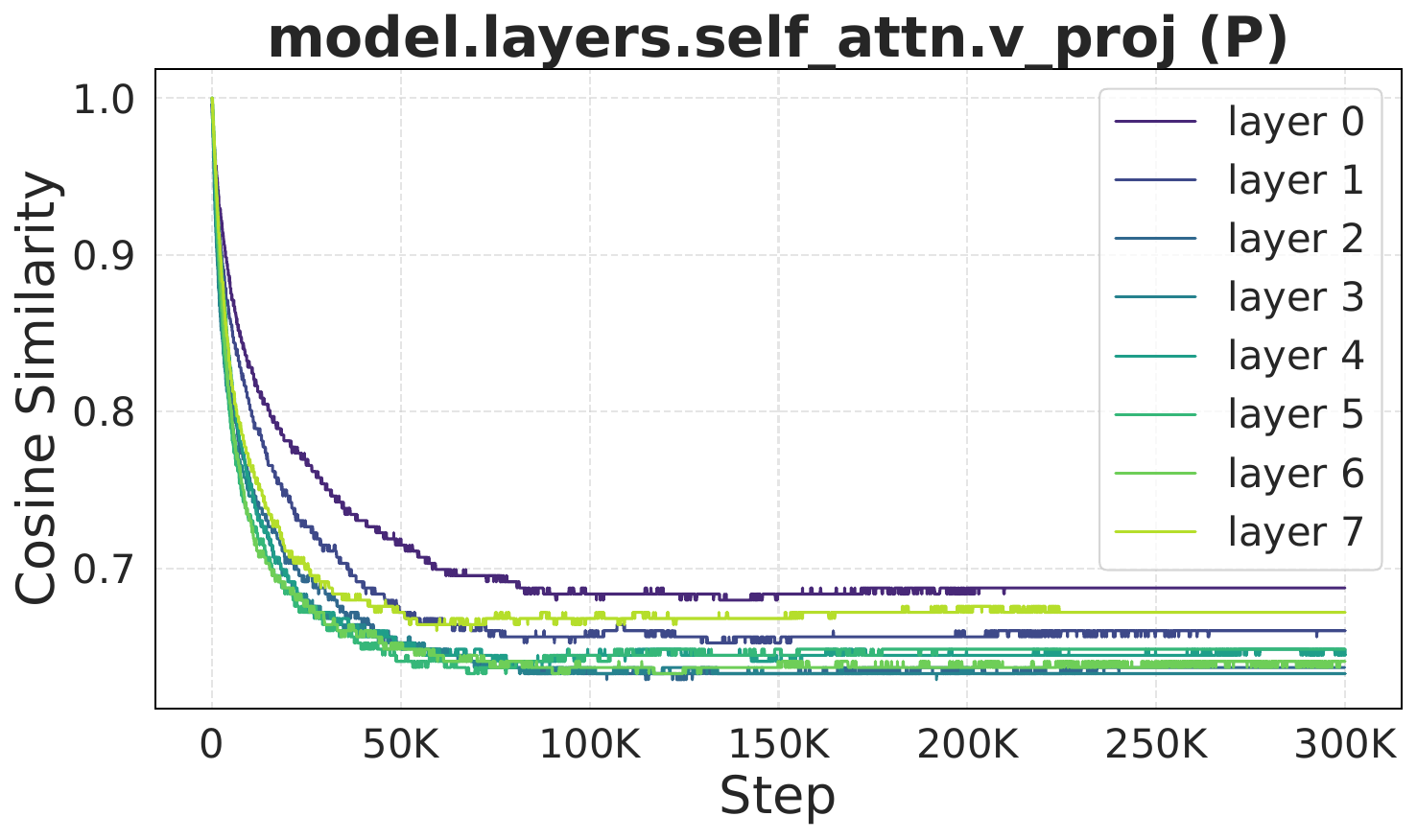}
    \end{subfigure}
    
    \begin{subfigure}[b]{0.32\textwidth}
        \includegraphics[width=\textwidth]{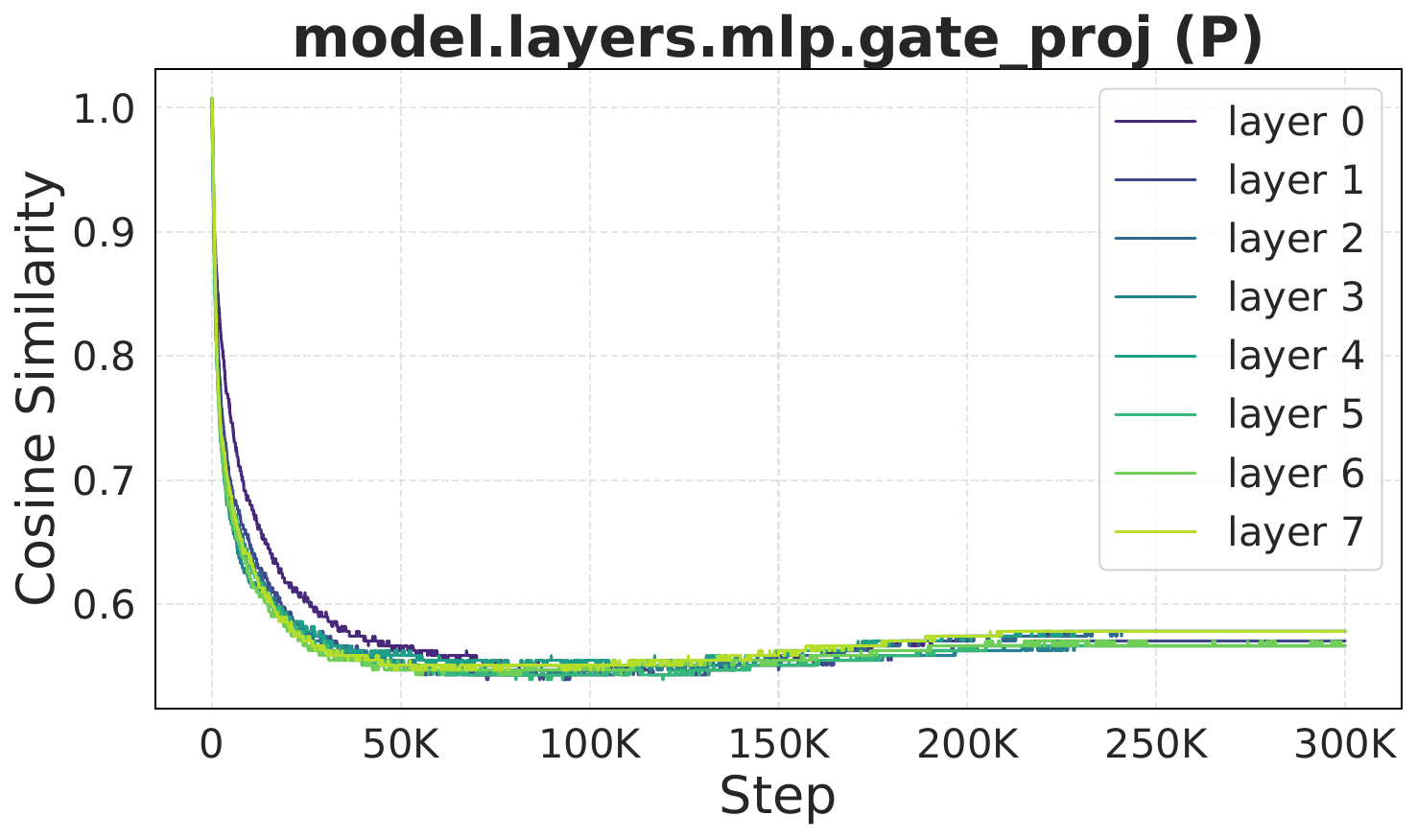}
    \end{subfigure}
    \hfill
    \begin{subfigure}[b]{0.32\textwidth}
        \includegraphics[width=\textwidth]{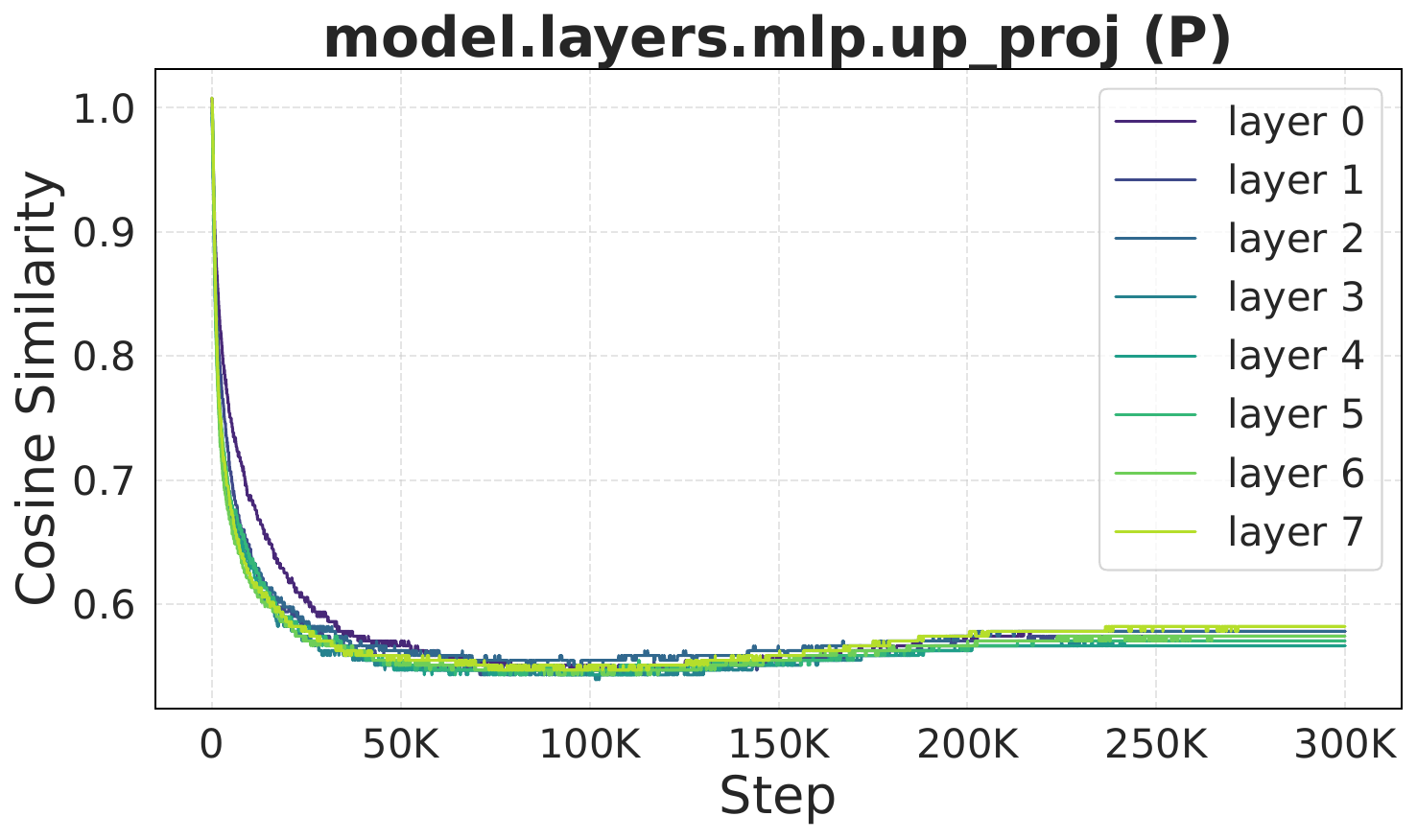}
    \end{subfigure}
    \hfill
    \begin{subfigure}[b]{0.32\textwidth}
        \includegraphics[width=\textwidth]{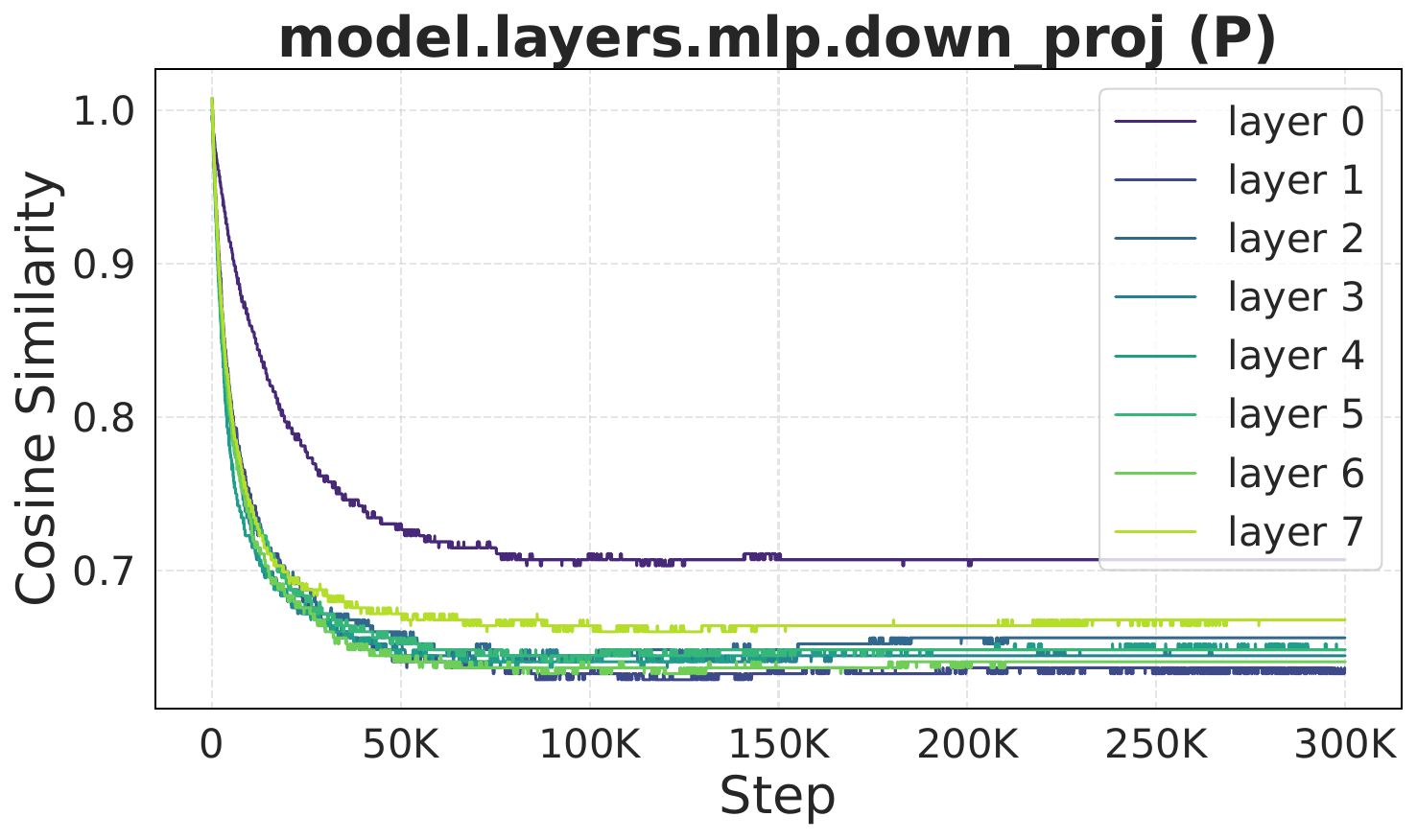}
    \end{subfigure}
    
    \centering
    \begin{subfigure}[b]{0.32\textwidth}
        \includegraphics[width=\textwidth]{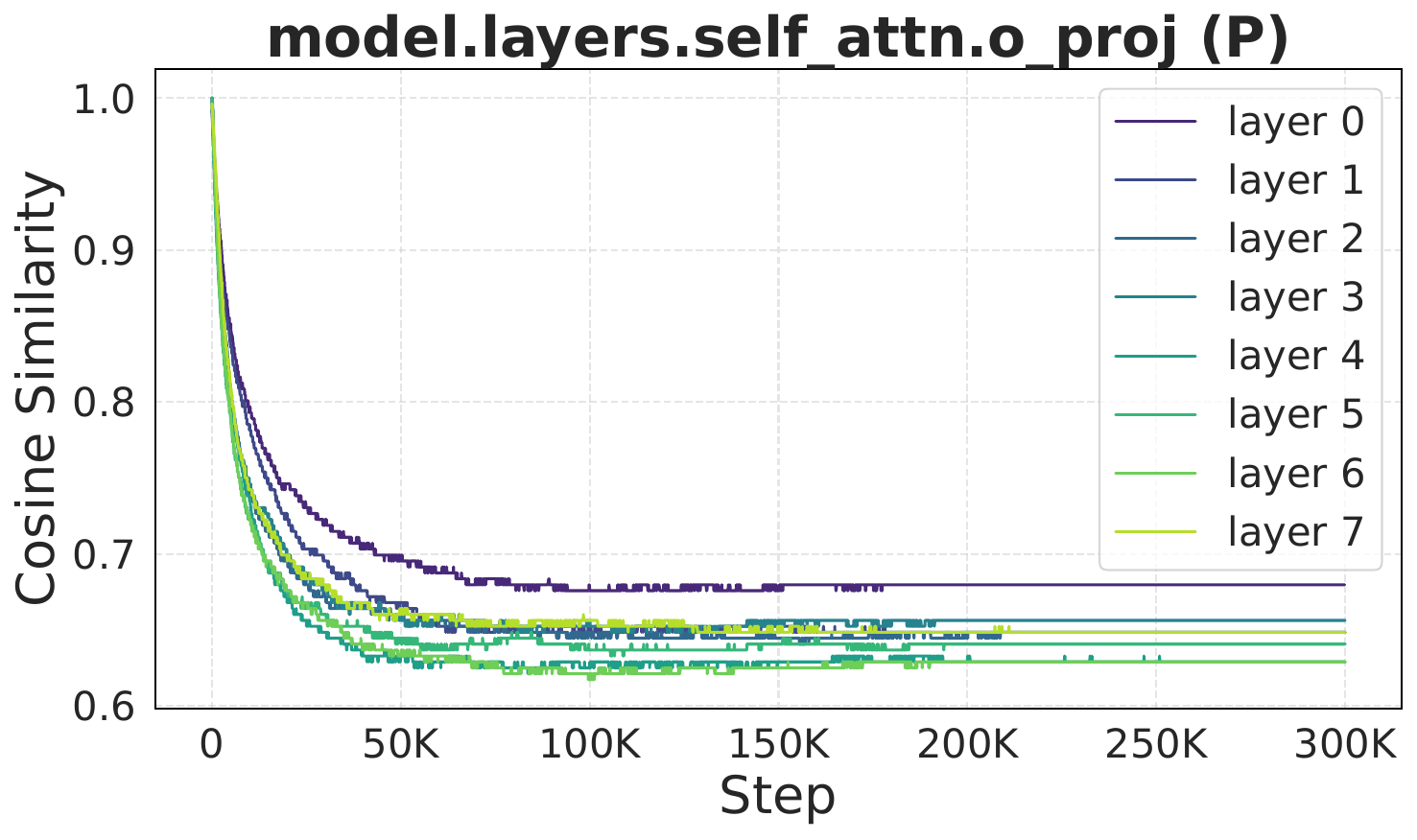}
    \end{subfigure}
    \vspace{-1mm}
    \caption{\small Cosine similarity for vector probing of $\bm{P}$ across the self-attention components (query, key, value, and output projections) and feed-forward network components (up-, down-, and gate-projections) in all transformer blocks of a POET-trained Llama 60M model.}
    \label{fig:r_left_cos}
\end{figure}

\vspace{-5px}

\begin{figure}[htbp]
    \centering
    \begin{subfigure}[b]{0.32\textwidth}
        \includegraphics[width=\textwidth]{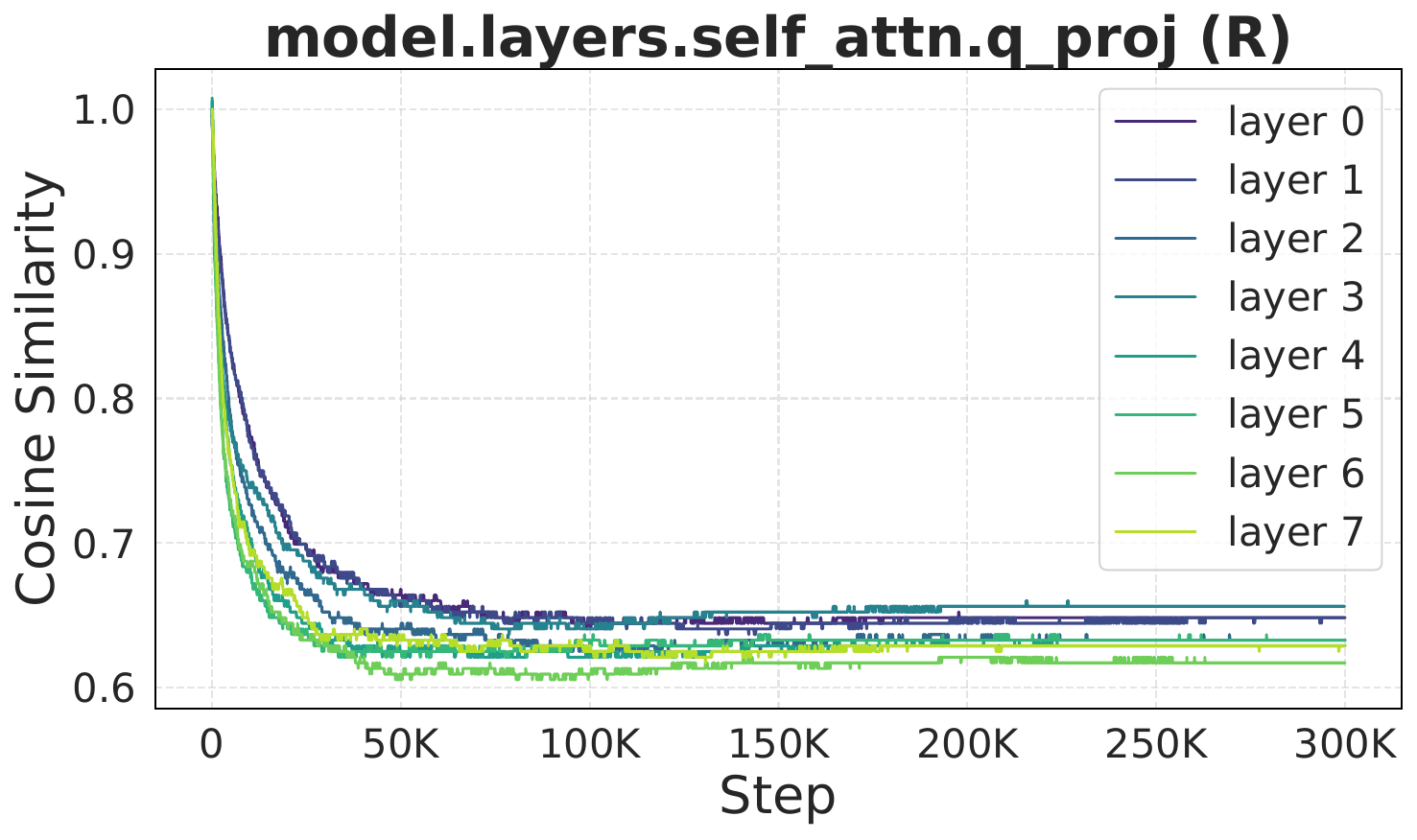}
    \end{subfigure}
    \hfill
    \begin{subfigure}[b]{0.32\textwidth}
        \includegraphics[width=\textwidth]{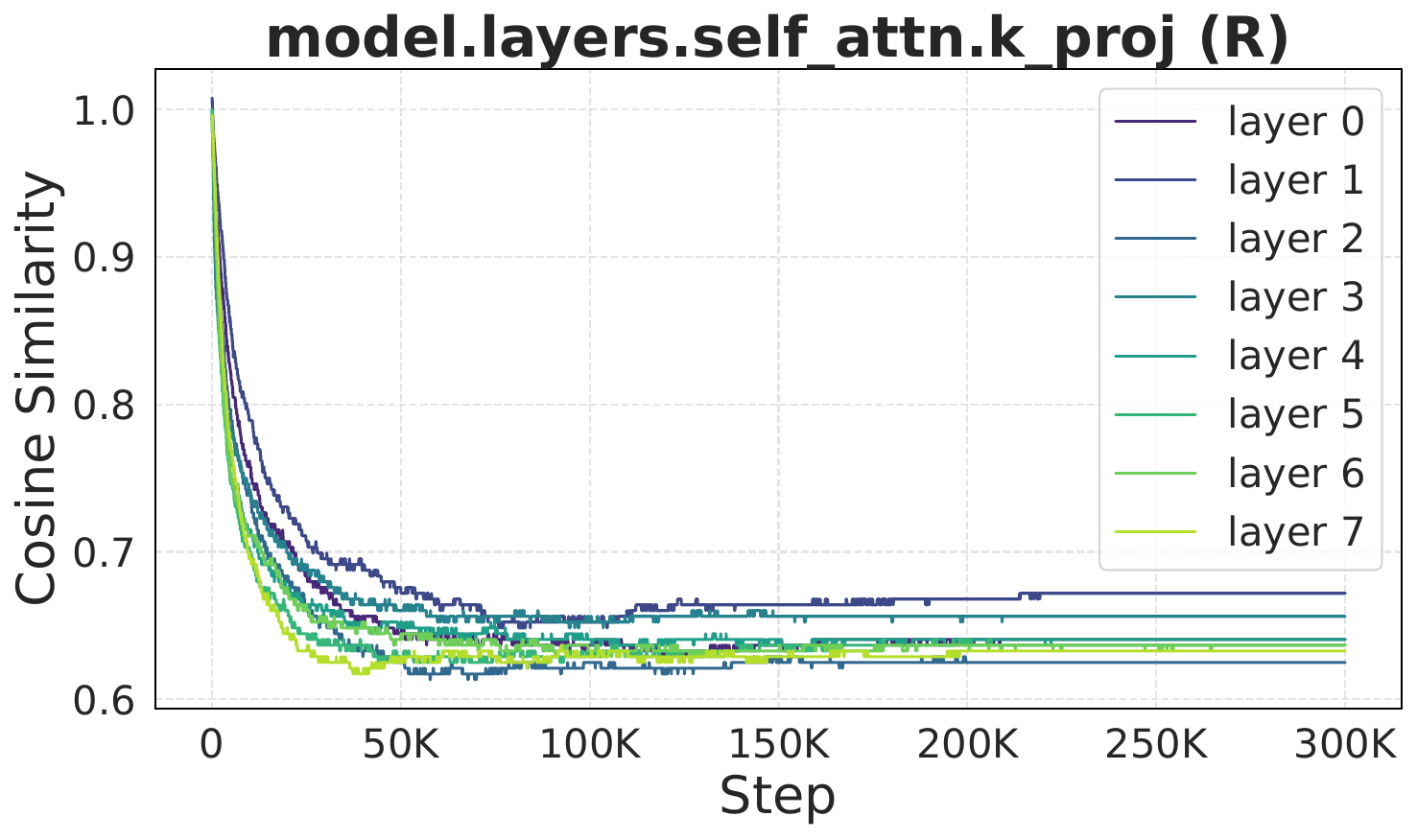}
    \end{subfigure}
    \hfill
    \begin{subfigure}[b]{0.32\textwidth}
        \includegraphics[width=\textwidth]{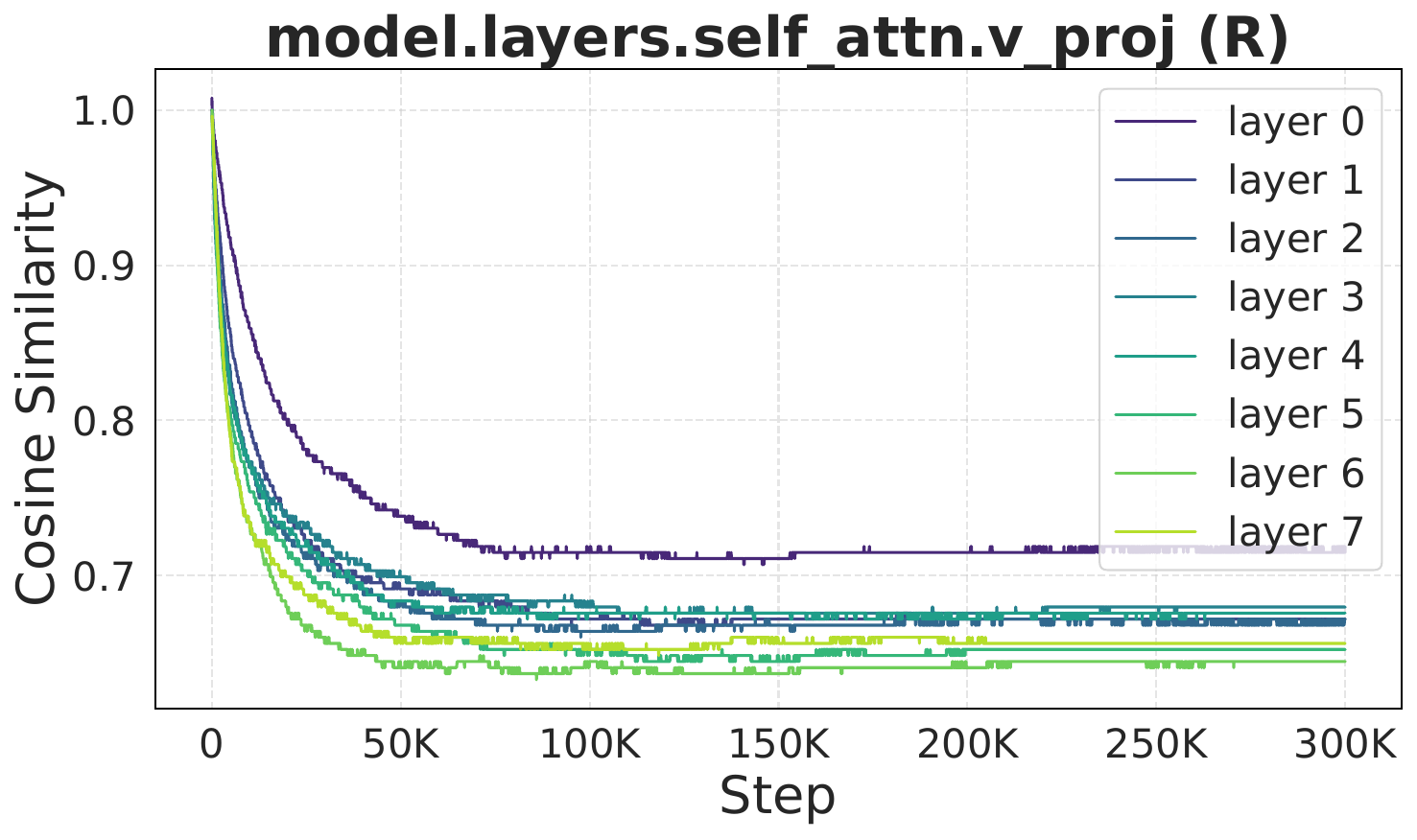}
    \end{subfigure}
    
    \begin{subfigure}[b]{0.32\textwidth}
        \includegraphics[width=\textwidth]{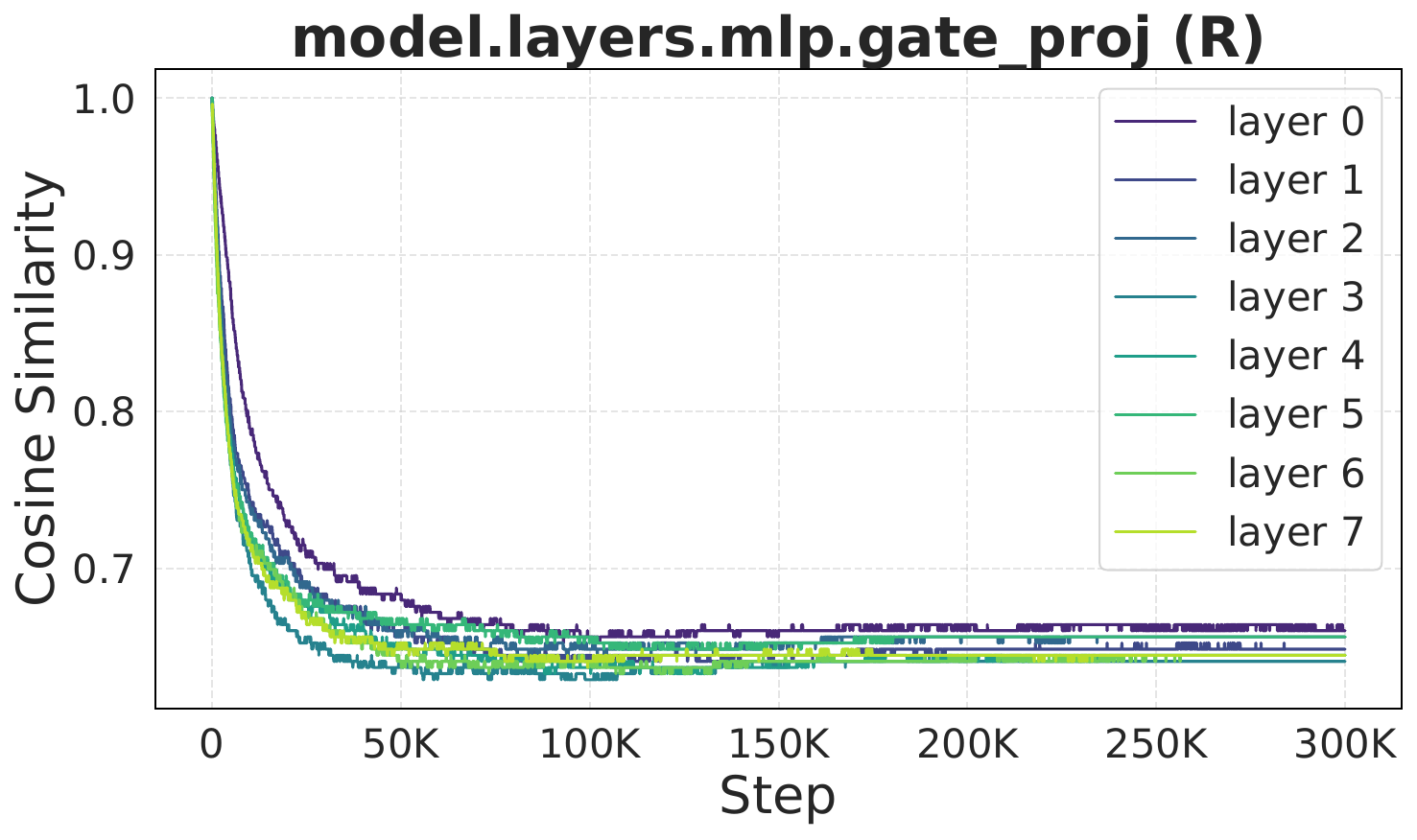}
    \end{subfigure}
    \hfill
    \begin{subfigure}[b]{0.32\textwidth}
        \includegraphics[width=\textwidth]{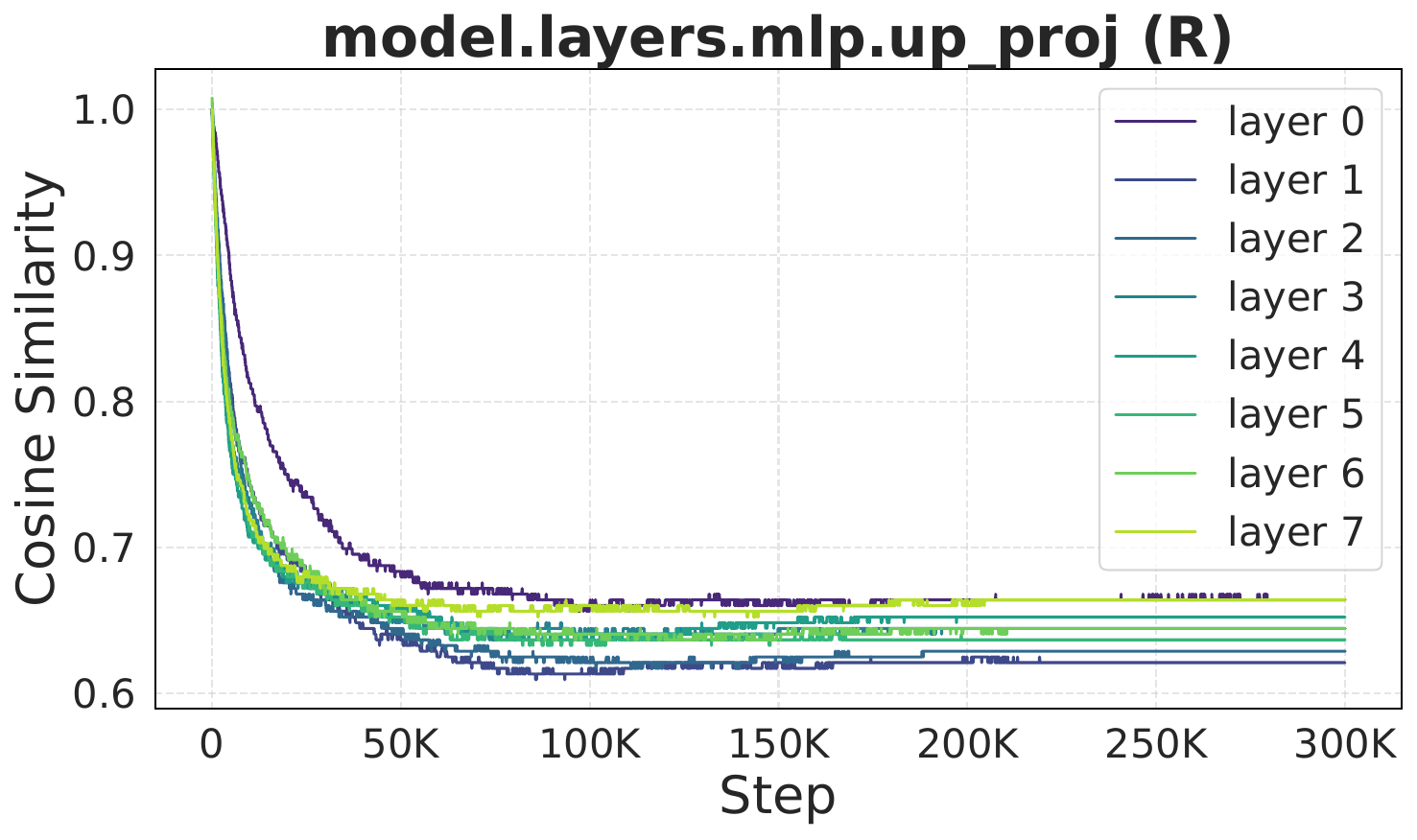}
    \end{subfigure}
    \hfill
    \begin{subfigure}[b]{0.32\textwidth}
        \includegraphics[width=\textwidth]{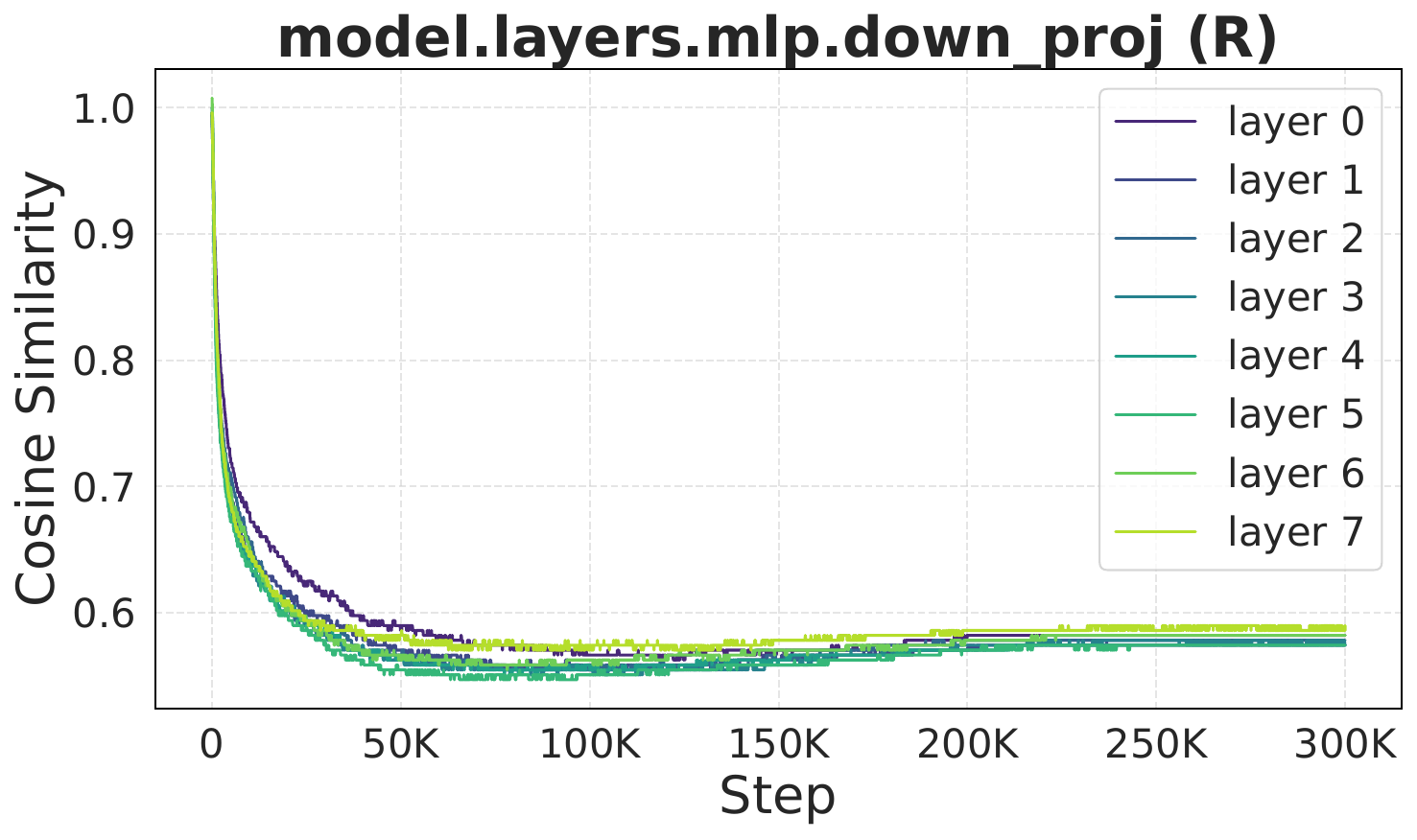}
    \end{subfigure}
    
    \centering
    \begin{subfigure}[b]{0.32\textwidth}
        \includegraphics[width=\textwidth]{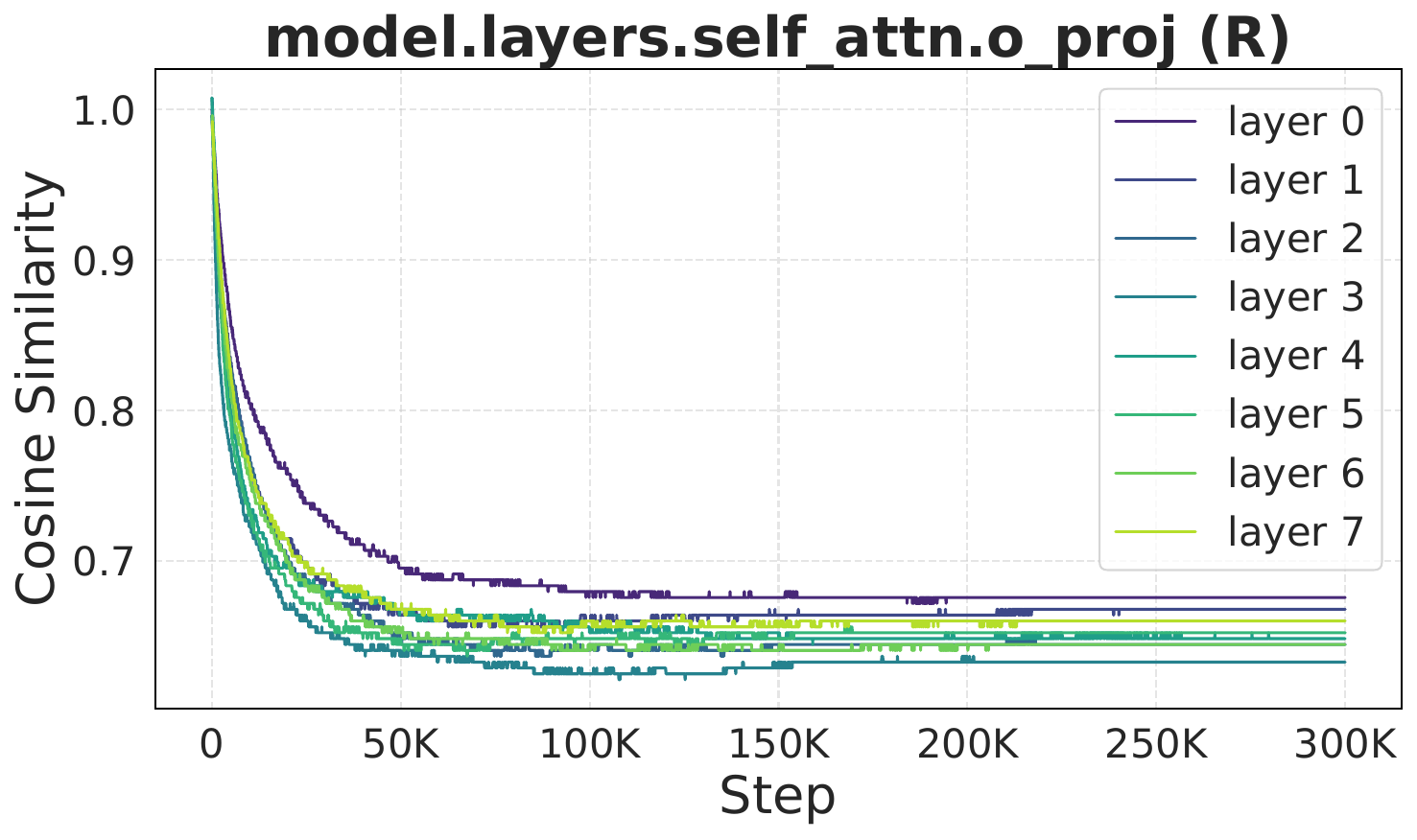}
    \end{subfigure}
    \vspace{-1mm}
    \caption{\small Cosine similarity for vector probing of $\bm{R}$ across the self-attention components (query, key, value, and output projections) and feed-forward network components (up-, down-, and gate-projections) from all transformer blocks of a POET-trained Llama 60M model.}
    \label{fig:r_right_cos}
\end{figure}

\begin{figure}[htbp]
    \centering
    \begin{subfigure}[b]{0.32\textwidth}
        \includegraphics[width=\textwidth]{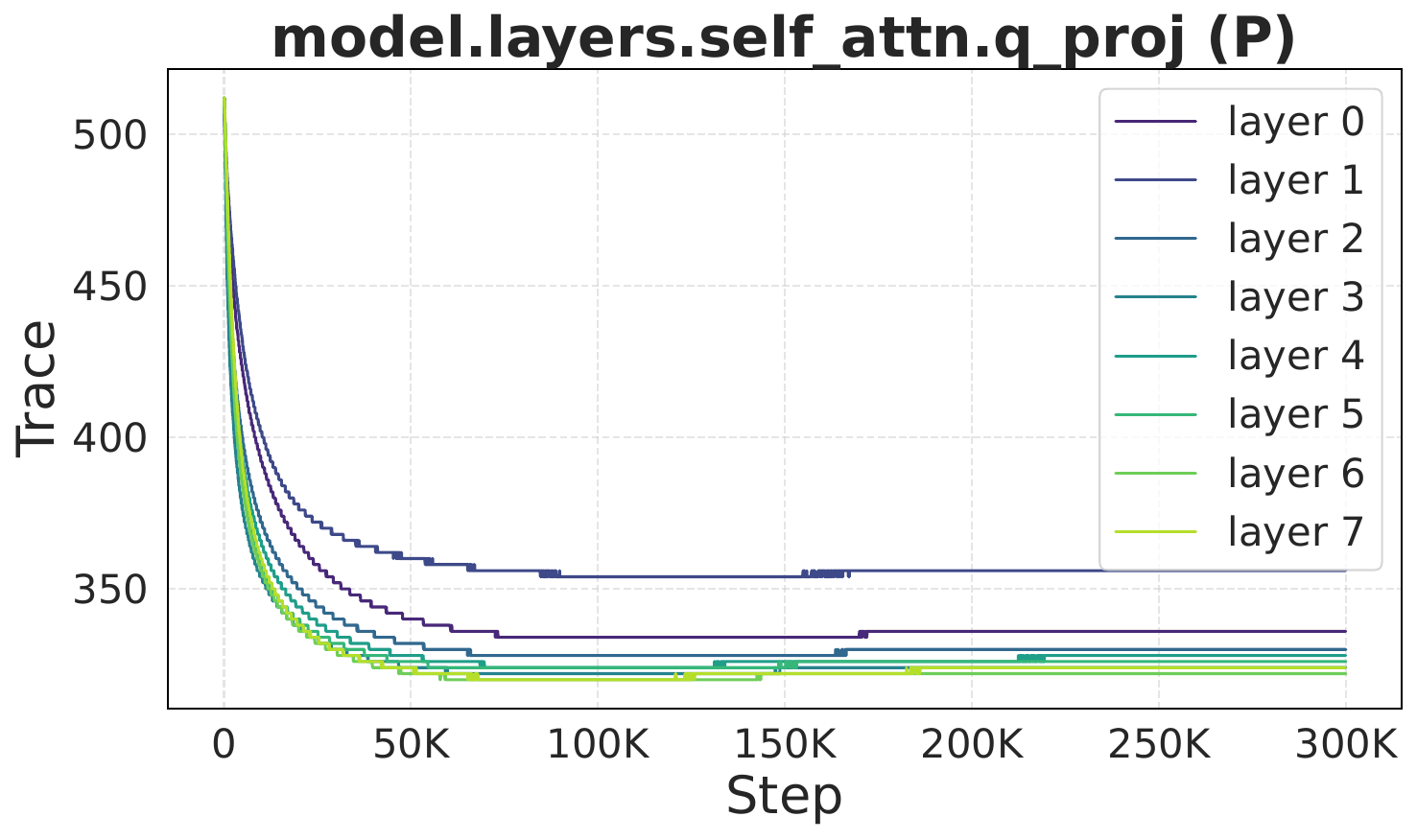}
    \end{subfigure}
    \hfill
    \begin{subfigure}[b]{0.32\textwidth}
        \includegraphics[width=\textwidth]{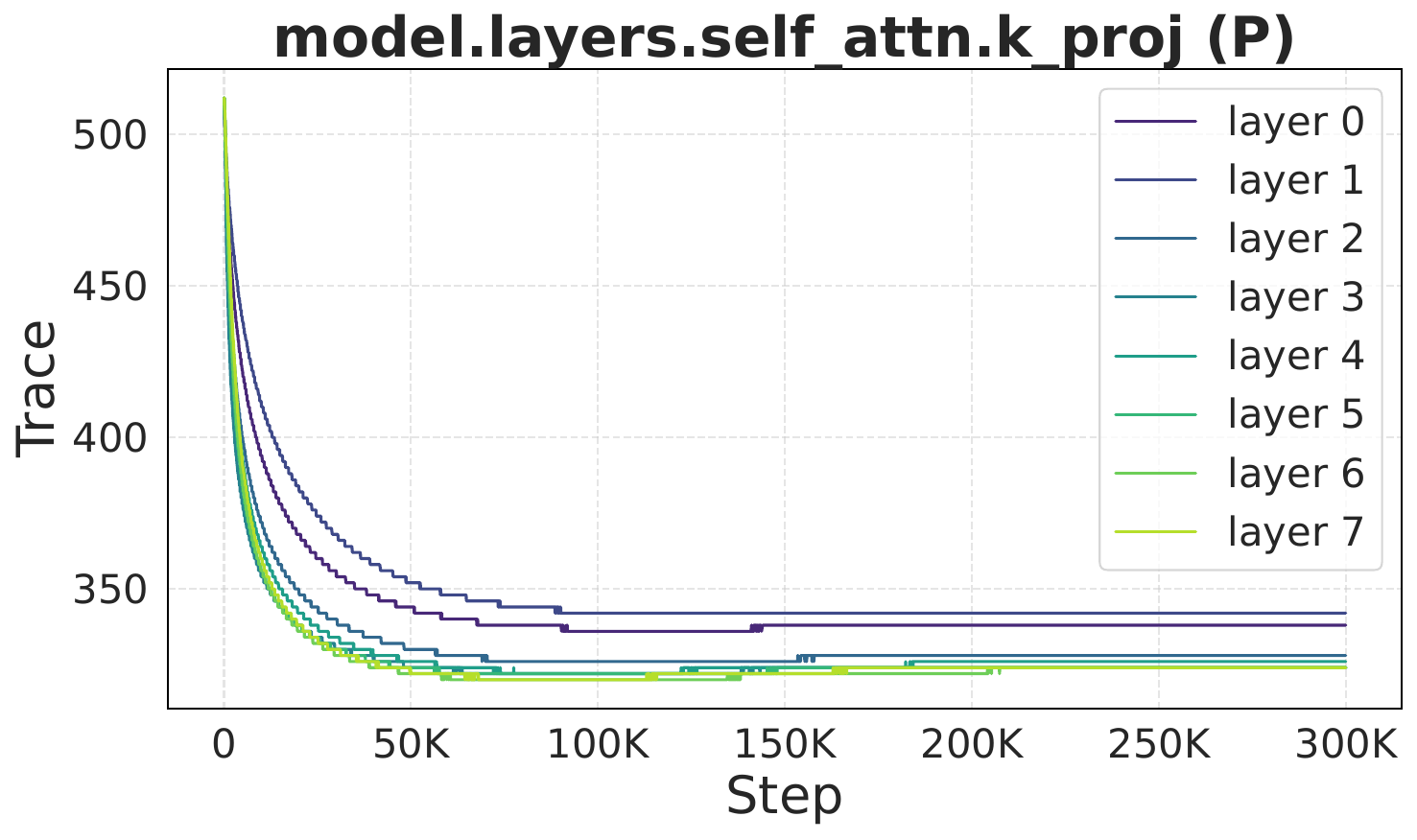}
    \end{subfigure}
    \hfill
    \begin{subfigure}[b]{0.32\textwidth}
        \includegraphics[width=\textwidth]{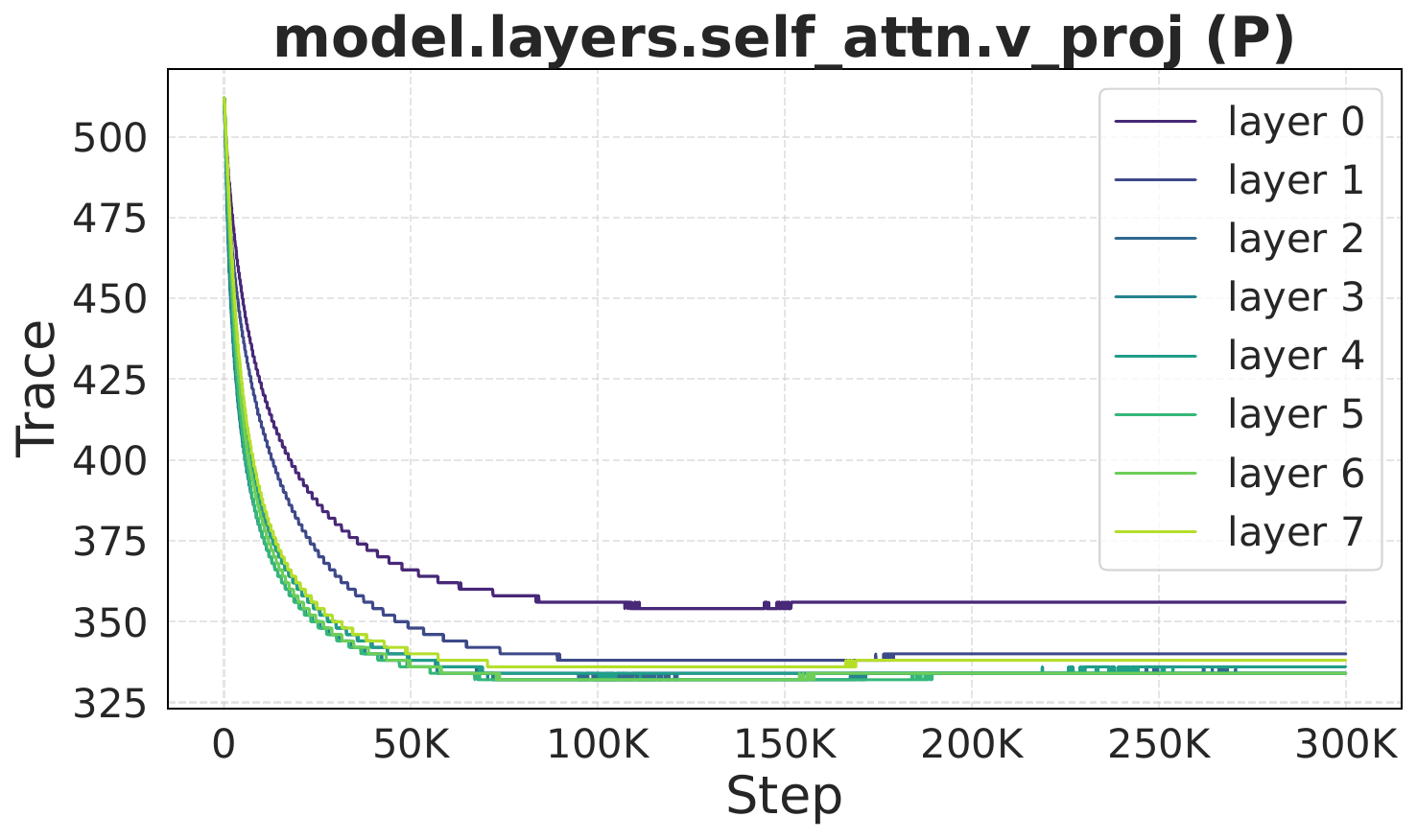}
    \end{subfigure}
    
    \begin{subfigure}[b]{0.32\textwidth}
        \includegraphics[width=\textwidth]{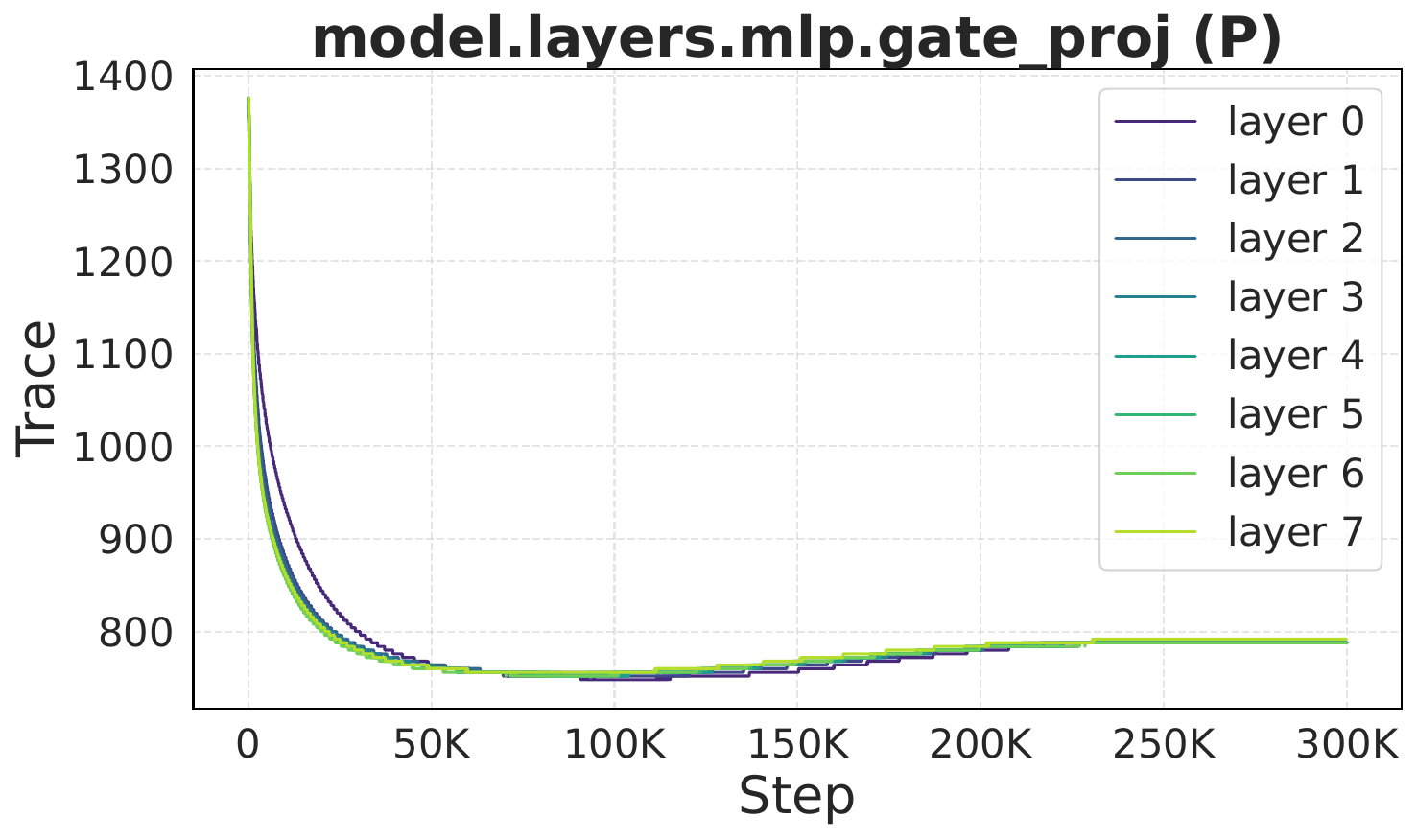}
    \end{subfigure}
    \hfill
    \begin{subfigure}[b]{0.32\textwidth}
        \includegraphics[width=\textwidth]{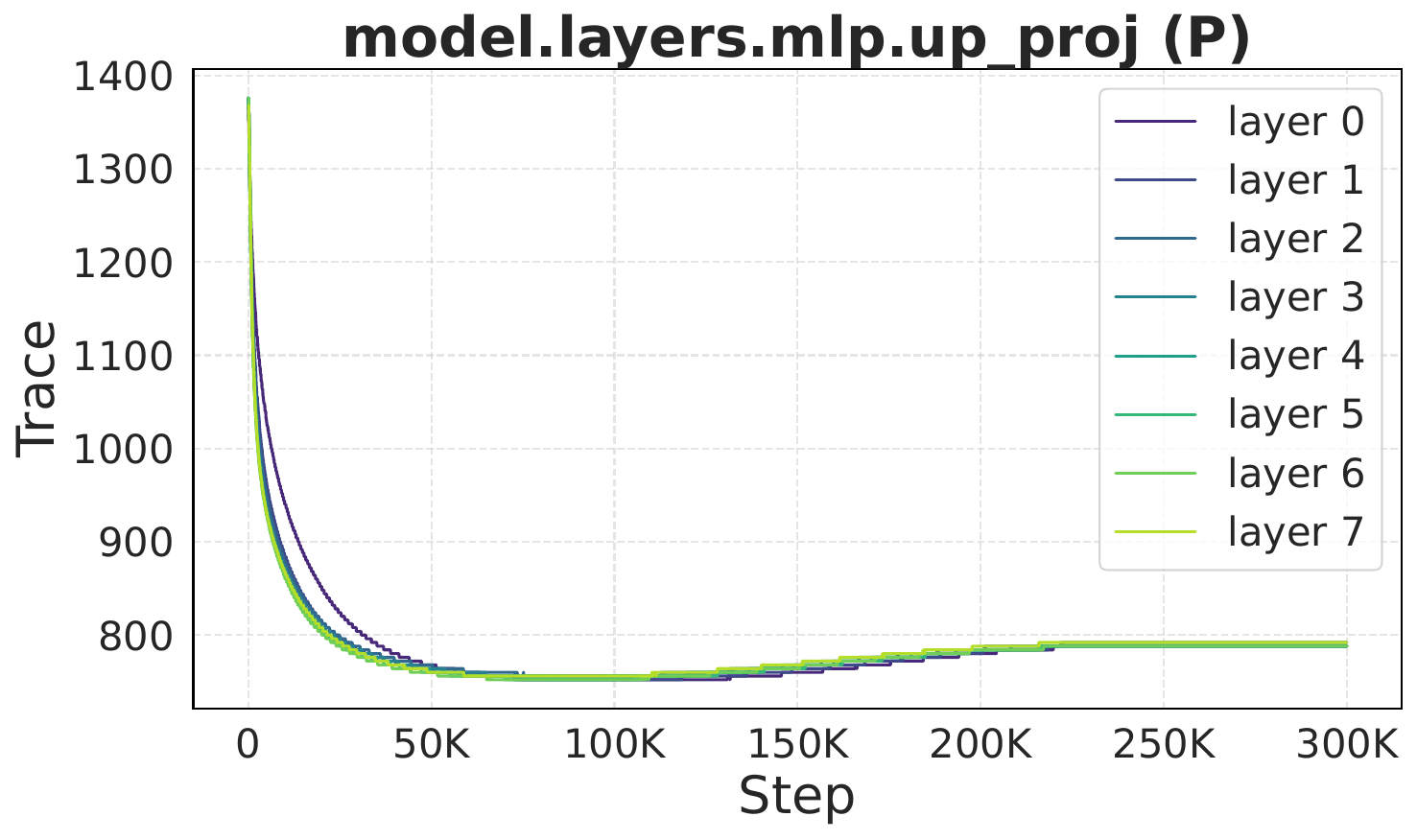}
    \end{subfigure}
    \hfill
    \begin{subfigure}[b]{0.32\textwidth}
        \includegraphics[width=\textwidth]{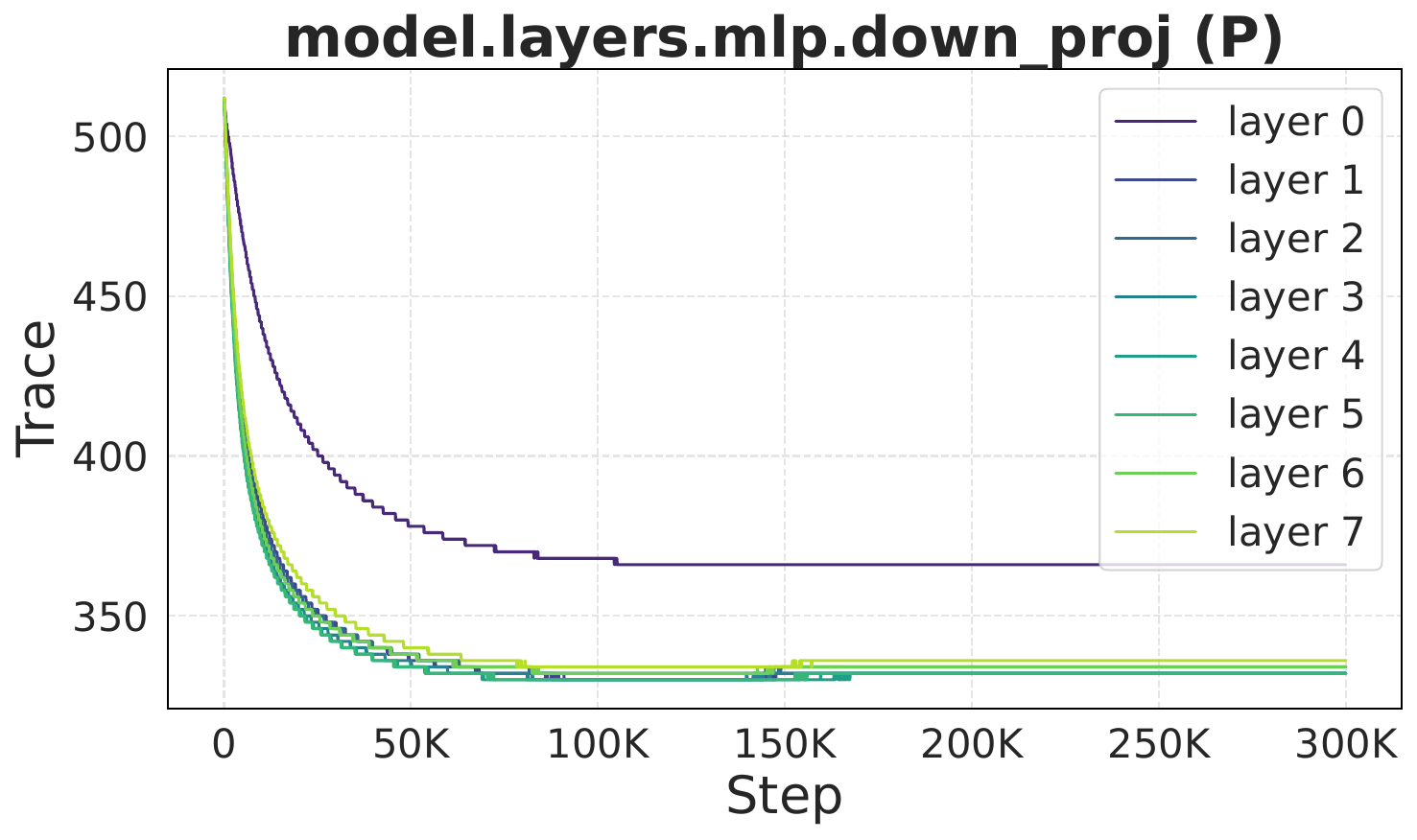}
    \end{subfigure}
    
    \centering
    \begin{subfigure}[b]{0.32\textwidth}
        \includegraphics[width=\textwidth]{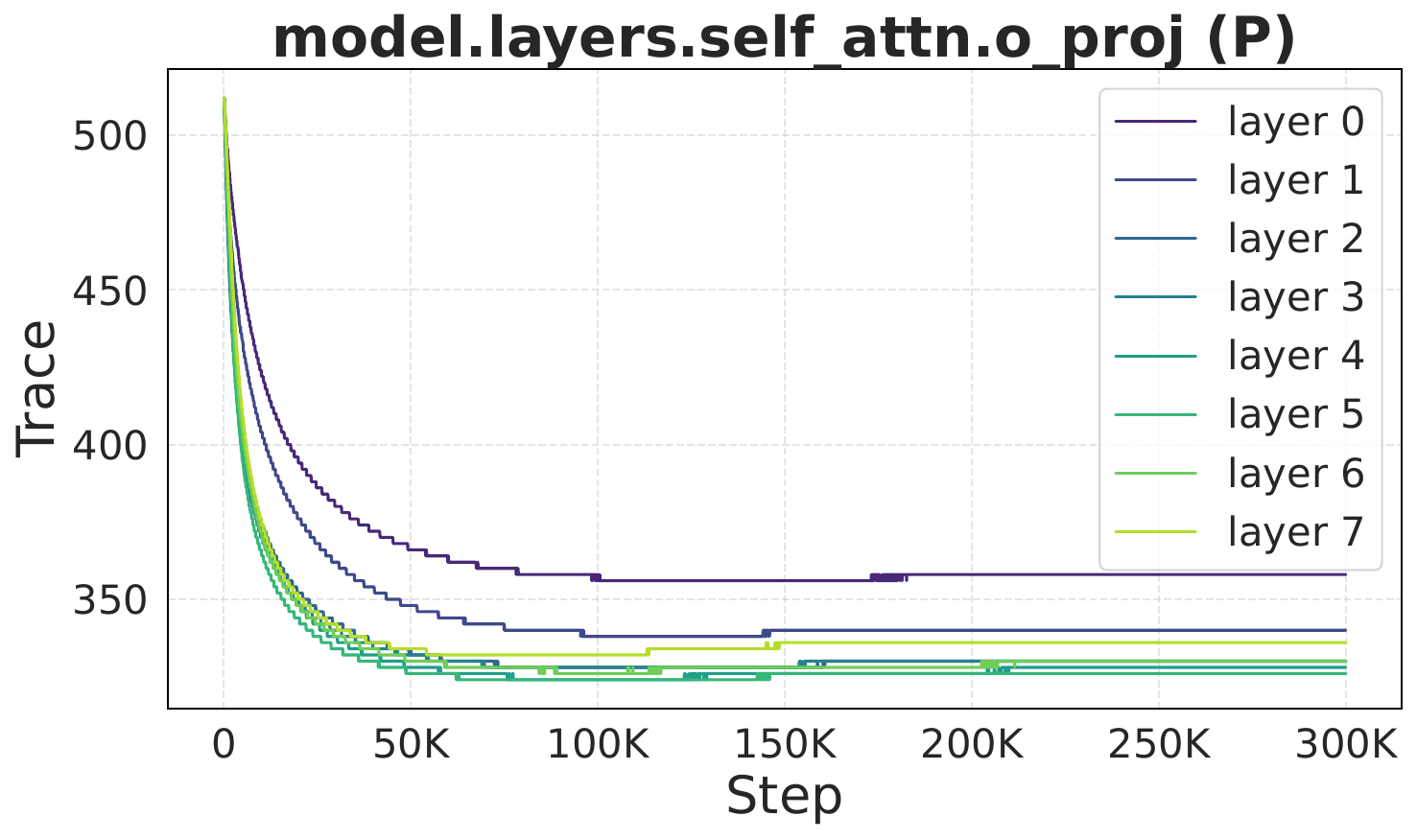}
    \end{subfigure}
    \vspace{-1mm}
    \caption{\small Trace of $\bm{P}$ across the self-attention components (query, key, value, and output projections) and feed-forward network components (up-, down-, and gate-projections) from all transformer blocks of a POET-trained Llama 60M model.}
    \label{fig:r_left_trace}
\end{figure}

\begin{figure}[htbp]
    \centering
    \begin{subfigure}[b]{0.32\textwidth}
        \includegraphics[width=\textwidth]{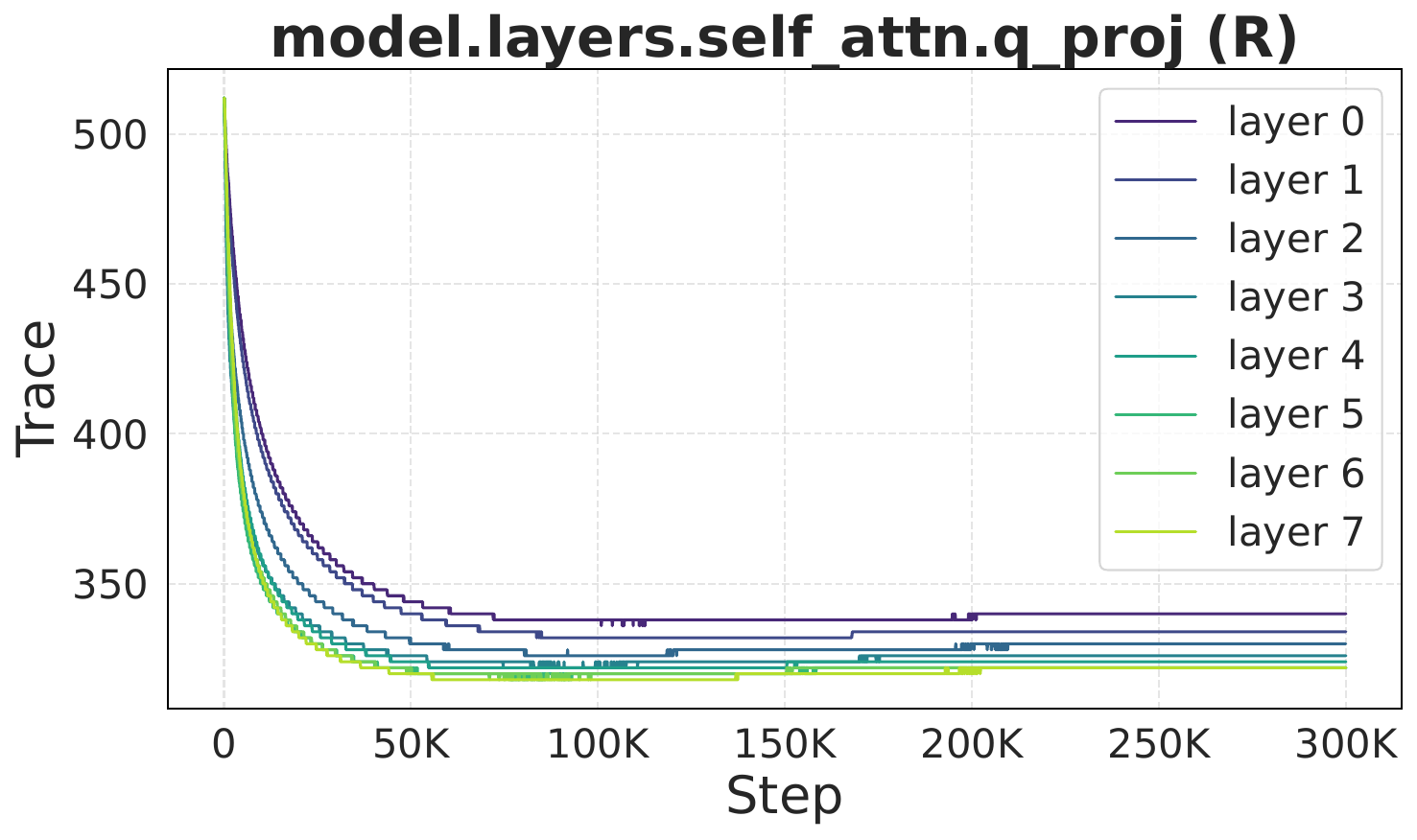}
    \end{subfigure}
    \hfill
    \begin{subfigure}[b]{0.32\textwidth}
        \includegraphics[width=\textwidth]{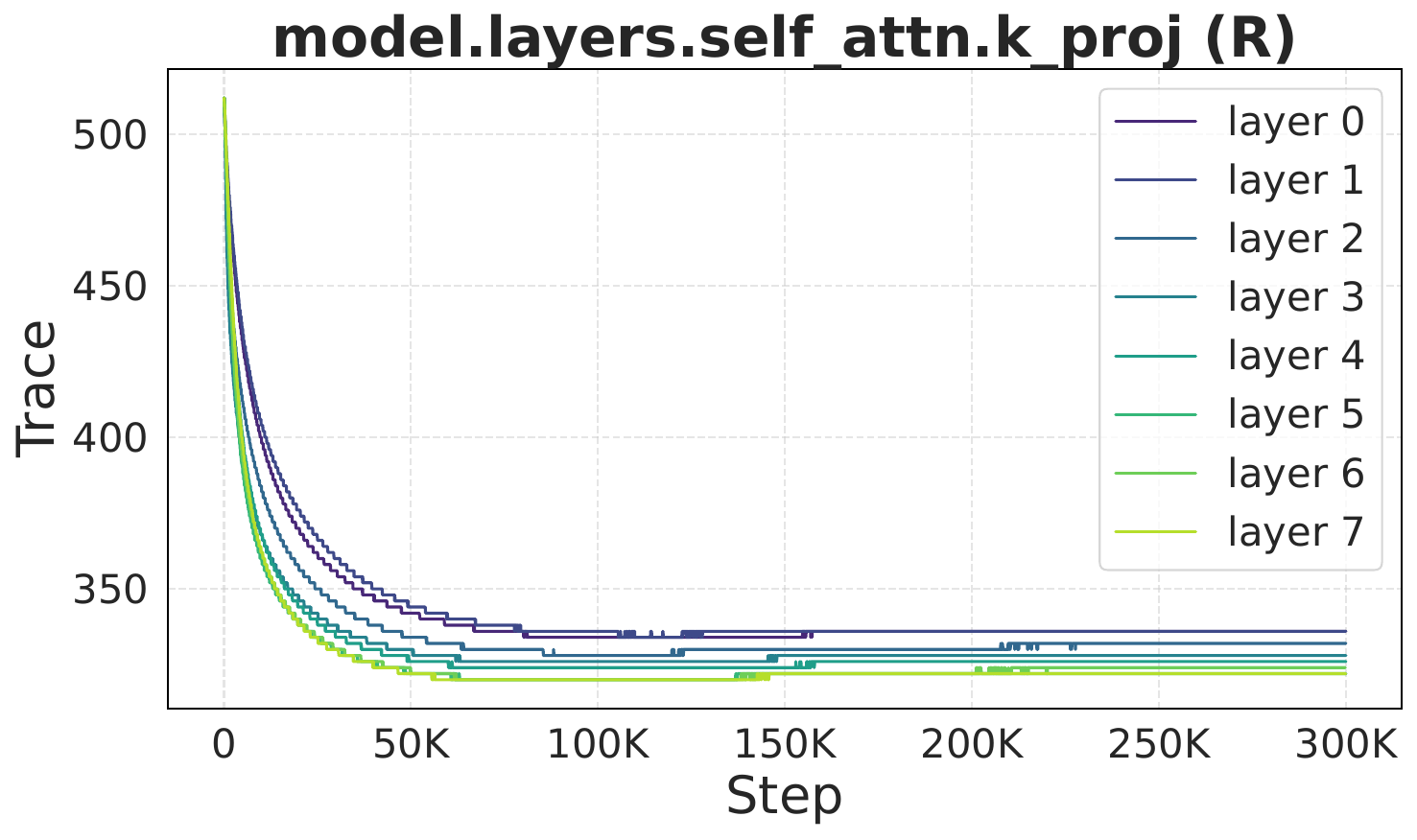}
    \end{subfigure}
    \hfill
    \begin{subfigure}[b]{0.32\textwidth}
        \includegraphics[width=\textwidth]{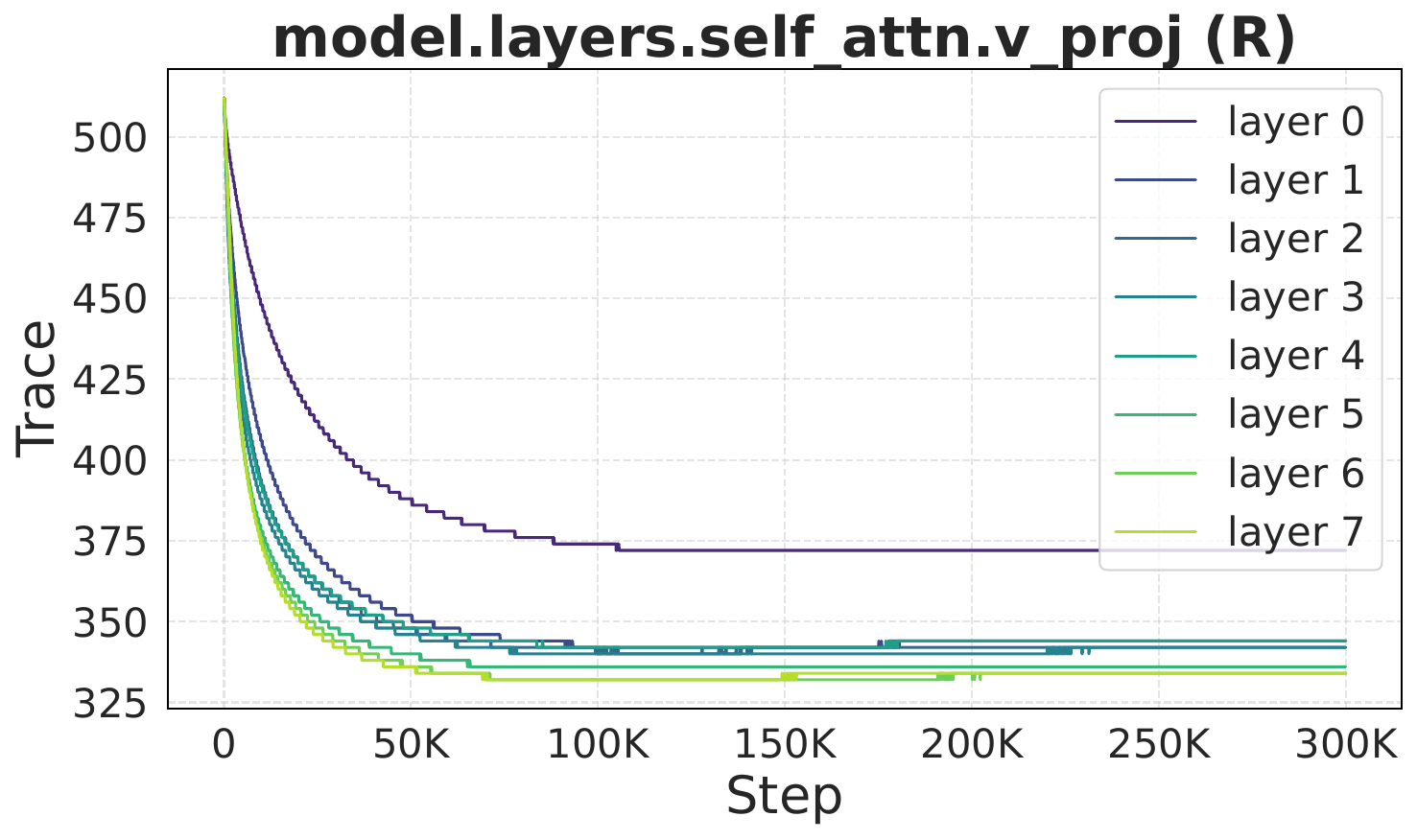}
    \end{subfigure}
    
    \begin{subfigure}[b]{0.32\textwidth}
        \includegraphics[width=\textwidth]{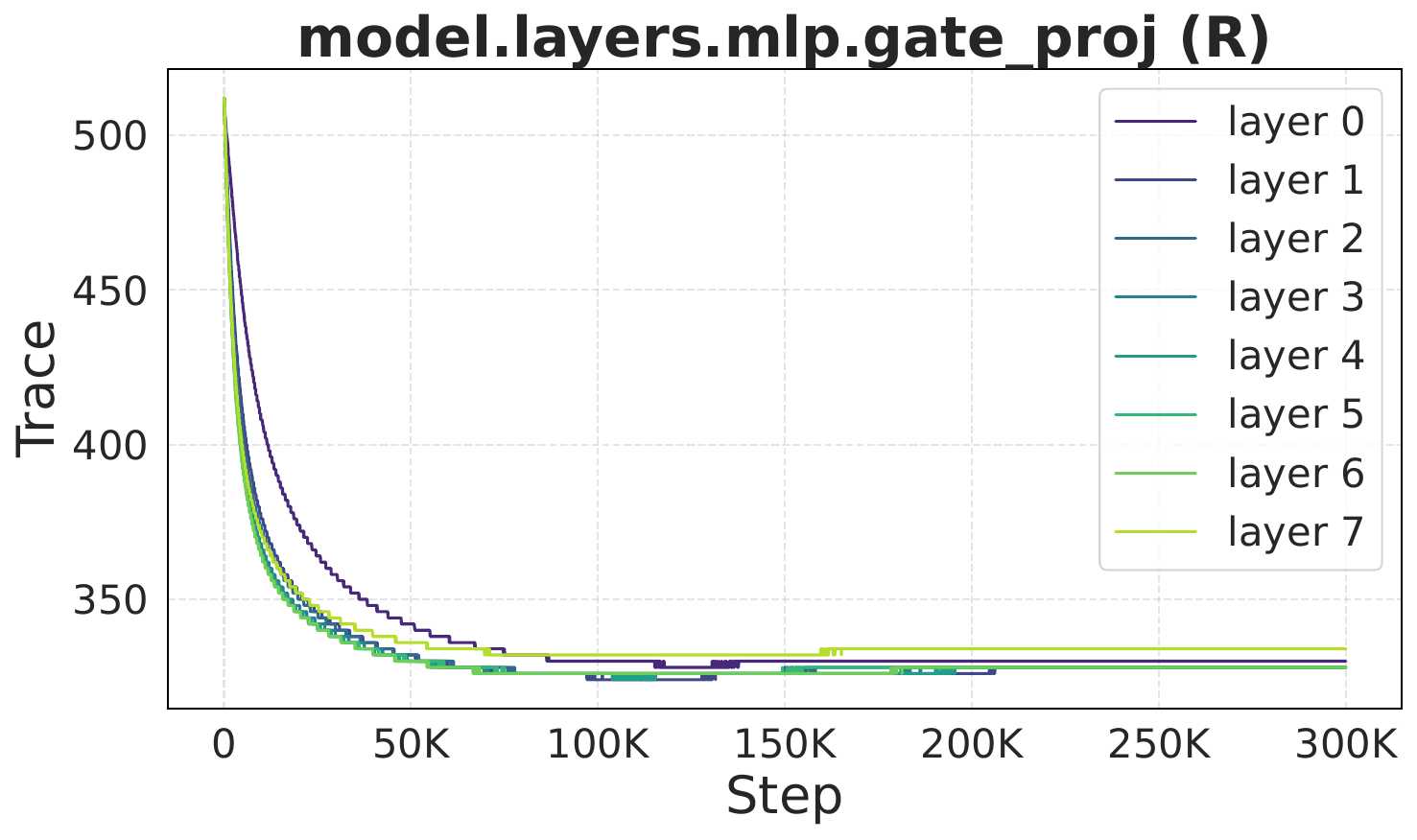}
    \end{subfigure}
    \hfill
    \begin{subfigure}[b]{0.32\textwidth}
        \includegraphics[width=\textwidth]{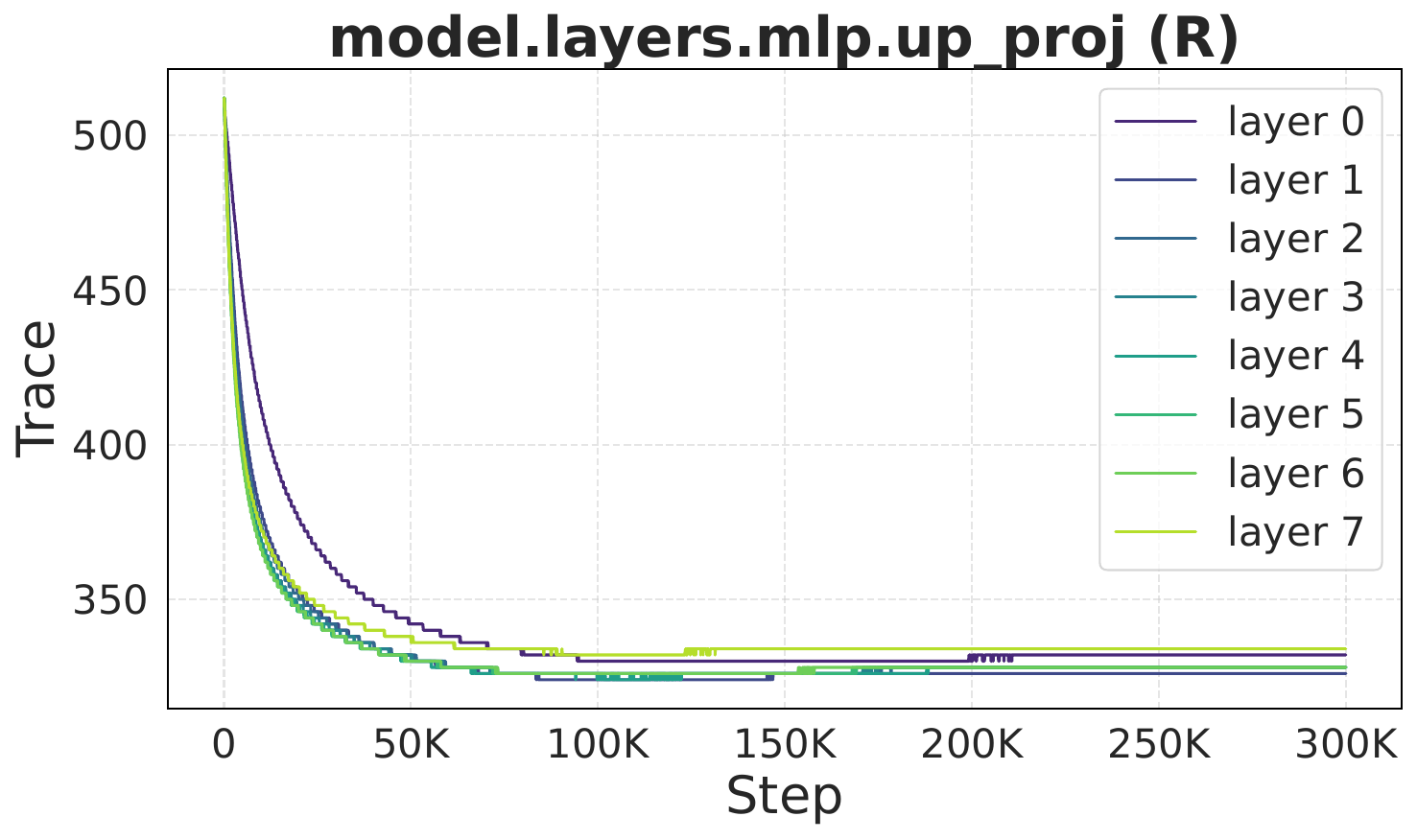}
    \end{subfigure}
    \hfill
    \begin{subfigure}[b]{0.32\textwidth}
        \includegraphics[width=\textwidth]{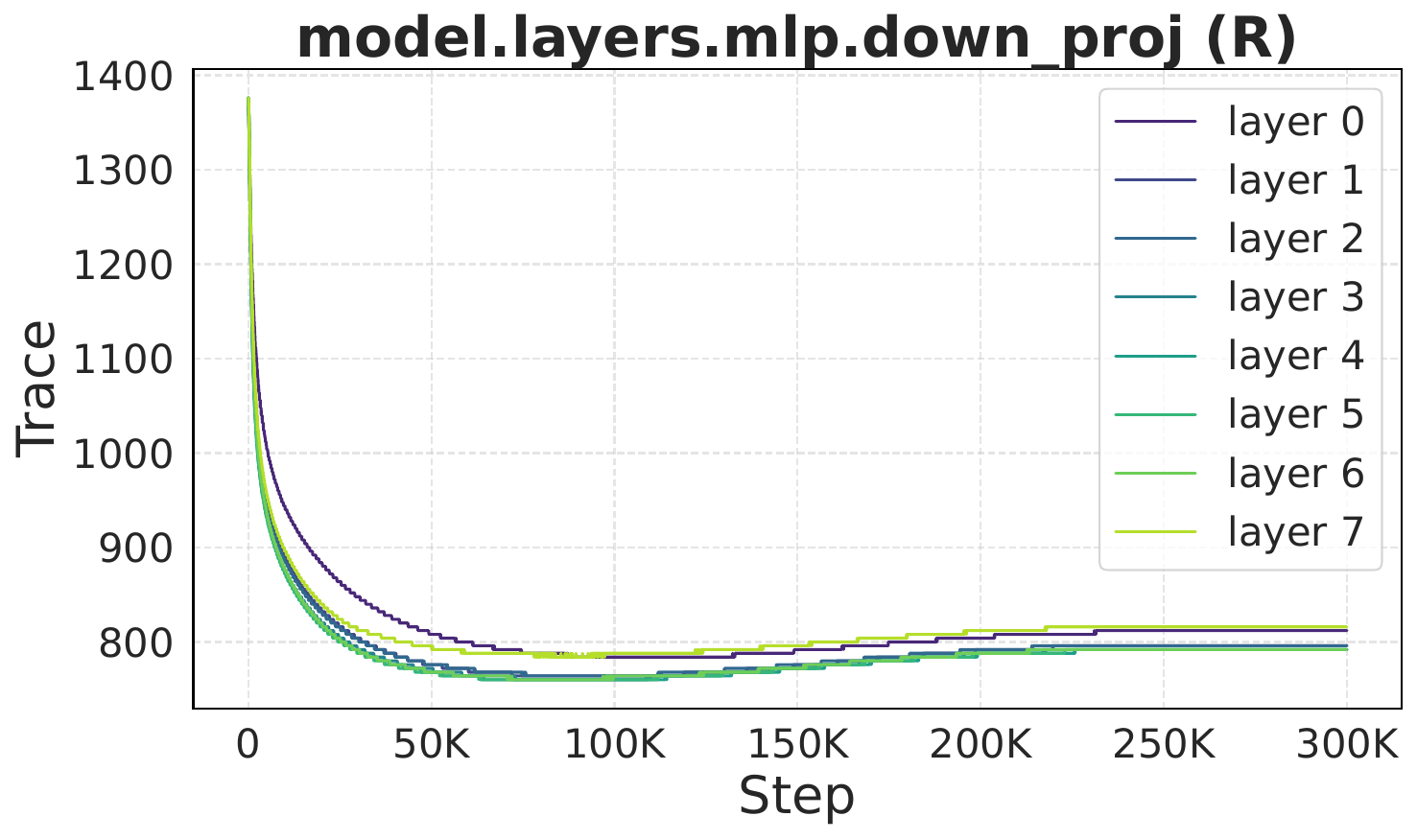}
    \end{subfigure}
    
    \centering
    \begin{subfigure}[b]{0.32\textwidth}
        \includegraphics[width=\textwidth]{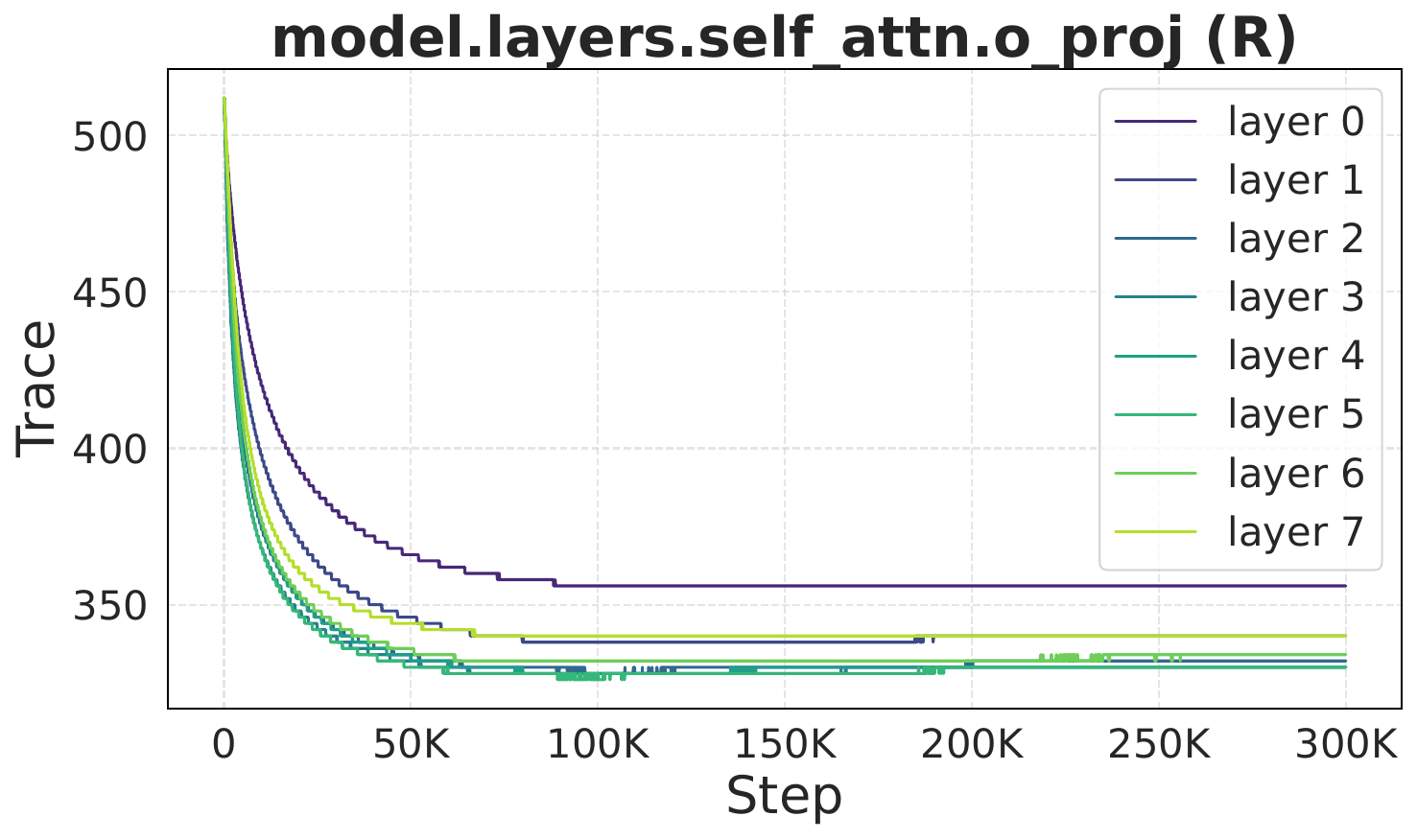}
    \end{subfigure}
    \vspace{-1mm}
    \caption{\small Trace of $\bm{R}$ across the self-attention components (query, key, value, and output projections) and feed-forward network components (up-, down-, and gate-projections) from all transformer blocks of a POET-trained Llama 60M model.}
    \label{fig:r_right_trace}
\end{figure}

\newpage
\clearpage
\section{Weight Update Evenness of Different POET Variants}\label{app:coverage}
To understand the higher parameter efficiency of POET-BS compared to POET-FS, we employ a toy example to visualize their different weight update mechanisms by counting the total number of updates for each element of the weight matrix. The visualization results are given in Figure~\ref{fig:update_bs} and Figure~\ref{fig:update_fs}. Specifically, in this experiment, a 64$\times$64 matrix was randomly initialized and trained for 100 steps under various POET-BS and POET-FS configurations. The merge-then-reinitialize trick is performed at each iteration, and the same set of weight elements was effectively updated between two successive merge-then-reinitialize operations. For each weight element, we compute its total number of update in these 100 steps. 

Given 100 training steps and updates from both $\bm{R}$ and $\bm{P}$, each element of the weight matrix can be updated at most 200 times. This target is consistently achieved by POET-BS, and it is also agnostic to the block size. All POET-BS variants can enable the maximal number of updates for each weight element to be 200. In contrast, POET-FS results in significantly fewer updates per weight element, with updates also unevenly distributed. This unevenness arises from stochasticity, causing certain weights to be updated more frequently than others. While this is less problematic at large iteration counts, it can introduce unexpected training difficulties in earlier stages.

\begin{figure}[htbp]
    \centering
    \begin{subfigure}[b]{0.32\textwidth}
        \includegraphics[width=\textwidth]{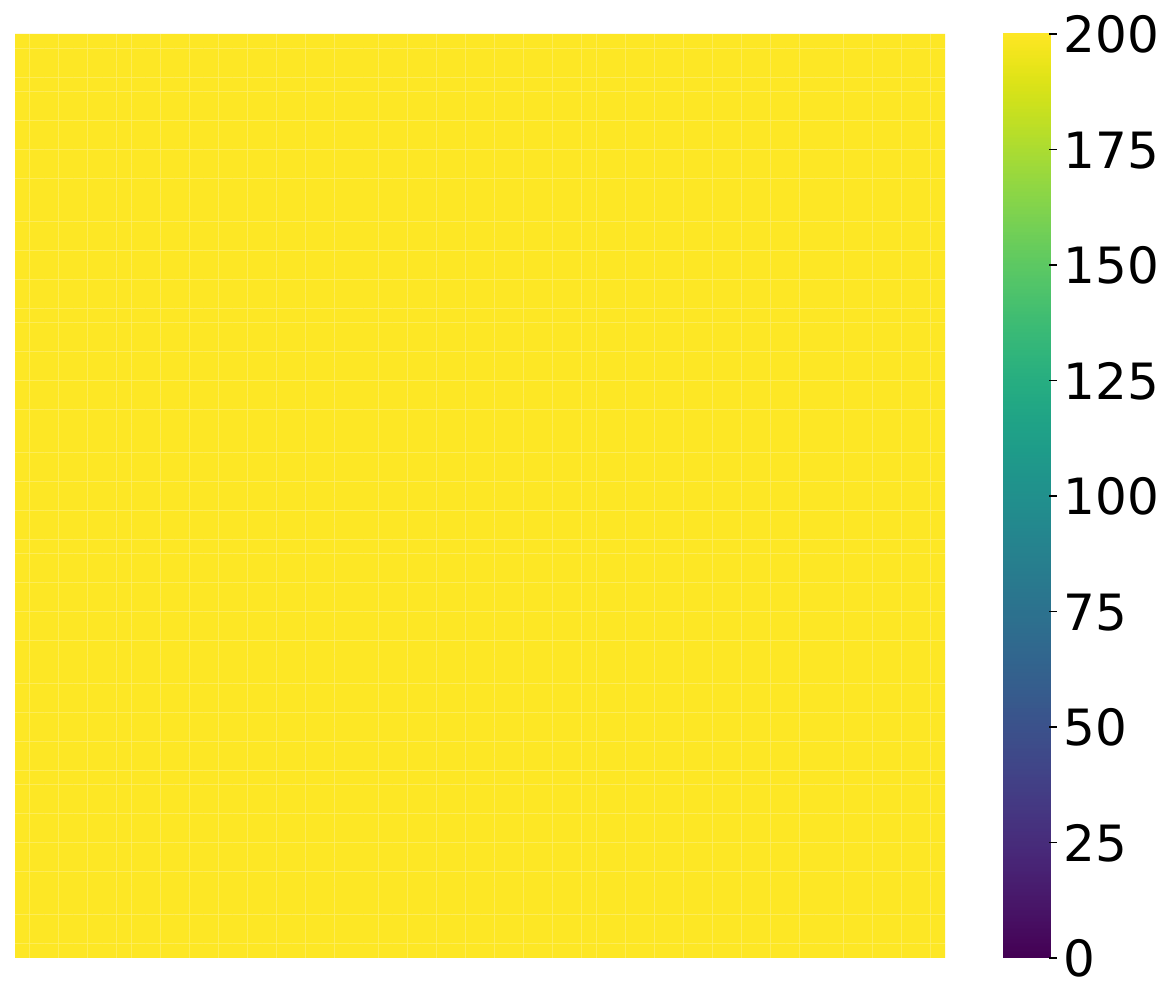}
        \caption{\small POET-BS ($b=8$)}
    \end{subfigure}
    \hfill
    \begin{subfigure}[b]{0.32\textwidth}
        \includegraphics[width=\textwidth]{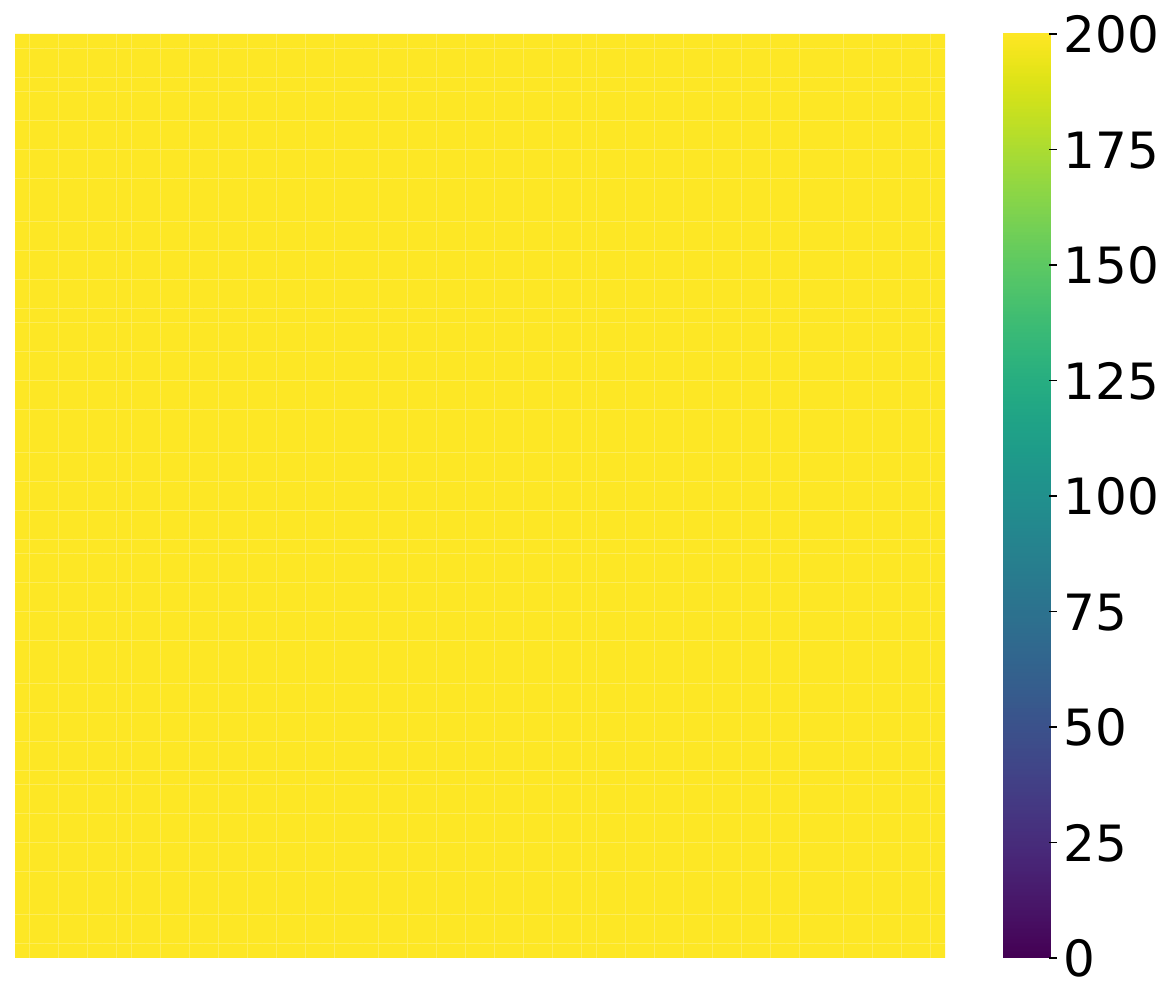}
        \caption{\small POET-BS ($b=16$)}
    \end{subfigure}
    \hfill
    \begin{subfigure}[b]{0.32\textwidth}
        \includegraphics[width=\textwidth]{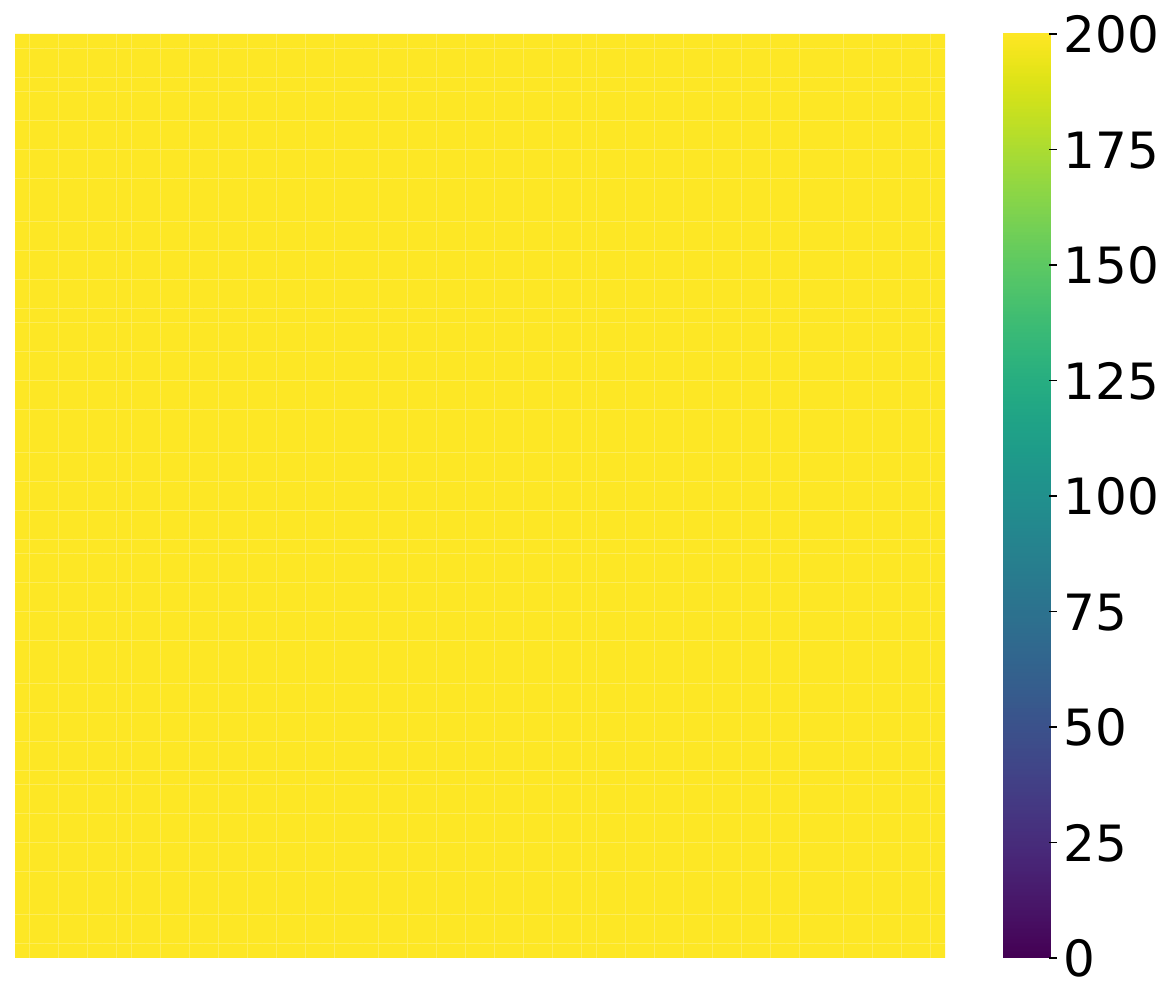}
        \caption{\small POET-BS ($b=32$)}
    \end{subfigure}
    \caption{\small Visualization of the weight update mechanism of POET-BS after 100 steps of update and $T_m = 1$.}
    \label{fig:update_bs}
\end{figure}

\begin{figure}[htbp]
    \centering
    \begin{subfigure}[b]{0.32\textwidth}
        \includegraphics[width=\textwidth]{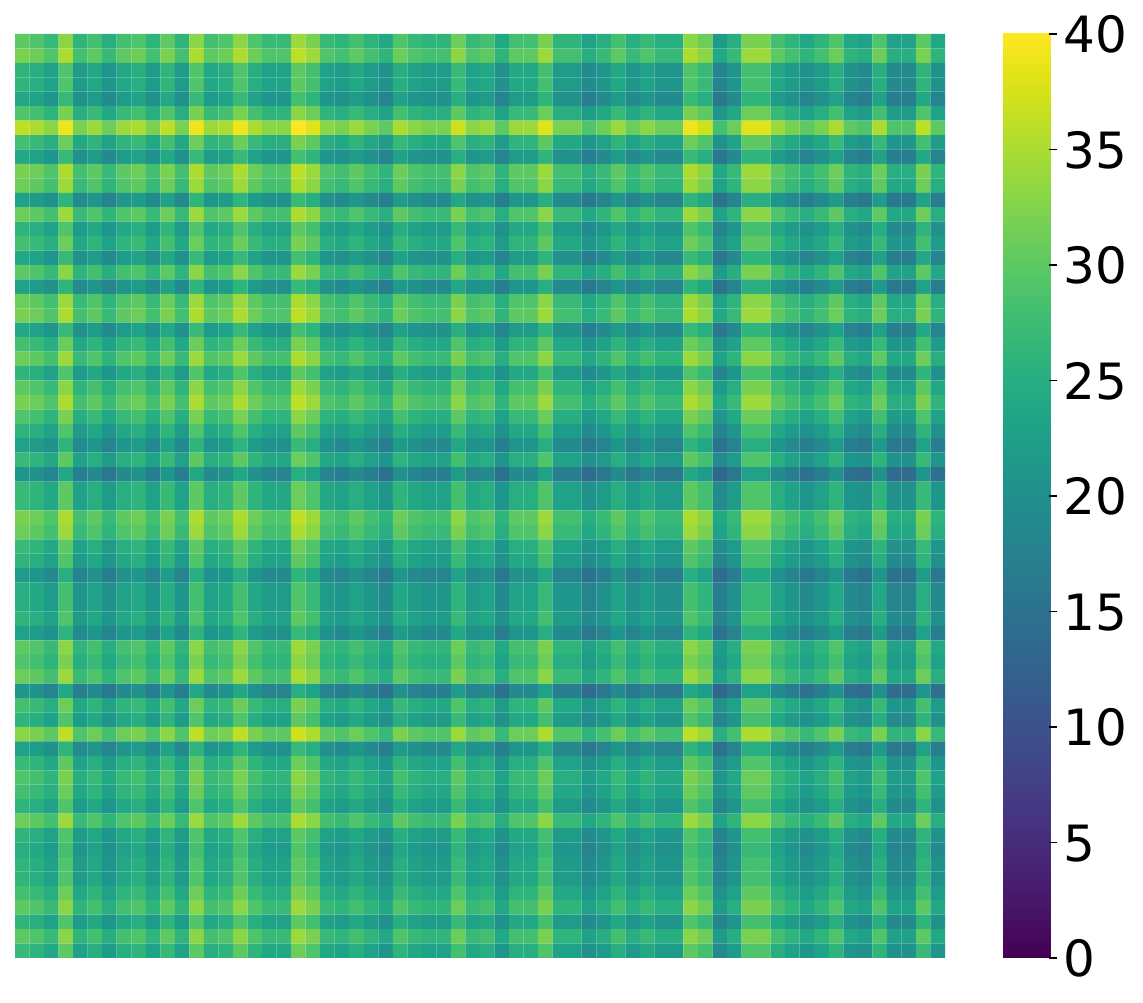}
        \caption{\small POET-FS ($b=1/8$)}
    \end{subfigure}
    \hfill
    \begin{subfigure}[b]{0.32\textwidth}
        \includegraphics[width=\textwidth]{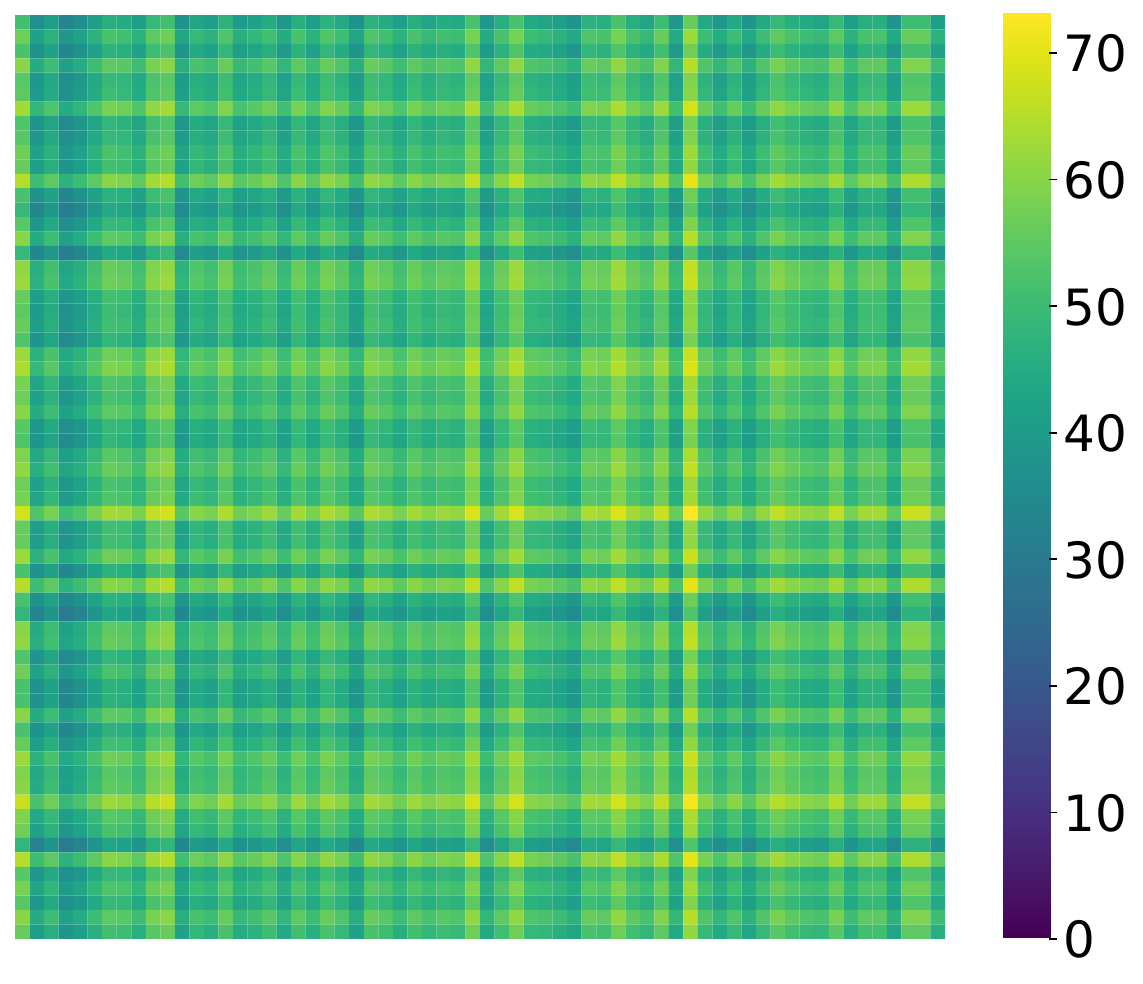}
        \caption{\small POET-FS ($b=1/4$)}
    \end{subfigure}
    \hfill
    \begin{subfigure}[b]{0.32\textwidth}
        \includegraphics[width=\textwidth]{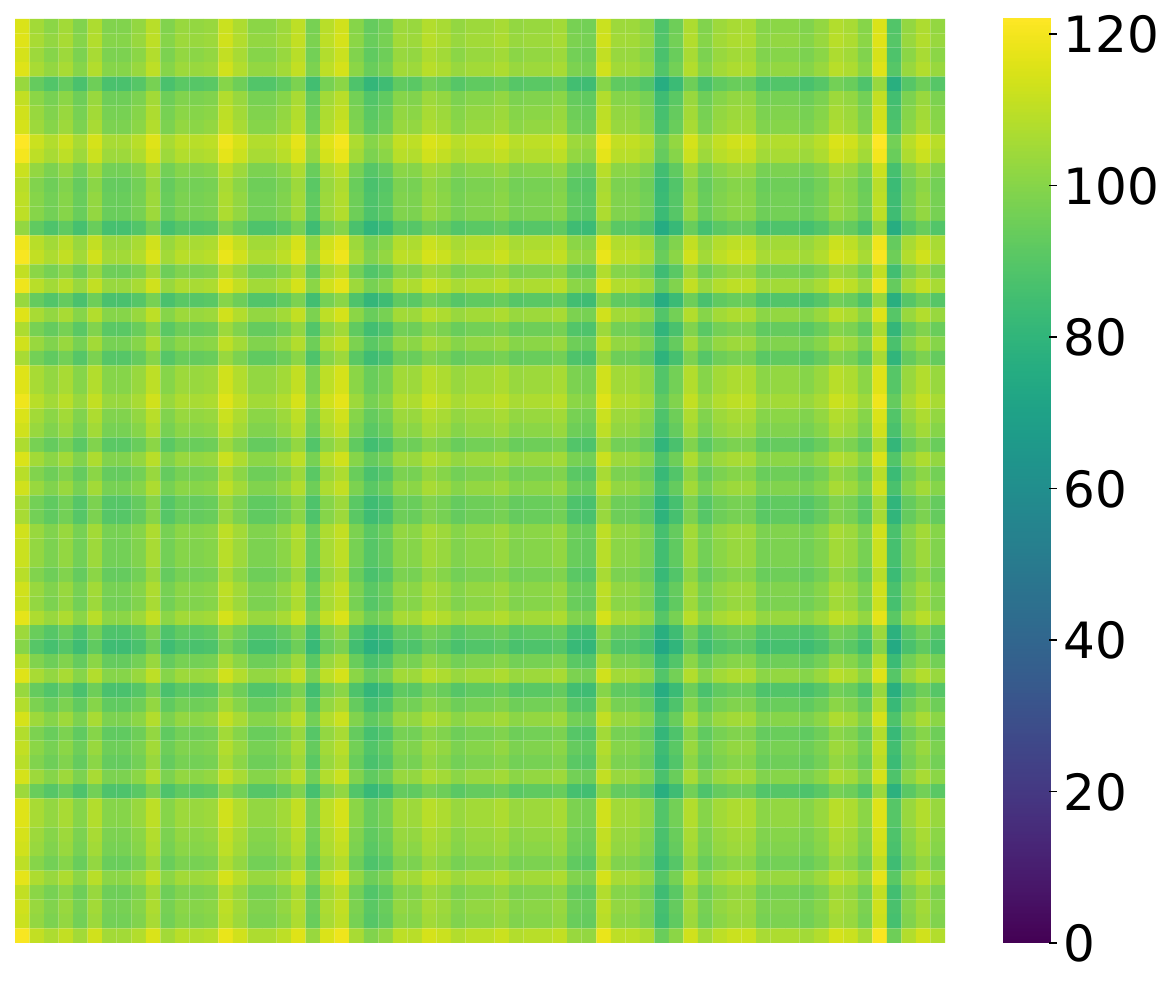}
        \caption{\small POET-FS ($b=1/2$)}
    \end{subfigure}
    \caption{\small Visualization of the weight update mechanism of POET-FS after 100 steps of update and $T_m = 1$.}
    \label{fig:update_fs}
\end{figure}

\newpage
\section{Training Dynamics of Singular Values}\label{app:svtd}

We conduct an ablation study to compare the training dynamics of singular values of weight matrices between \textbf{AdamW} and \textbf{POET}. The results of AdamW are given in Figure~\ref{fig:sv_adam}, Figure~\ref{fig:sv_adam2} and Figure~\ref{fig:sv_adam3}. The results of POET are given in Figure~\ref{fig:sv_adam_poet}, Figure~\ref{fig:sv_adam_poet2} and Figure~\ref{fig:sv_adam_poet3}. A 60M LLaMA model was trained for 50,000 iterations with an effective batch size of 512, using both AdamW and POET-FS ($b=1/2$). The model was evaluated every 5,000 steps, and the singular value dynamics are computed by performing singular value decomposition on the weight matrices. For POET, a merge-then-reinitialize step was applied before each evaluation. Training is finished at 50,000 steps, as the spectral norm of the AdamW-trained model plateaued at this point.
\vspace{4mm}

\begin{figure}[h!]
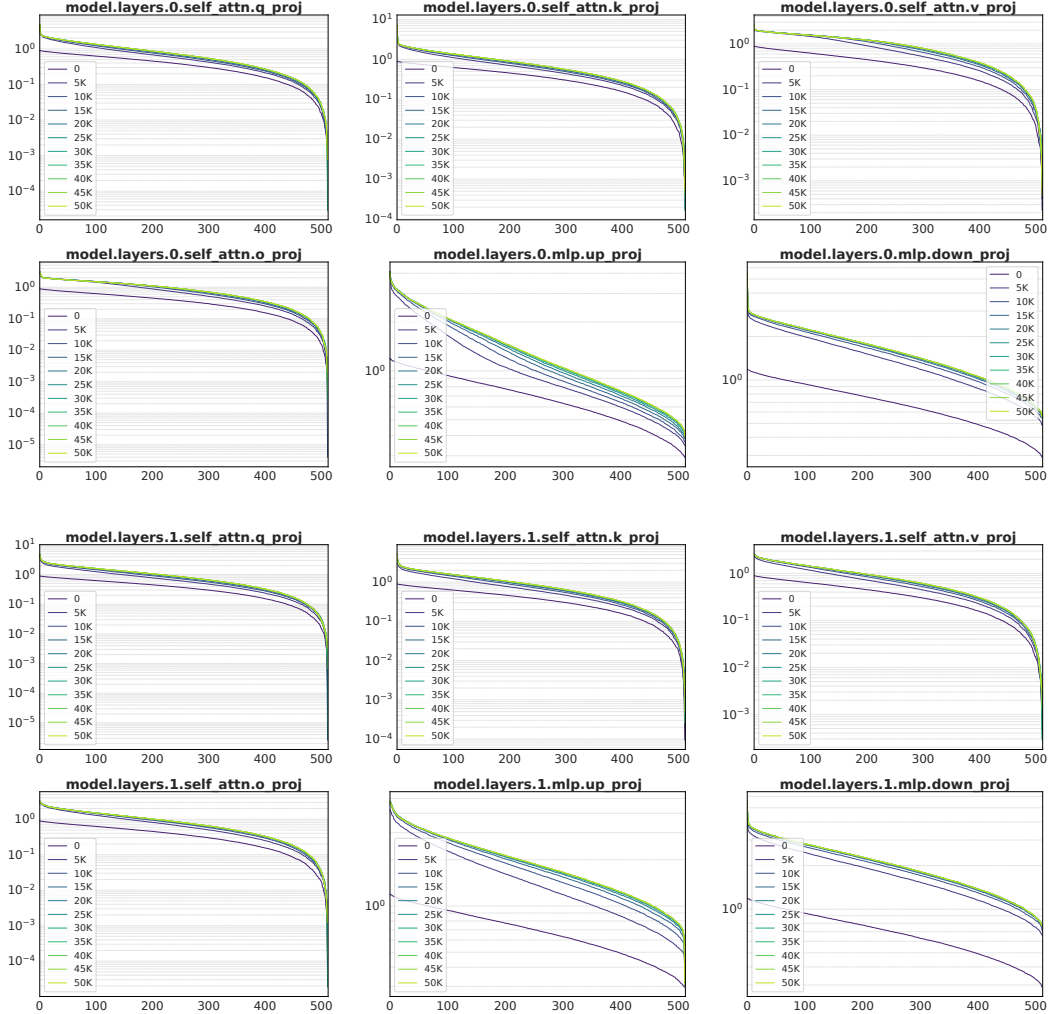

  \centering
  \foreach \layer in {0,...,1} {
    \noindent
      \includegraphics[width=0.32\textwidth]%
      {singular_value_adamw/model_layers_\layer_self_attn_q_proj.pdf}\hfill
      \includegraphics[width=0.32\textwidth]%
        {singular_value_adamw/model_layers_\layer_self_attn_k_proj.pdf}\hfill
      \includegraphics[width=0.32\textwidth]%
        {singular_value_adamw/model_layers_\layer_self_attn_v_proj.pdf}\hfill
        
      \includegraphics[width=0.32\textwidth]{singular_value_adamw/model_layers_\layer_self_attn_o_proj.pdf}\hfill
      \includegraphics[width=0.32\textwidth]%
        {singular_value_adamw/model_layers_\layer_mlp_up_proj.pdf}\hfill
      \includegraphics[width=0.32\textwidth]%
        {singular_value_adamw/model_layers_\layer_mlp_down_proj.pdf}\hfill
     \\[3ex]           
  }
  \vspace{-3mm}
    \caption{\small Training dynamics of the singular values of weight matrices
      within Blocks 0–1 (the $i$-th row represents Block $i$) of a 60M Llama Transformer trained
      with \textbf{AdamW}.}%
    \label{fig:sv_adam}%
\end{figure}

\begin{figure}[h!]
  \centering
  \foreach \layer in {2,...,4} {
    \noindent
      \includegraphics[width=0.32\textwidth]%
      {singular_value_adamw/model_layers_\layer_self_attn_q_proj.pdf}\hfill
      \includegraphics[width=0.32\textwidth]%
        {singular_value_adamw/model_layers_\layer_self_attn_k_proj.pdf}\hfill
      \includegraphics[width=0.32\textwidth]%
        {singular_value_adamw/model_layers_\layer_self_attn_v_proj.pdf}\hfill
        
      \includegraphics[width=0.32\textwidth]{singular_value_adamw/model_layers_\layer_self_attn_o_proj.pdf}\hfill
      \includegraphics[width=0.32\textwidth]%
        {singular_value_adamw/model_layers_\layer_mlp_up_proj.pdf}\hfill
      \includegraphics[width=0.32\textwidth]%
        {singular_value_adamw/model_layers_\layer_mlp_down_proj.pdf}\hfill
     \\[3ex]           
  }
  \vspace{-3mm}
    \caption{\small Training dynamics of the singular values of weight matrices
      within Blocks 2–4 (the $i$-th row represents Block $i$) of a 60M Llama Transformer trained
      with \textbf{AdamW}.}%
    \label{fig:sv_adam2}%
\end{figure}

\begin{figure}[h!]
  \centering
  \foreach \layer in {5,...,7} {
    \noindent
      \includegraphics[width=0.32\textwidth]%
      {singular_value_adamw/model_layers_\layer_self_attn_q_proj.pdf}\hfill
      \includegraphics[width=0.32\textwidth]%
        {singular_value_adamw/model_layers_\layer_self_attn_k_proj.pdf}\hfill
      \includegraphics[width=0.32\textwidth]%
        {singular_value_adamw/model_layers_\layer_self_attn_v_proj.pdf}\hfill
        
      \includegraphics[width=0.32\textwidth]{singular_value_adamw/model_layers_\layer_self_attn_o_proj.pdf}\hfill
      \includegraphics[width=0.32\textwidth]%
        {singular_value_adamw/model_layers_\layer_mlp_up_proj.pdf}\hfill
      \includegraphics[width=0.32\textwidth]%
        {singular_value_adamw/model_layers_\layer_mlp_down_proj.pdf}\hfill
     \\[3ex]           
  }
  \vspace{-3mm}
    \caption{\small Training dynamics of the singular values of weight matrices
      within Blocks 5–7 (the $i$-th row represents Block $i$) of a 60M Llama Transformer trained
      with \textbf{AdamW}.}%
    \label{fig:sv_adam3}%
\end{figure}

\begin{figure}[h!]
  \centering
  \foreach \layer in {0,...,2} {
    \noindent
      \includegraphics[width=0.32\textwidth]{singular_value_poet/model_layers_\layer_self_attn_q_proj.pdf}\hfill
      \includegraphics[width=0.32\textwidth]{singular_value_poet/model_layers_\layer_self_attn_k_proj.pdf}\hfill
      \includegraphics[width=0.32\textwidth]{singular_value_poet/model_layers_\layer_self_attn_v_proj.pdf}\hfill
      \includegraphics[width=0.32\textwidth]{singular_value_poet/model_layers_\layer_self_attn_o_proj.pdf}\hfill
      \includegraphics[width=0.32\textwidth]{singular_value_poet/model_layers_\layer_mlp_up_proj.pdf}\hfill
      \includegraphics[width=0.32\textwidth]{singular_value_poet/model_layers_\layer_mlp_down_proj.pdf}\hfill
    \\[3ex]           
  }
  \vspace{-3mm}
    \caption{\small This plot illustrates the singular value training dynamics for individual weight matrices within Blocks 0-2 of a 60M Llama transformer model trained with \textbf{POET}. For each block, the dynamics are shown for the self-attention components (query, key, value, and output projections) and the feed-forward network components (up-projection, down-projection, and gate-projection).}
    \label{fig:sv_adam_poet}
\end{figure}

\begin{figure}[h!]
  \centering
  \foreach \layer in {3,...,5} {
    \noindent
      \includegraphics[width=0.32\textwidth]{singular_value_poet/model_layers_\layer_self_attn_q_proj.pdf}\hfill
      \includegraphics[width=0.32\textwidth]{singular_value_poet/model_layers_\layer_self_attn_k_proj.pdf}\hfill
      \includegraphics[width=0.32\textwidth]{singular_value_poet/model_layers_\layer_self_attn_v_proj.pdf}\hfill
      \includegraphics[width=0.32\textwidth]{singular_value_poet/model_layers_\layer_self_attn_o_proj.pdf}\hfill
      \includegraphics[width=0.32\textwidth]{singular_value_poet/model_layers_\layer_mlp_up_proj.pdf}\hfill
      \includegraphics[width=0.32\textwidth]{singular_value_poet/model_layers_\layer_mlp_down_proj.pdf}\hfill
    \\[3ex]           
  }
  \vspace{-3mm}
    \caption{\small This plot illustrates the singular value training dynamics for individual weight matrices within Blocks 3-5 of a 60M Llama transformer model trained with \textbf{POET}. For each block, the dynamics are shown for the self-attention components (query, key, value, and output projections) and the feed-forward network components (up-projection, down-projection, and gate-projection).}
    \label{fig:sv_adam_poet2}
\end{figure}

\begin{figure}[h!]
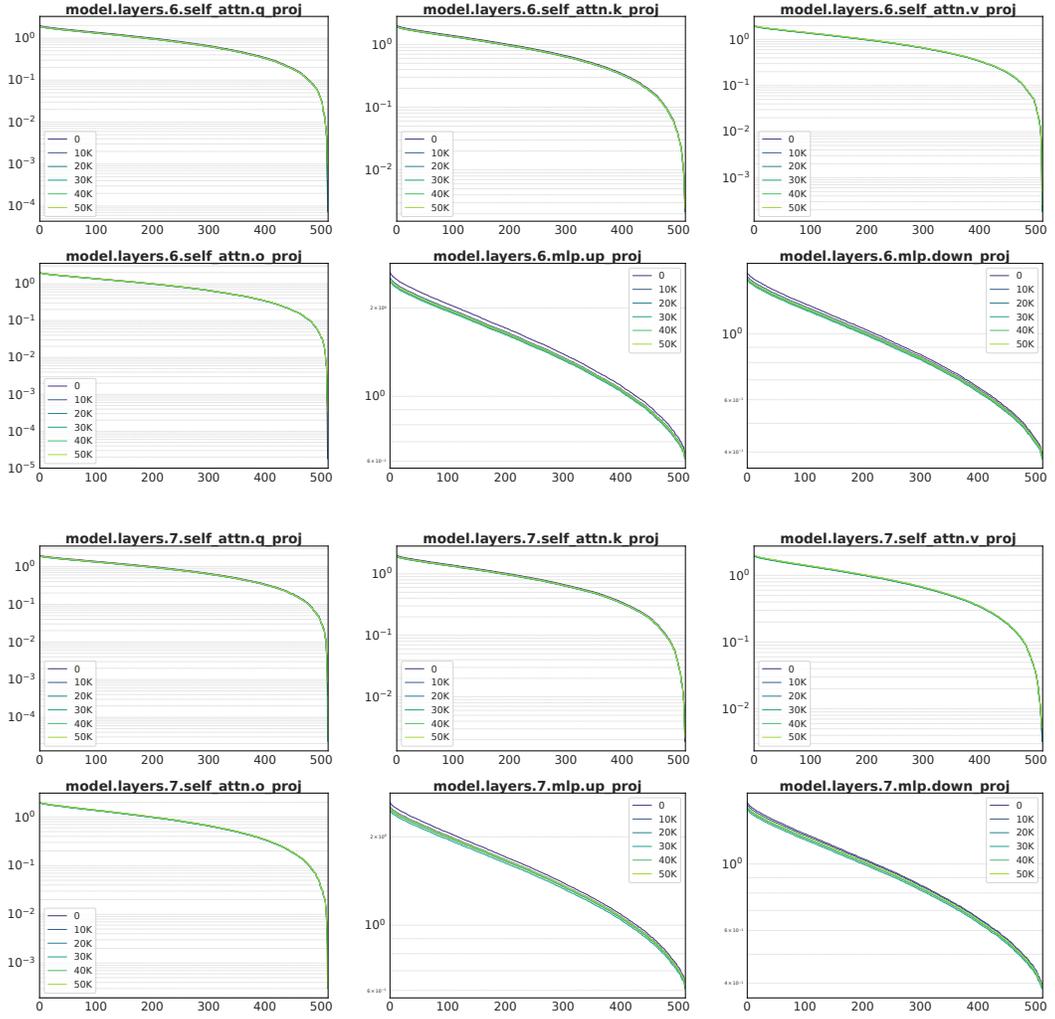

  \centering
  \foreach \layer in {6,...,7} {
    \noindent
      \includegraphics[width=0.32\textwidth]{singular_value_poet/model_layers_\layer_self_attn_q_proj.pdf}\hfill
      \includegraphics[width=0.32\textwidth]{singular_value_poet/model_layers_\layer_self_attn_k_proj.pdf}\hfill
      \includegraphics[width=0.32\textwidth]{singular_value_poet/model_layers_\layer_self_attn_v_proj.pdf}\hfill
      \includegraphics[width=0.32\textwidth]{singular_value_poet/model_layers_\layer_self_attn_o_proj.pdf}\hfill
      \includegraphics[width=0.32\textwidth]{singular_value_poet/model_layers_\layer_mlp_up_proj.pdf}\hfill
      \includegraphics[width=0.32\textwidth]{singular_value_poet/model_layers_\layer_mlp_down_proj.pdf}\hfill
    \\[3ex]           
  }
  \vspace{-3mm}
    \caption{\small This plot illustrates the singular value training dynamics for individual weight matrices within Blocks 6-7 of a 60M Llama transformer model trained with \textbf{POET}. For each block, the dynamics are shown for the self-attention components (query, key, value, and output projections) and the feed-forward network components (up-projection, down-projection, and gate-projection).}
    \label{fig:sv_adam_poet3}
\end{figure}

\newpage
\clearpage
\section{Orthogonality Approximation Quality using Neumann Series}\label{app:orth}

In this ablation study, we evaluate the approximation error of the orthogonal matrices $\bm{R} \in \mathbb{R}^{m \times m}$ and $\bm{P} \in \mathbb{R}^{n \times n}$ across all linear layers in Block 0 of a 130M LLaMA model trained with POET-FS ($b = 1/2$) for 10,000 steps. Figure~\ref{fig:r_right_orth} and Figure~\ref{fig:r_left_orth} show the approximation error over the first 1,000 steps. Since the error difference between $k=3$ and $k=4$ was negligible, we used $k=3$ for better computational efficiency. Empirically, while $k=1$ or $k=2$ suffices for smaller LLaMA models, larger $k$ values are needed to avoid training divergence caused by exploding gradients due to approximation error.

\vspace{-1mm}

\begin{figure}[htbp]
    \centering
    \begin{subfigure}[b]{0.32\textwidth}
        \includegraphics[width=\textwidth]{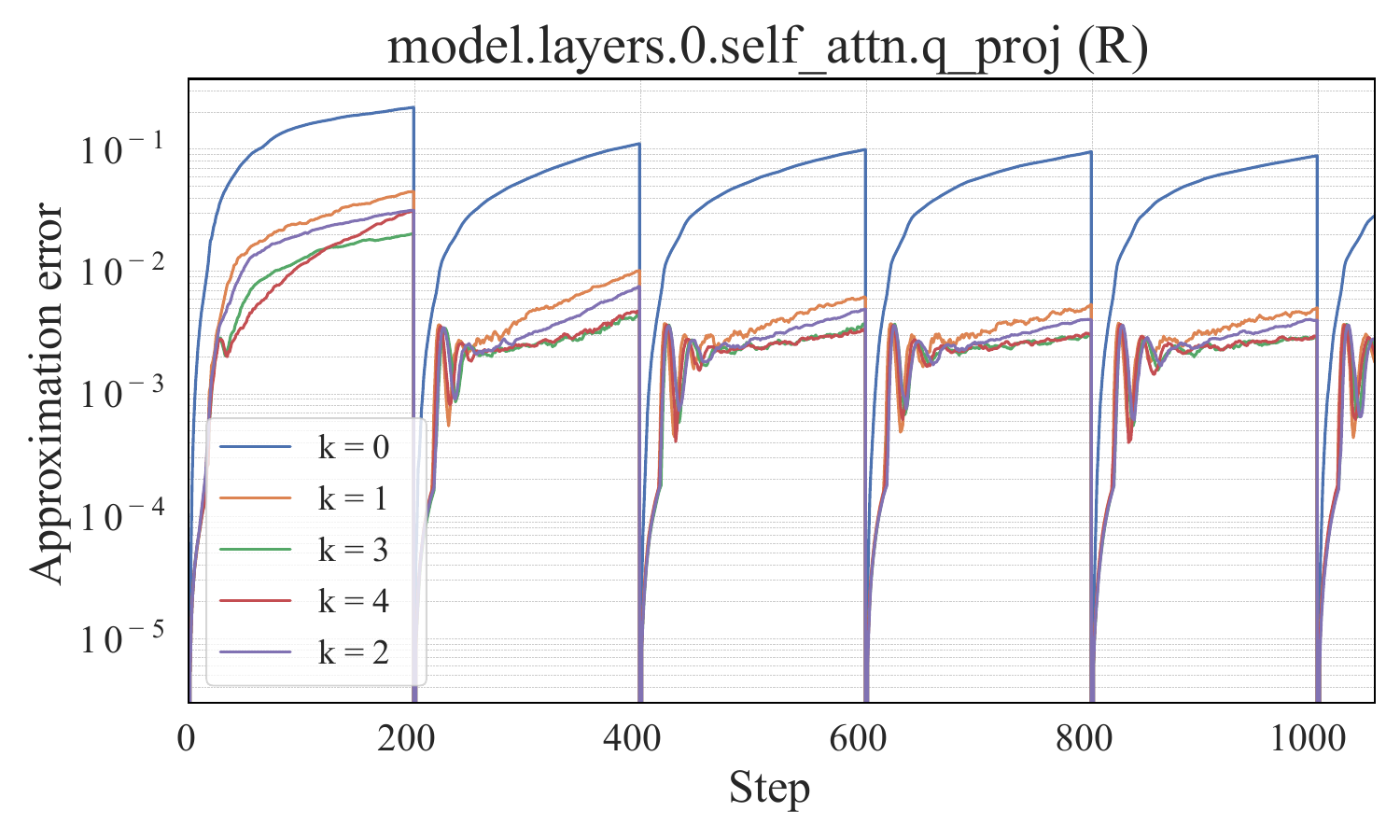}
    \end{subfigure}
    \hfill
    \begin{subfigure}[b]{0.32\textwidth}
        \includegraphics[width=\textwidth]{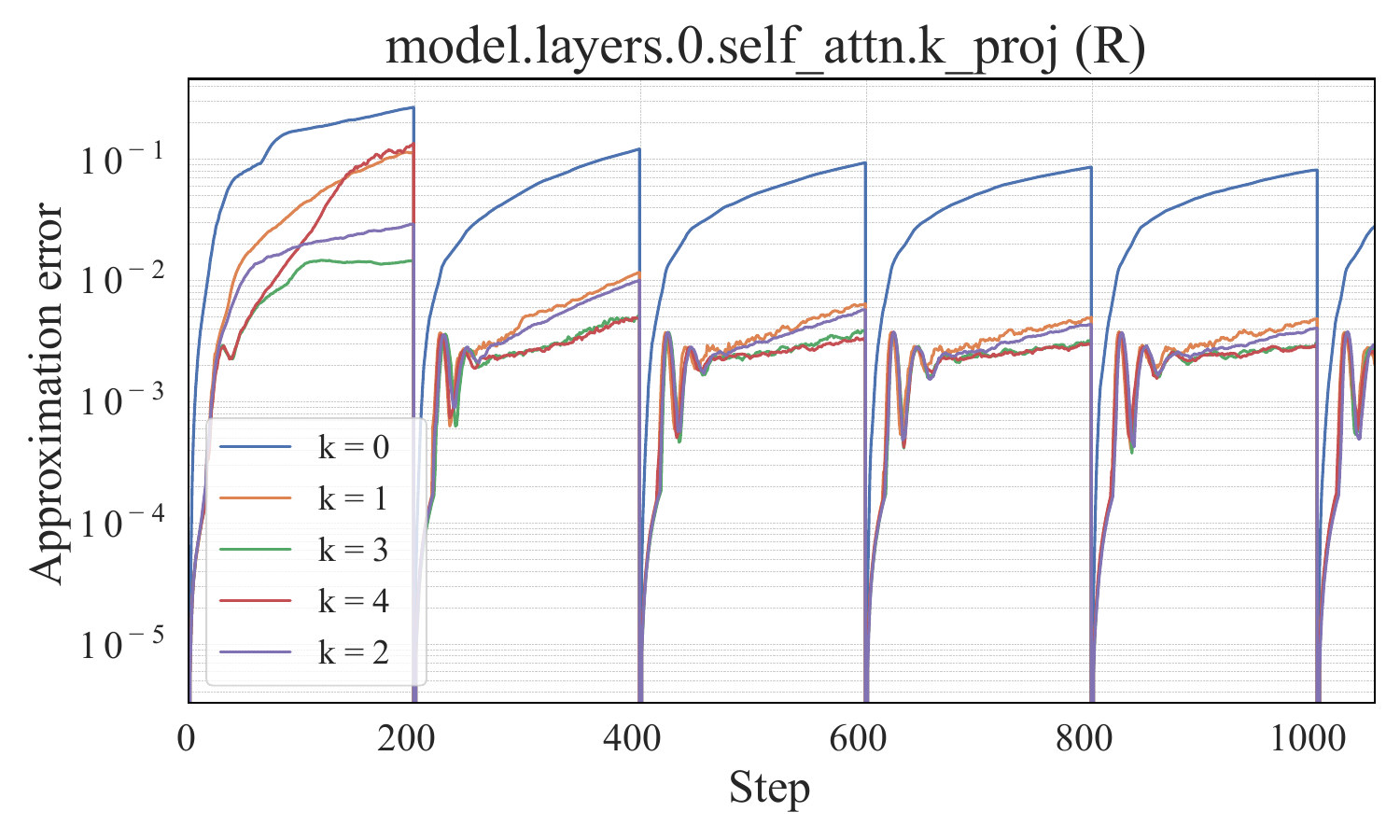}
    \end{subfigure}
    \hfill
    \begin{subfigure}[b]{0.32\textwidth}
        \includegraphics[width=\textwidth]{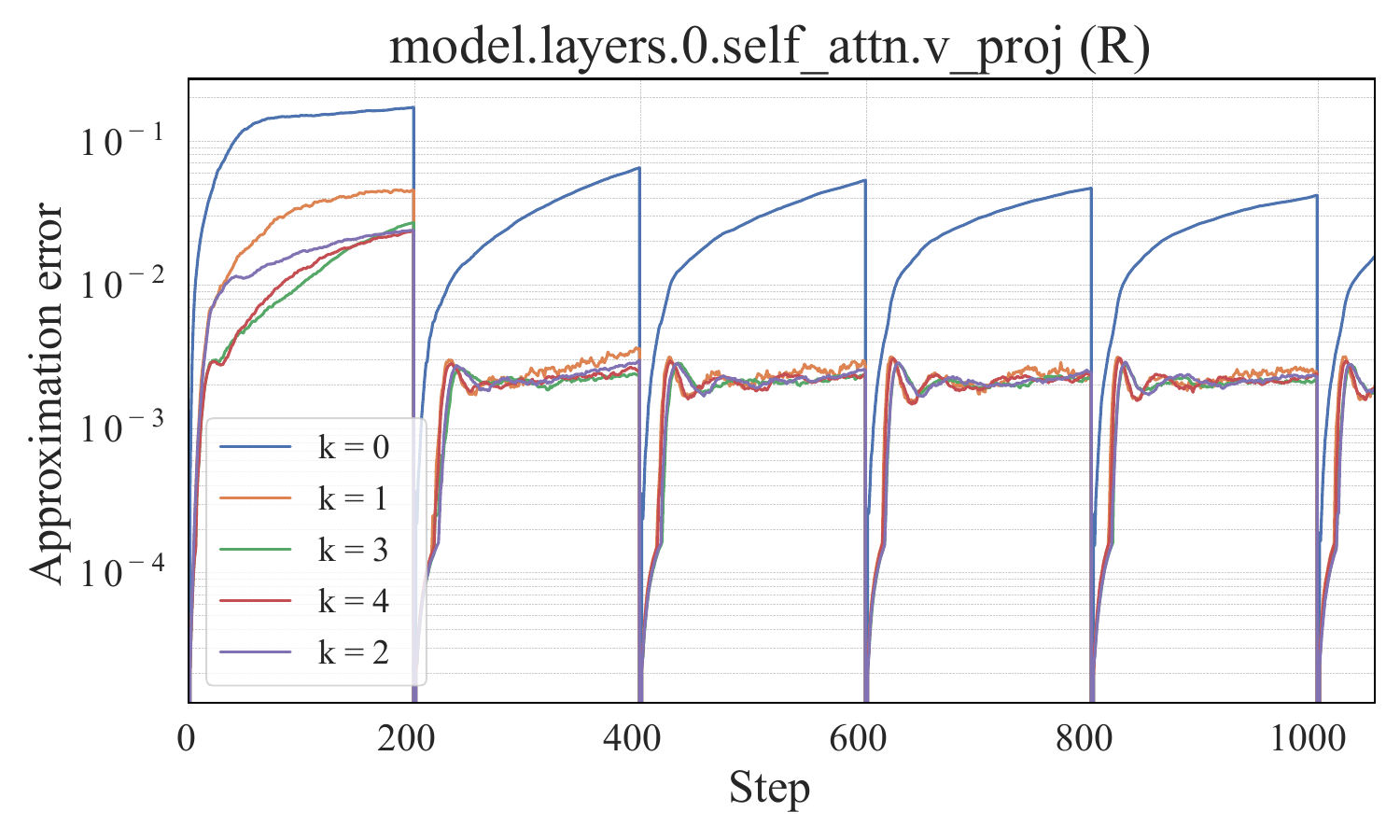}
    \end{subfigure}
    
    \begin{subfigure}[b]{0.32\textwidth}
        \includegraphics[width=\textwidth]{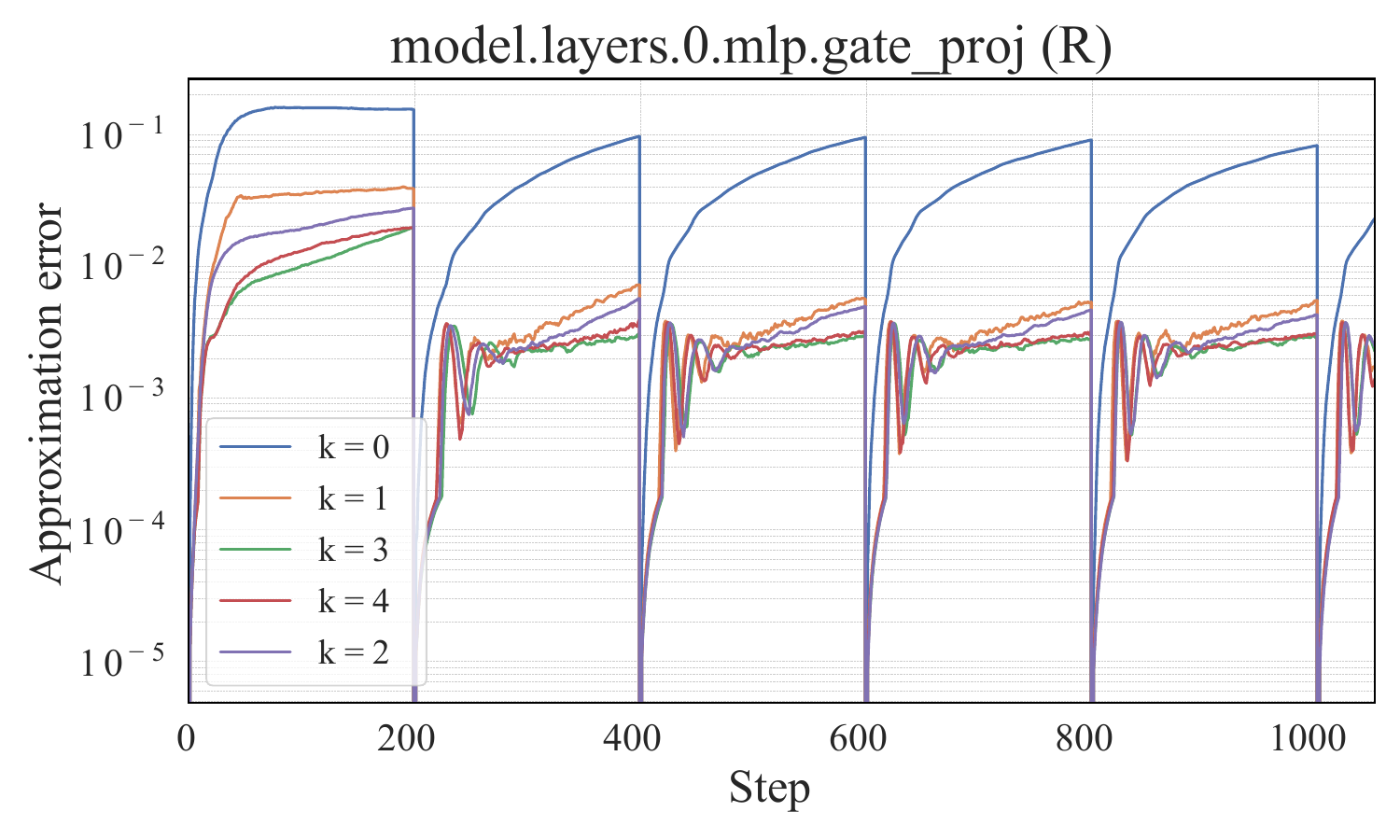}
    \end{subfigure}
    \hfill
    \begin{subfigure}[b]{0.32\textwidth}
        \includegraphics[width=\textwidth]{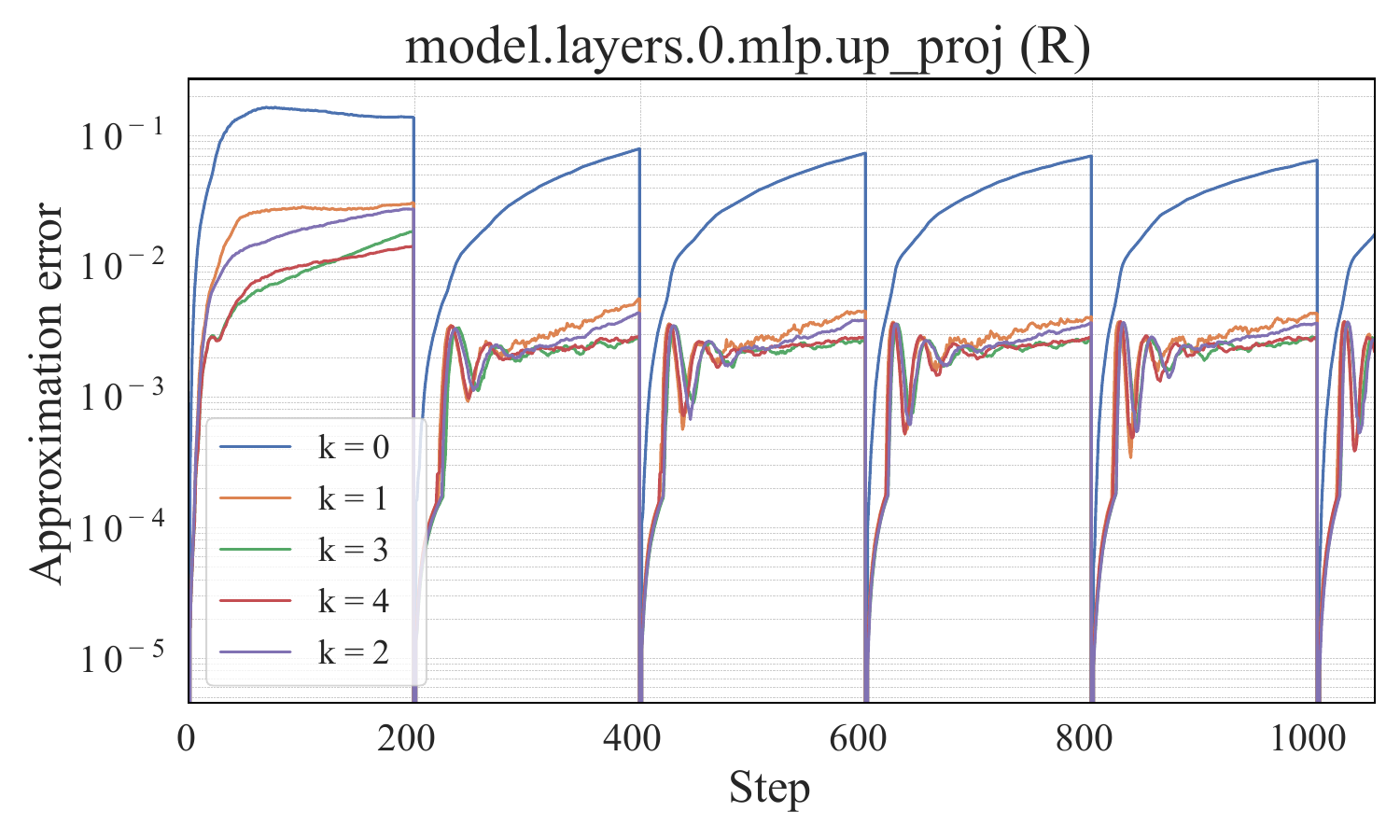}
    \end{subfigure}
    \hfill
    \begin{subfigure}[b]{0.32\textwidth}
        \includegraphics[width=\textwidth]{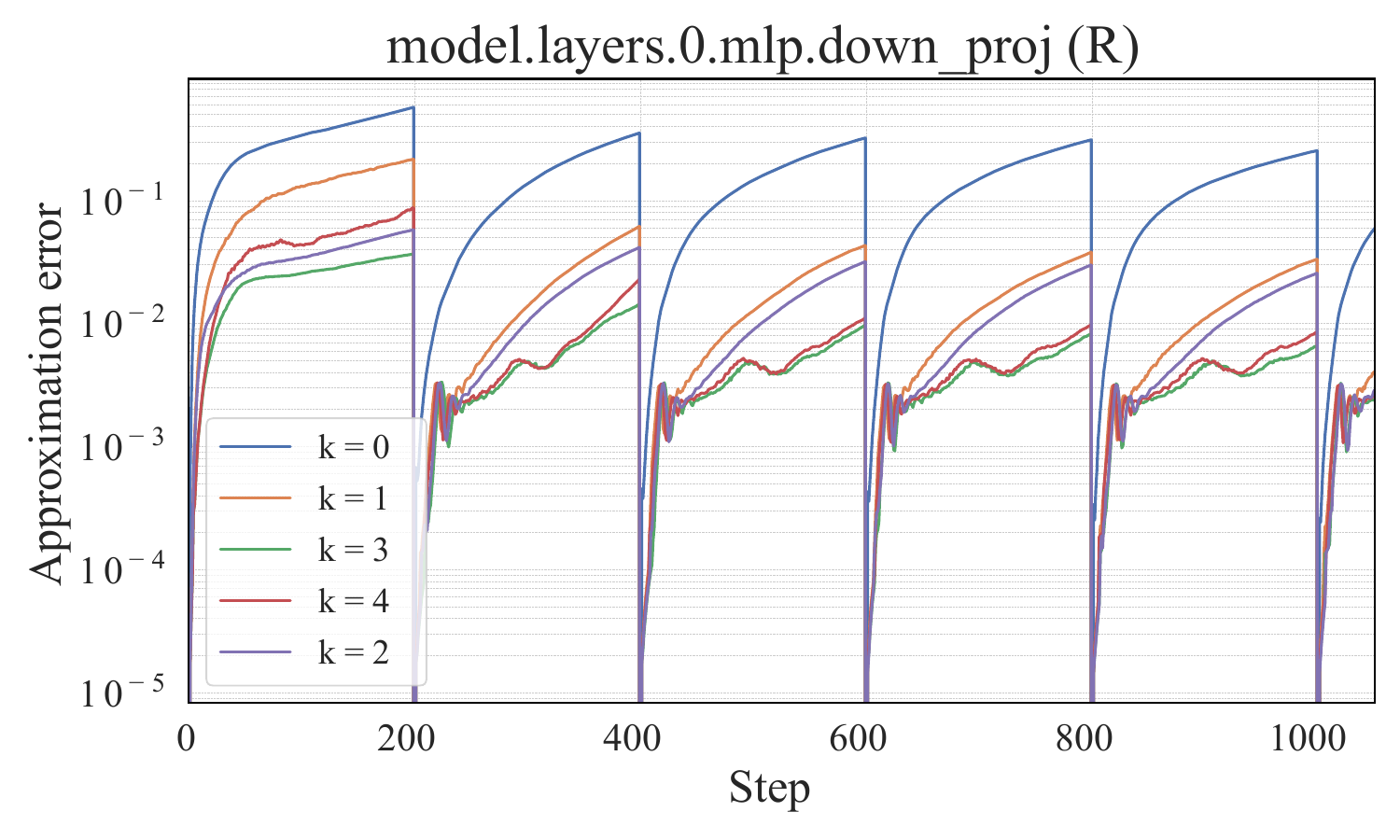}
    \end{subfigure}
    
    \centering
    \begin{subfigure}[b]{0.32\textwidth}
        \includegraphics[width=\textwidth]{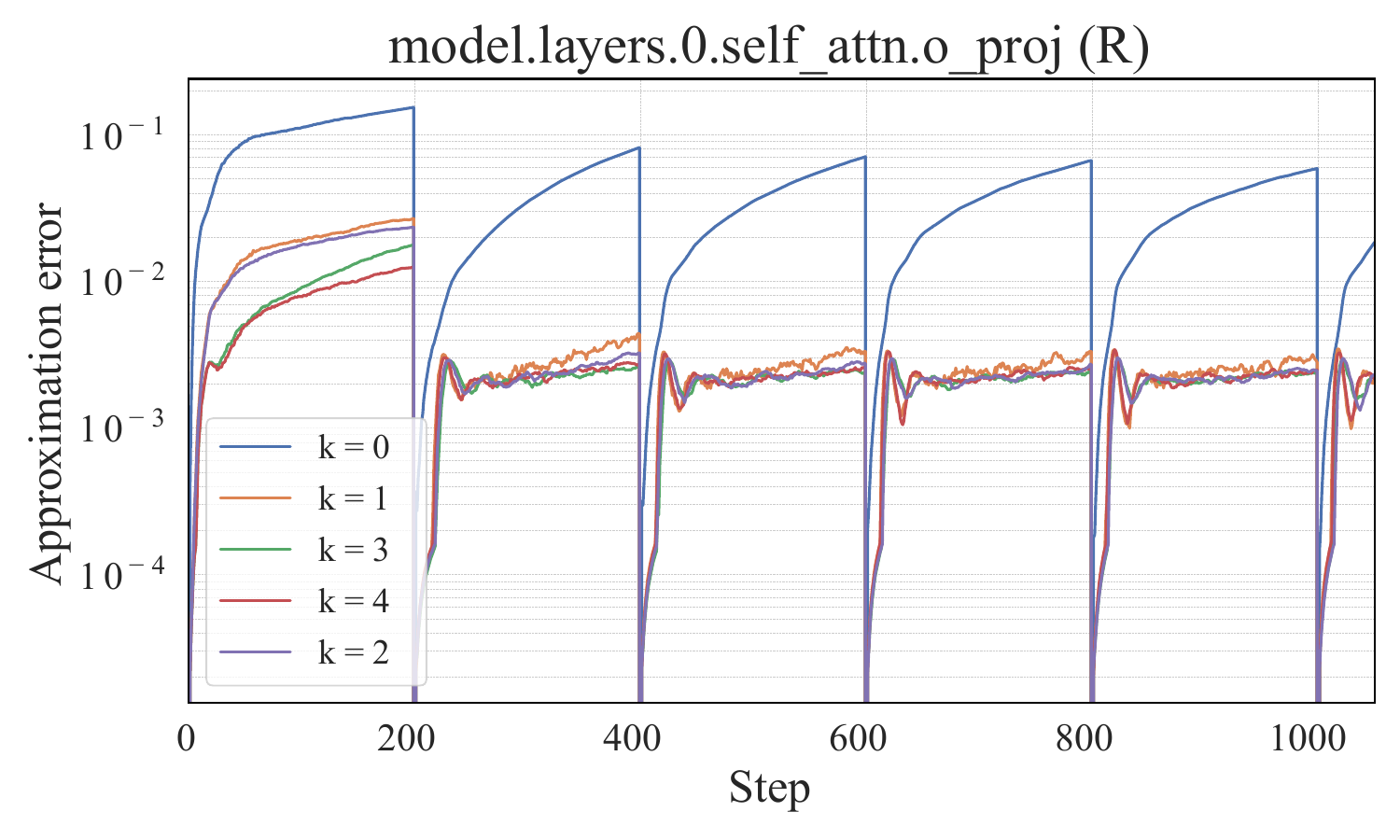}
    \end{subfigure}
    \vspace{-2mm}
    \caption{\small For the transformer block 0, we show approximation error of orthogonal matrix $\bm{R}$ for the self-attention components (query, key, value, and output projections) and the feed-forward network components (up-projection, down-projection, and gate-projection).}
    \label{fig:r_right_orth}
    \vspace{-1mm}
\end{figure}

\begin{figure}[htbp]
    \centering
    \begin{subfigure}[b]{0.32\textwidth}
        \includegraphics[width=\textwidth]{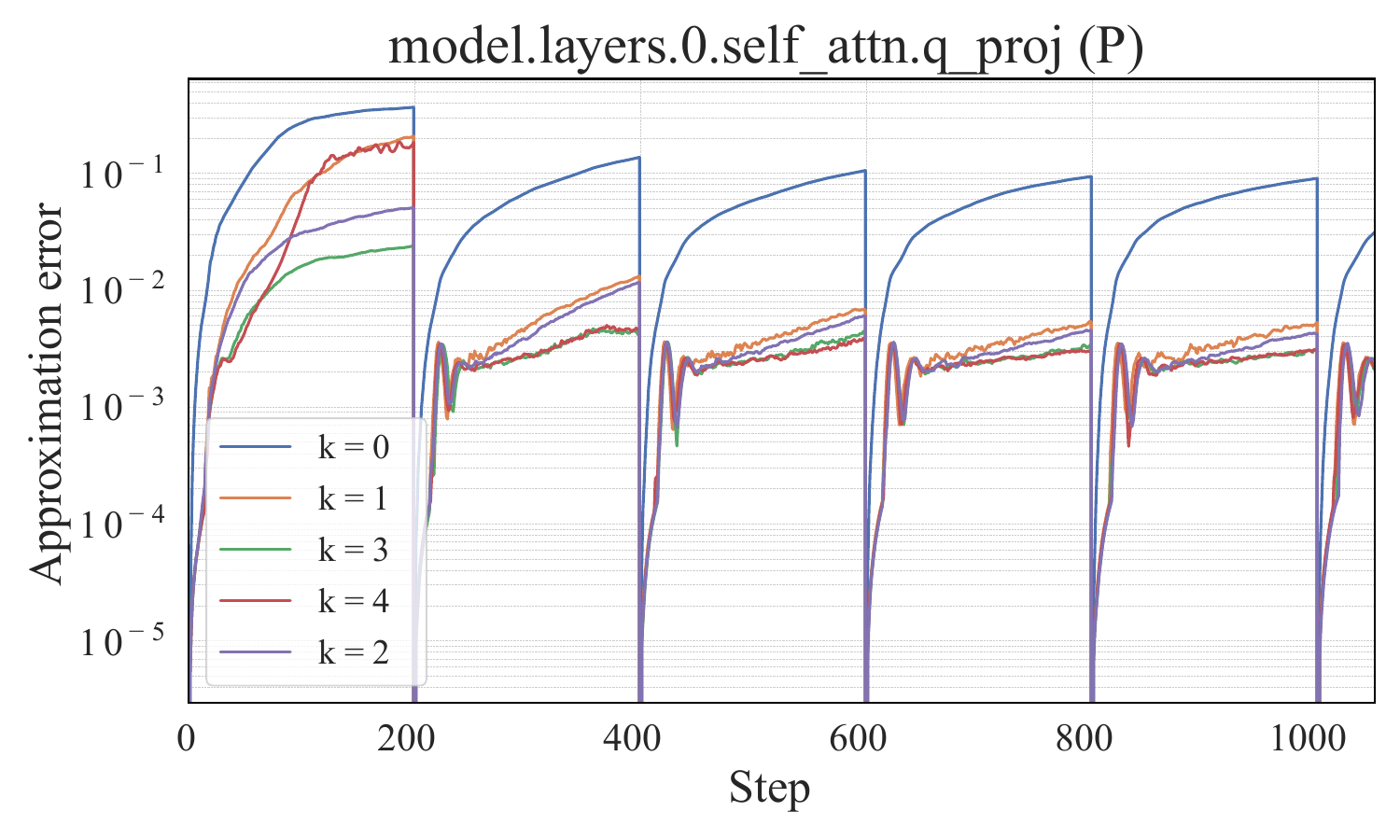}
    \end{subfigure}
    \hfill
    \begin{subfigure}[b]{0.32\textwidth}
        \includegraphics[width=\textwidth]{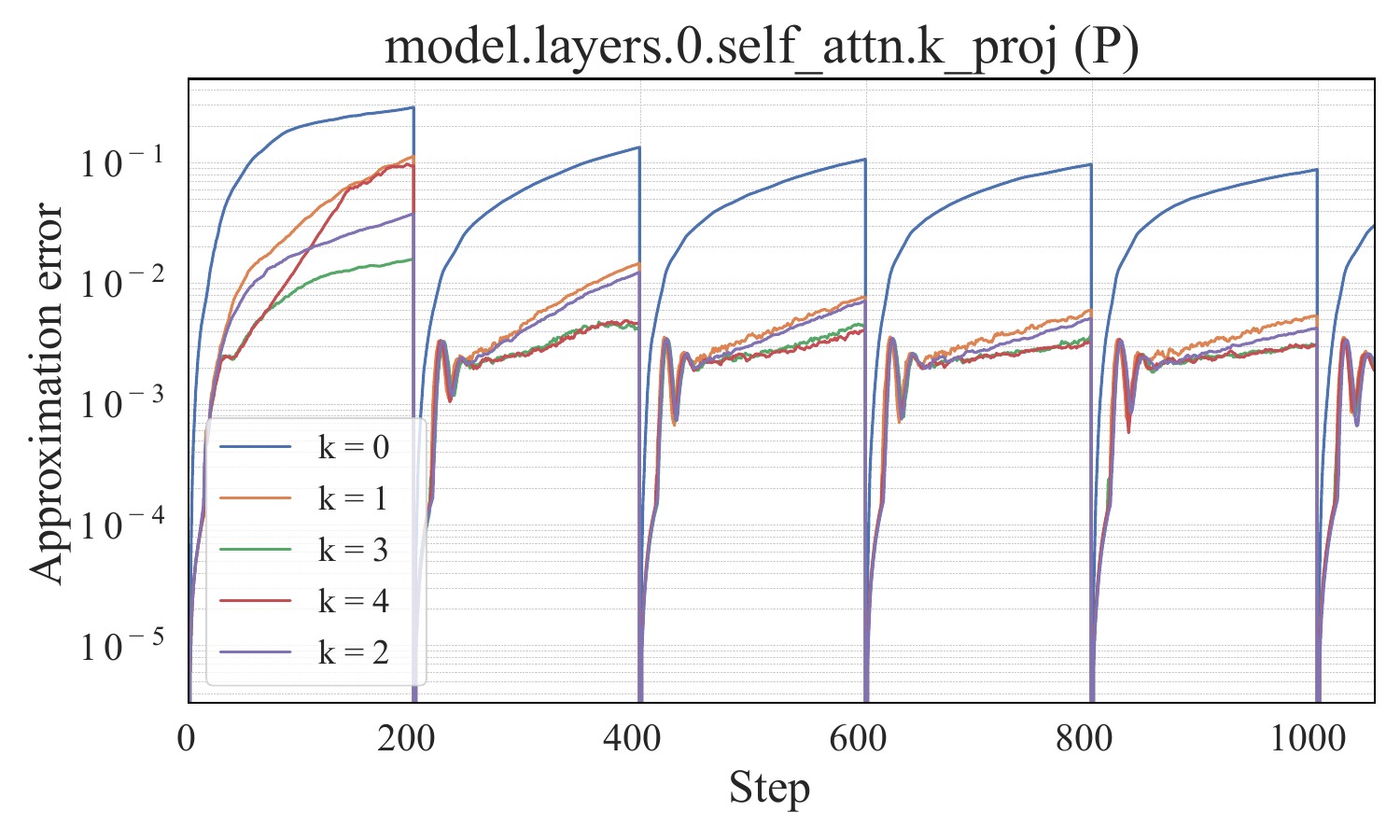}
    \end{subfigure}
    \hfill
    \begin{subfigure}[b]{0.32\textwidth}
        \includegraphics[width=\textwidth]{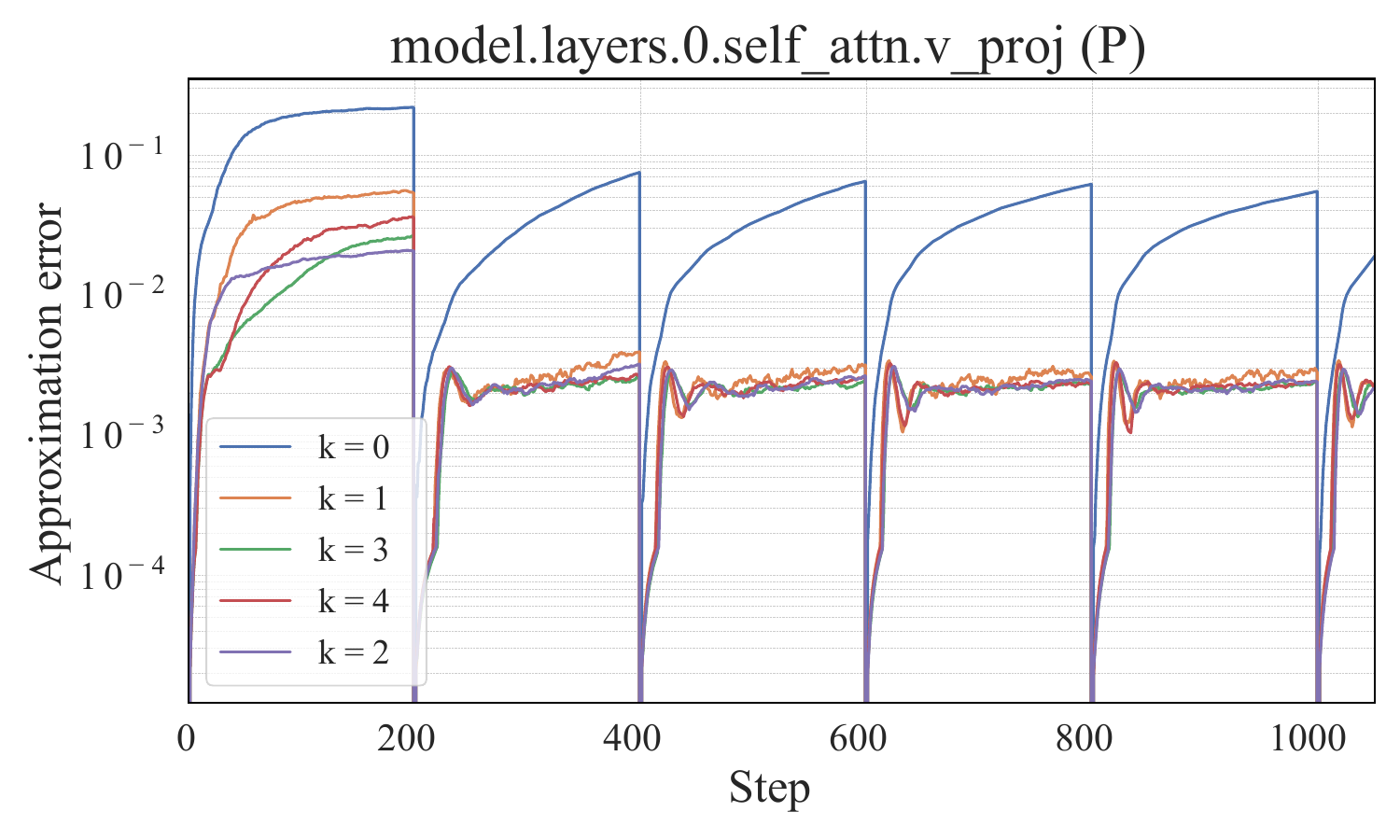}
    \end{subfigure}
    
    \begin{subfigure}[b]{0.32\textwidth}
        \includegraphics[width=\textwidth]{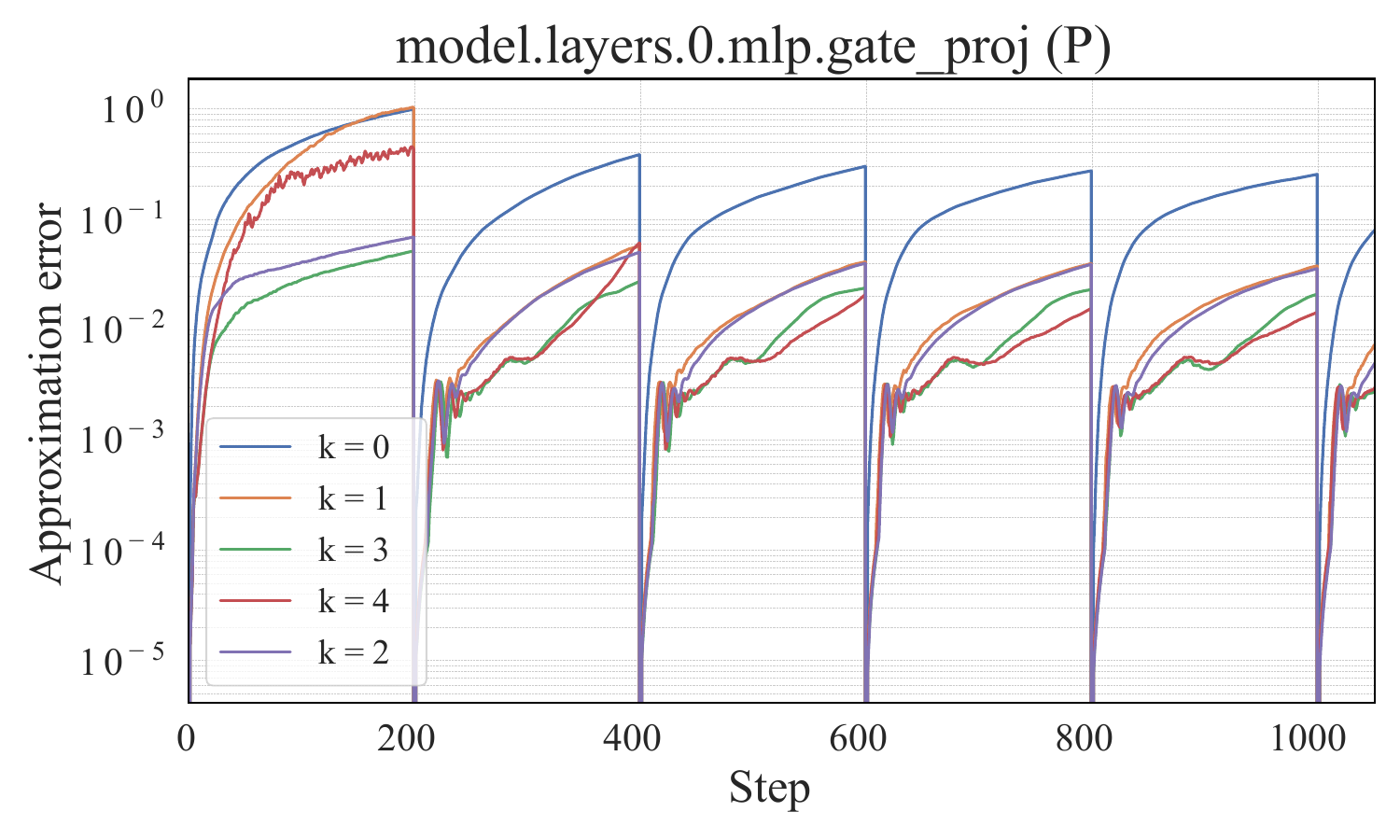}
    \end{subfigure}
    \hfill
    \begin{subfigure}[b]{0.32\textwidth}
        \includegraphics[width=\textwidth]{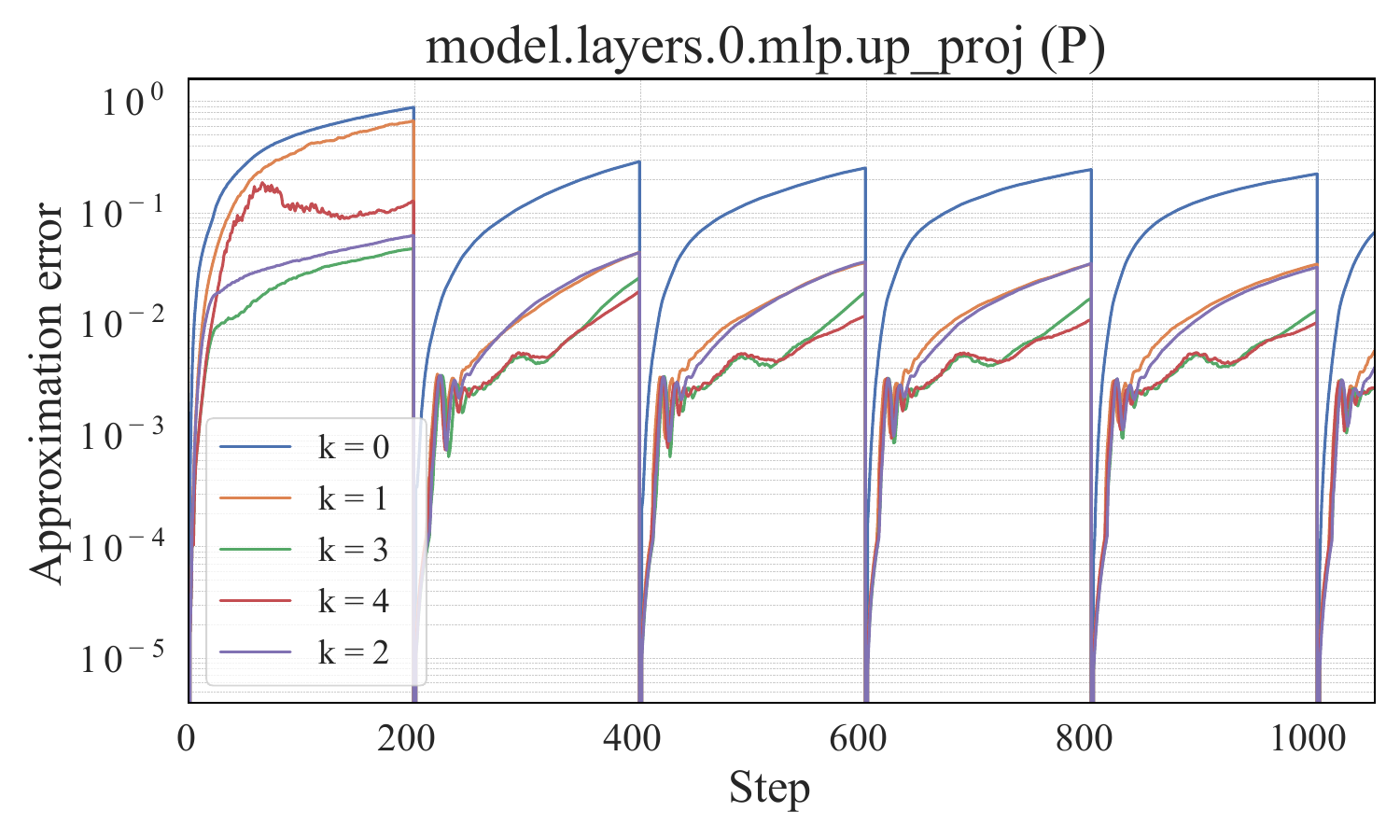}
    \end{subfigure}
    \hfill
    \begin{subfigure}[b]{0.32\textwidth}
        \includegraphics[width=\textwidth]{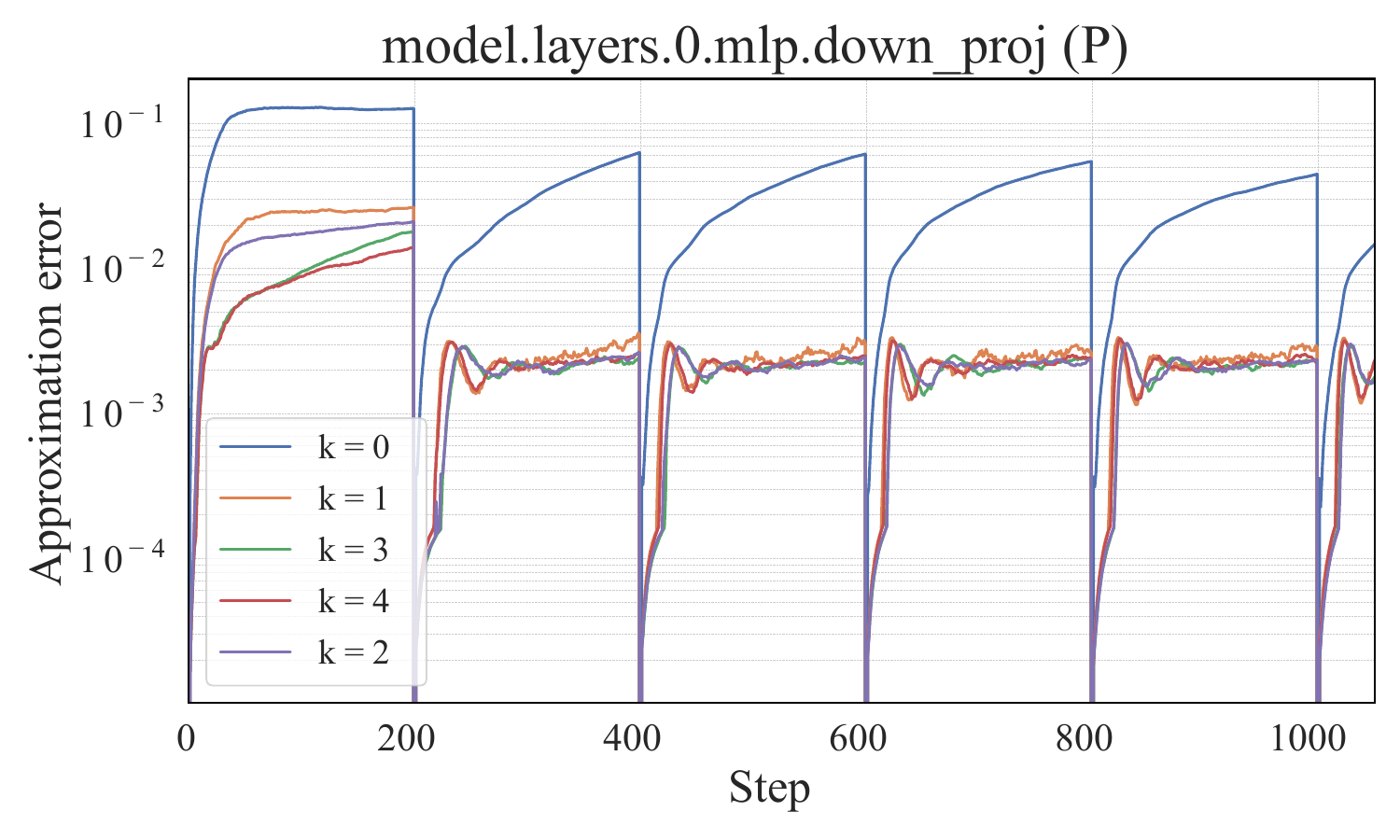}
    \end{subfigure}
    
    \centering
    \begin{subfigure}[b]{0.32\textwidth}
        \includegraphics[width=\textwidth]{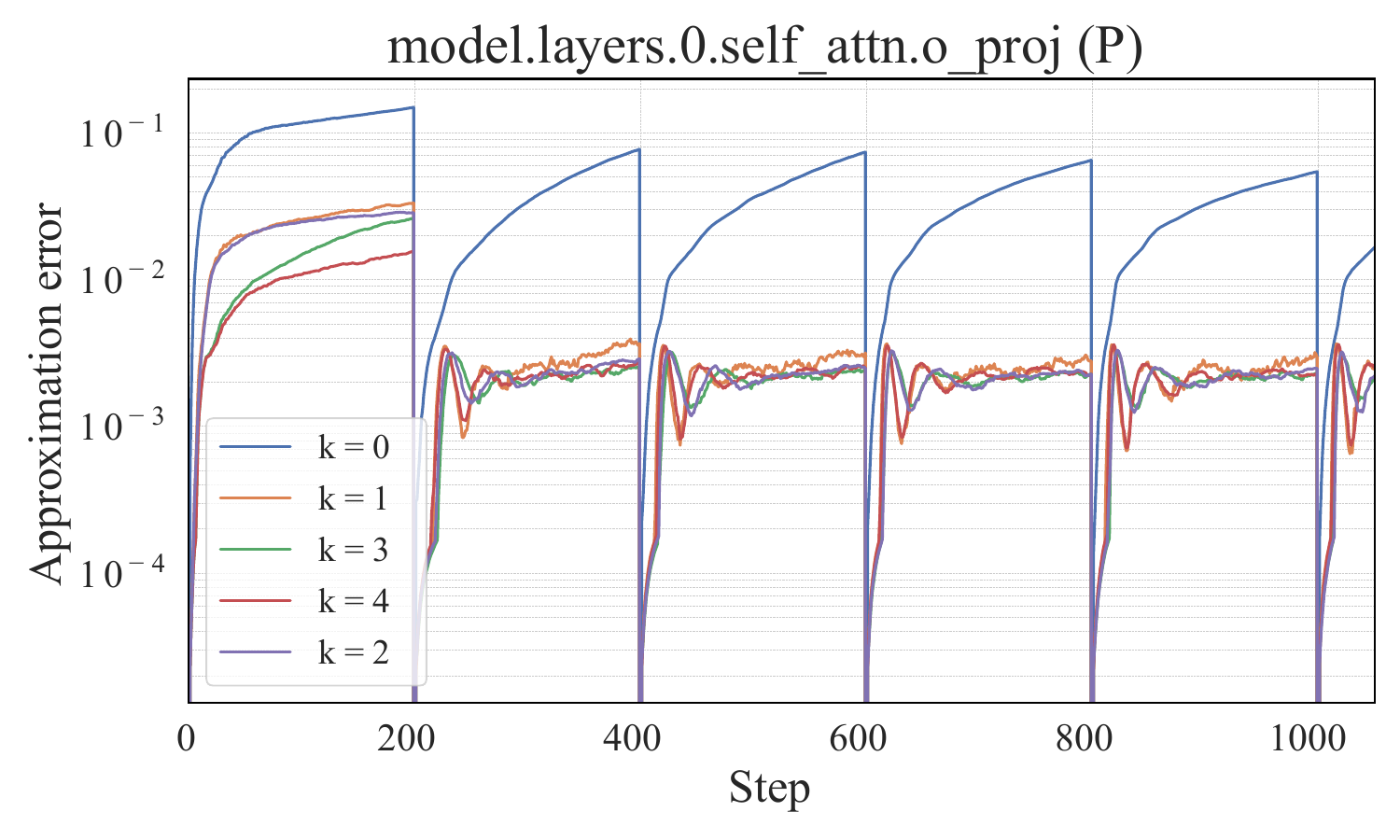}
    \end{subfigure}
    \vspace{-2mm}
    \caption{\small For the transformer block 0, we show approximation error of orthogonal matrix $\bm{P}$ for the self-attention components (query, key, value, and output projections) and the feed-forward network components (up-projection, down-projection, and gate-projection).}
    \label{fig:r_left_orth}
    \vspace{-1mm}
\end{figure}

\newpage

Additionally, Figure~\ref{fig:orth_10000steps} shows the orthogonality approximation error of Neumann series with different $k$ over the first 10,000 training steps, illustrating how it decreases as training progresses. We observe a general downward trend in approximation error, indicating improved approximation over time. The results also suggest that using too few Neumann series terms can lead to training divergence in POET.

\begin{figure}[h]
    \centering
    \includegraphics[width=\linewidth]{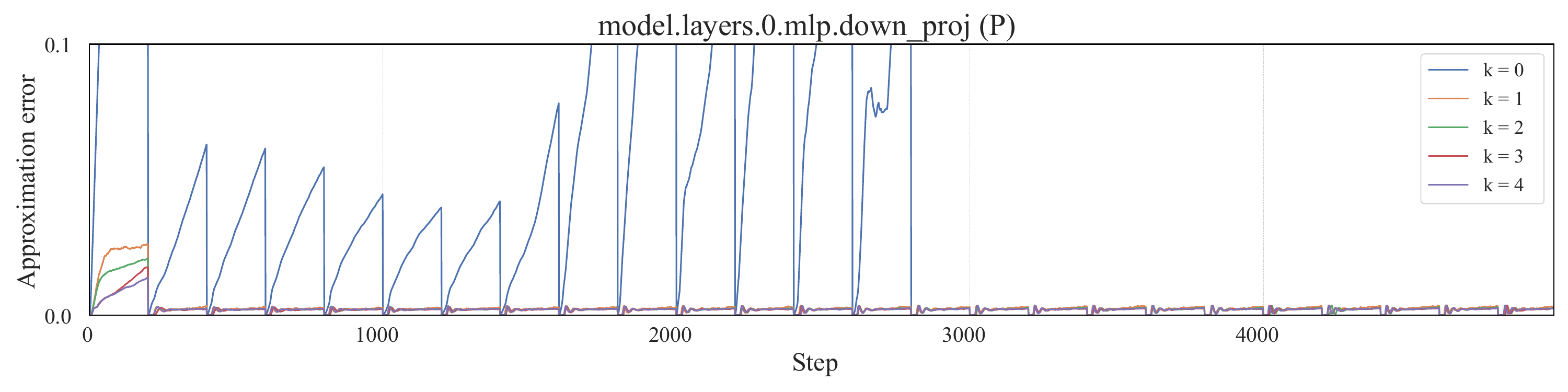}
    \vspace{-5mm}
    \caption{\small The approximation error of orthogonal matrix $\bm{P}$ in a randomly selected down-projection layer after training 10000 steps.}
    \label{fig:orth_10000steps}
\end{figure}

\newpage
\section{Full Results of Training Dynamics}\label{app:val}

We provide the full training dynamics of different POET variants under Llama 60M, Llama 130M, Llama 350M and Llama 1.3B in Figure~\ref{fig:fourplots_poet}. This figure is essentially an extended result of Figure~\ref{fig:traindyanamics}. One can observe that the training dynamics of POET is quite different from AdamW, and more importantly, POET consistently yields better parameter-efficiency and generalization.

\begin{figure}[h!]
    \centering
    \begin{subfigure}[b]{0.48\textwidth}
        \includegraphics[width=\textwidth]{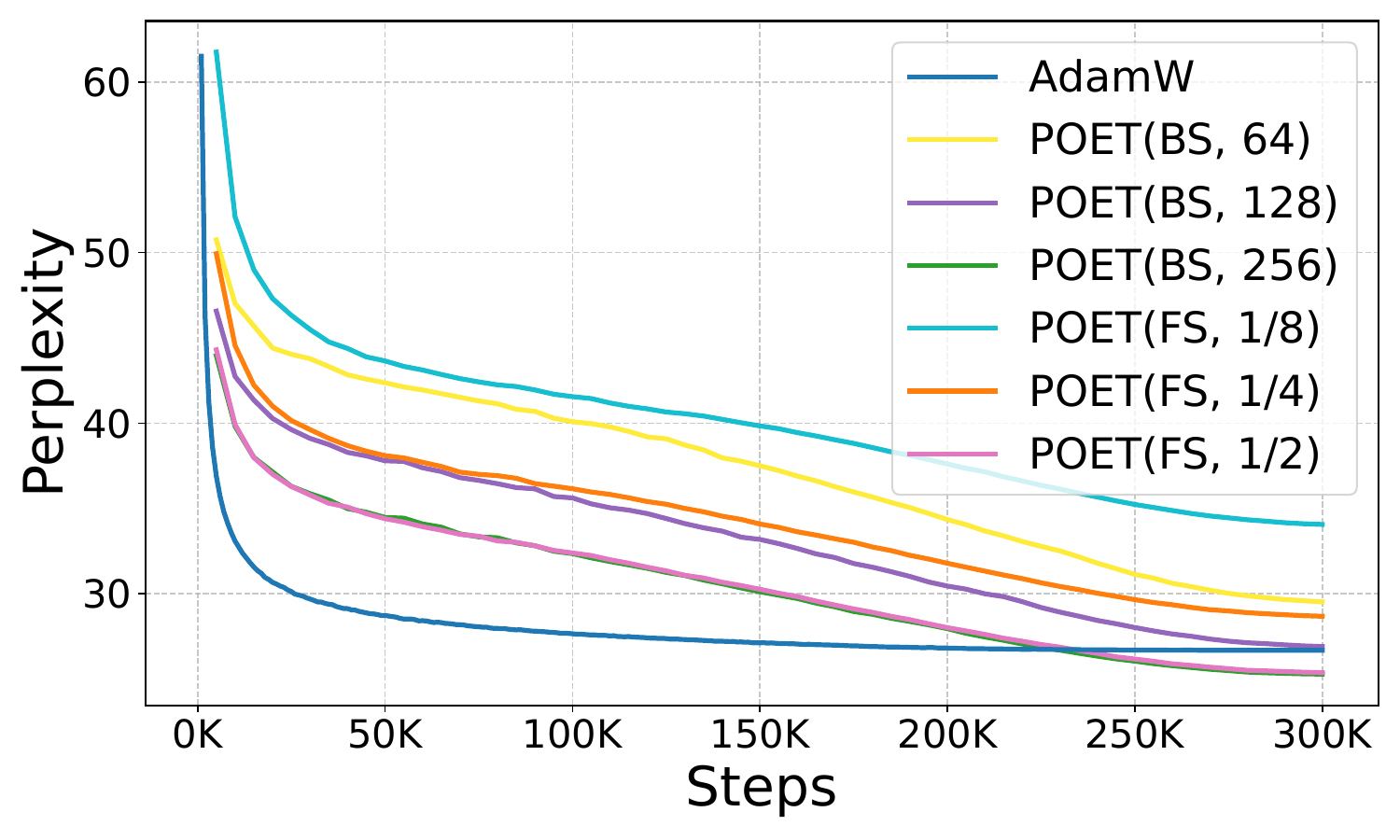}
        \caption{\small Llama 60M}
    \end{subfigure}
    \hfill %
    \begin{subfigure}[b]{0.48\textwidth}
        \includegraphics[width=\textwidth]{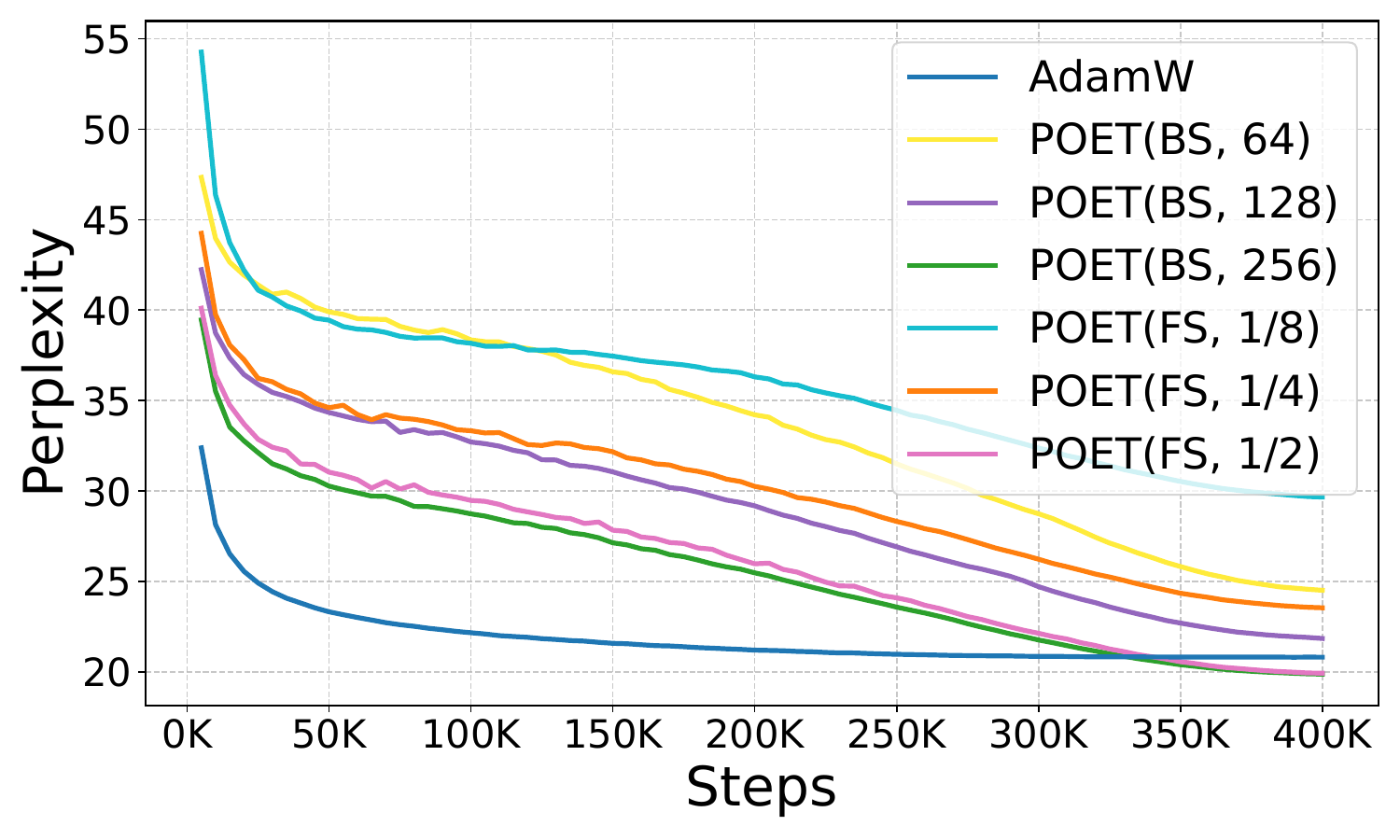}
        \caption{\small Llama 130M}
    \end{subfigure}
    \hfill
    \begin{subfigure}[b]{0.48\textwidth}
        \includegraphics[width=\textwidth]{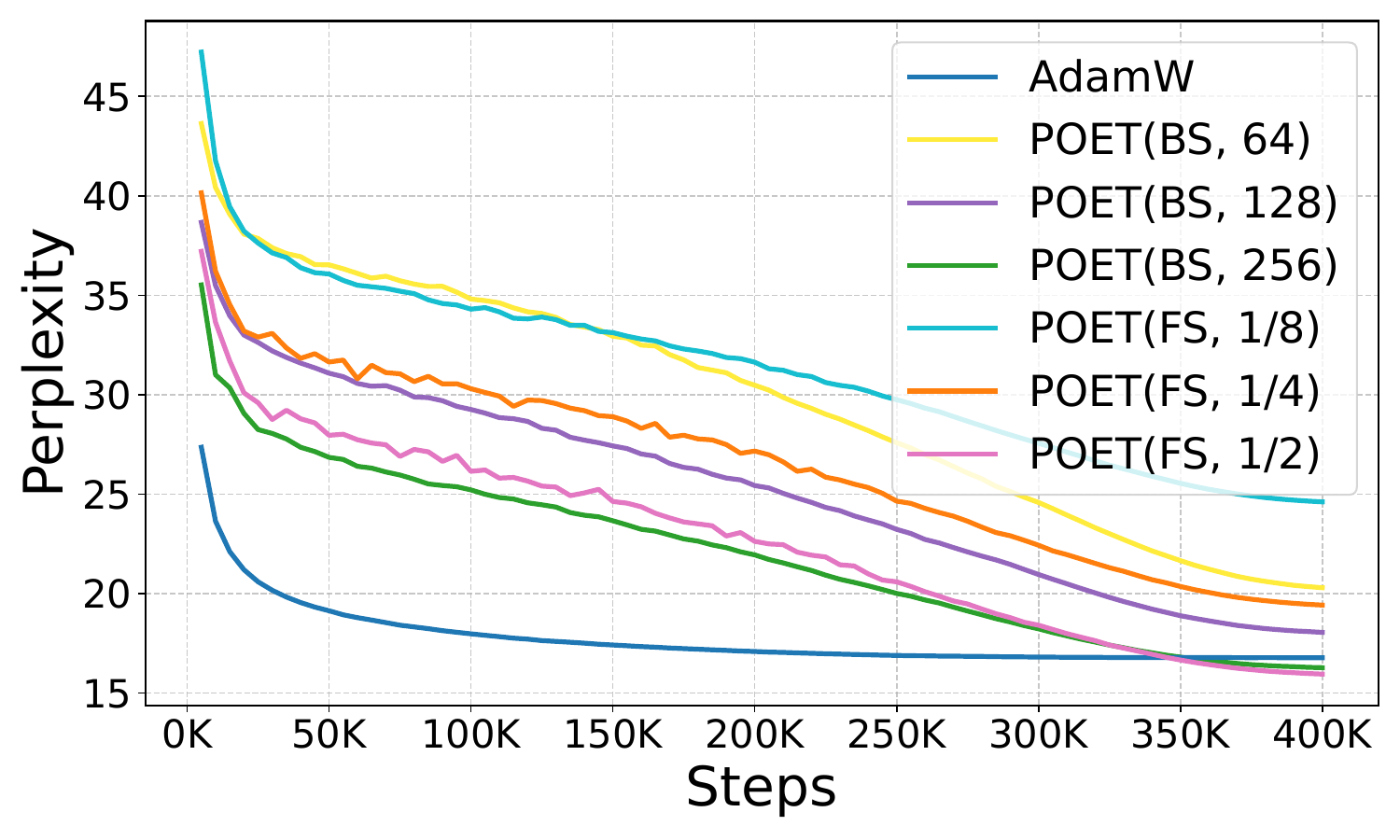}
        \caption{\small Llama 350M}
    \end{subfigure}
    \hfill
    \begin{subfigure}[b]{0.48\textwidth}
        \includegraphics[width=\textwidth]{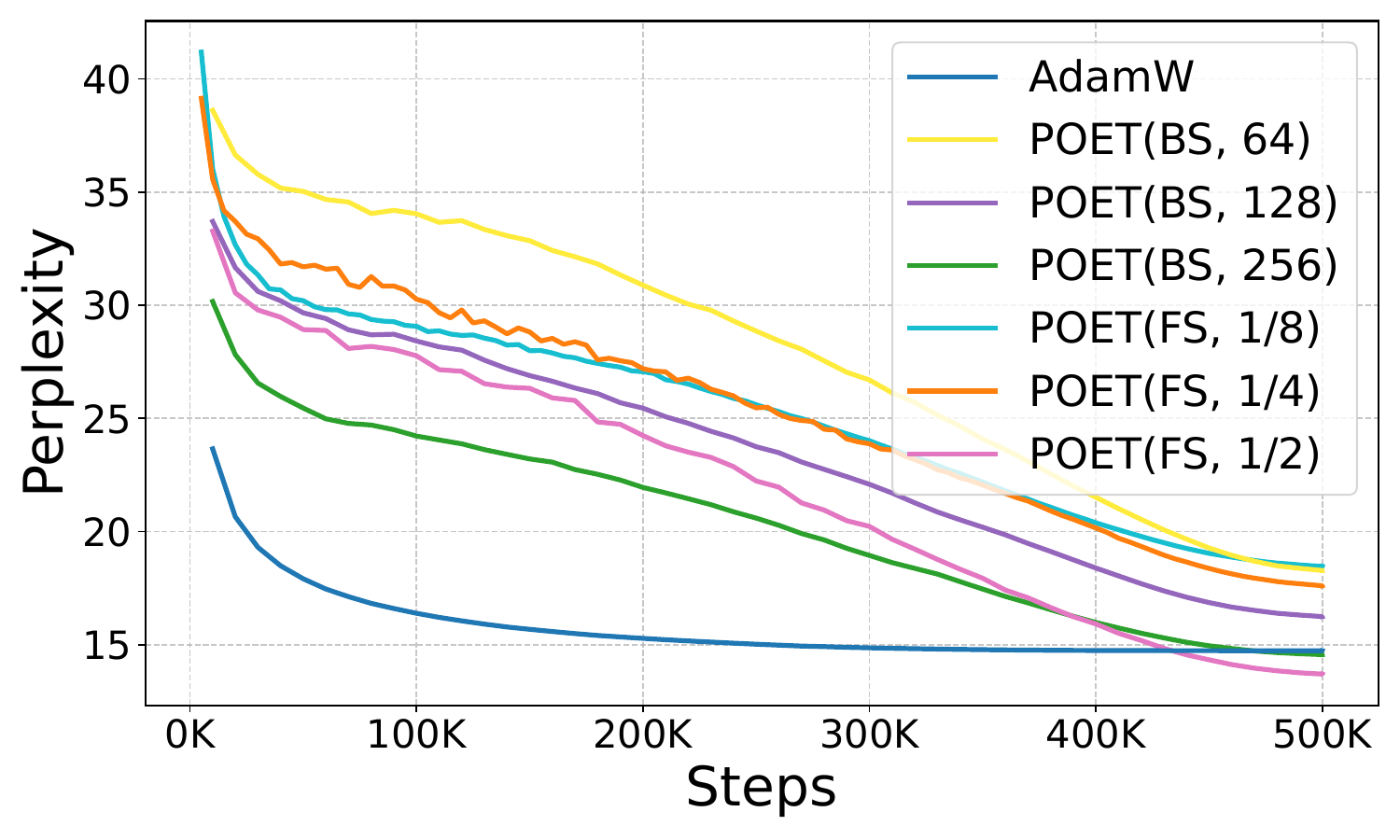}
        \caption{\small Llama 1.3B}
    \end{subfigure}
    \caption{\small Validation perplexity during the training of the LLama-based transformer with 60M, 130M, 350M and 1.3B parameters.}
    \label{fig:fourplots_poet}
\end{figure}

\newpage
\section{More Results of POET as a Finetuning Method}\label{app:poet_finetune}

To demonstrate the applicability of POET to general finetuning tasks, we apply it to finetune a BART-large model~\cite{lewis2019bartdenoisingsequencetosequencepretraining} on the NLP task of text summarization. Specifically, we evaluate POET on the XSum~\cite{narayan2018dontdetailsjustsummary} and CNN/DailyMail~\cite{hermann2015teachingmachinesreadcomprehend} datasets, reporting ROUGE-1/2/L scores in Table~\ref{tab:finetuning}. We note that both LoRA and OFT are designed solely for parameter-efficient finetuning and are not applicable to pretraining. Our goal here is to demonstrate that POET is also effective as a finetuning method. For consistency, we use the same configuration as in the pretraining setup, resulting in a higher parameter count. Experimental results show that POET not only supports finetuning effectively but also outperforms both full-model finetuning and parameter-efficient methods.

\begin{table}[h!]
\small
\centering
\setlength{\tabcolsep}{15pt}
\renewcommand{\arraystretch}{1.4}
  \begin{tabular}{lccc}
  \specialrule{0em}{0pt}{-2pt}
{\bf Method} & {\bf \# Params} & {\bf XSum} & {\bf CNN/DailyMail} \\ 
\shline
{LoRA ($r$=32)} & 17.30M & {43.38~~/~~20.20~~/~~35.25} & {43.17~~/~~20.31~~/~~29.72} \\
{OFT ($b$=64)} & 8.52M & {44.12~~/~~20.96~~/~~36.01} & {44.08~~/~~21.02~~/~~30.68} \\
{Full FT} & 406.29M & {45.14~~/~~22.27~~/~~37.25} & {44.16~~/~~21.28~~/~~40.90} \\ \rowcolor{Gray}
{POET (FS,$b$=1/2)} & 144.57M & {\textbf{45.23}~~/~~\textbf{22.41}~~/~~\textbf{37.28}} & {\textbf{44.27}~~/~~\textbf{21.29}~~/~~\textbf{41.02}}  \\
    \specialrule{0em}{0pt}{6pt}
  \end{tabular}
    \caption{\small Finetuning BART-large on XSum and CNN/DailyMail for text summarization. We report ROUGE-1/2/L results (higher is better).}
  \label{tab:finetuning}
\vspace{-1mm}
\end{table}

\end{document}